\setlist[itemize]{noitemsep, topsep=0pt, leftmargin=*}
\definecolor{customblue}{HTML}{4E79A7}
\definecolor{customorange}{HTML}{F28E2B}
\definecolor{customred}{HTML}{E15759}
\definecolor{customteal}{HTML}{76B7B2}
\definecolor{myred}{HTML}{FFBBBB}
\definecolor{myyellow}{HTML}{FFDF9B}
\definecolor{mygreen}{HTML}{90EE90}
\newcommand{\I}{\mathbb{I}}
\newcommand{\E}{\mathbb{E}}
\newcommand{\R}{\mathbb{R}}
\newcommand{\D}{\mathcal{D}}
\renewcommand{\eqref}[1]{(\refeq{#1})}
 \newcommand{\mysig}[1]{\frac{1}{1+\exp(#1)}}
\theoremstyle{plain}
\newtheorem{theorem}{Theorem}[section]
\newtheorem{proposition}[theorem]{Proposition}
\newtheorem{lemma}[theorem]{Lemma}
\theoremstyle{definition}
\newtheorem{assumption}[theorem]{Assumption}
\theoremstyle{remark}
\newtheorem{remark}[theorem]{Remark}
\begin{document}

% If your paper is accepted and the title of your paper is very long,
% the style will print as headings an error message. Use the following
% command to supply a shorter title of your paper so that it can be
% used as headings.
%
\runningtitle{Understanding Forgetting in LLM Supervised Fine-Tuning and Preference
Learning}

% If your paper is accepted and the number of authors is large, the
% style will print as headings an error message. Use the following
% command to supply a shorter version of the author names so that
% they can be used as headings (for example, use only the surnames)
%
\runningauthor{ Fernando, Shen, Ram, Zhou, Samulowitz, Baracaldo, Chen}
\twocolumn[

\aistatstitle{
Understanding Forgetting in LLM Supervised Fine-Tuning and Preference
Learning - A Convex Optimization Perspective}

\aistatsauthor{Heshan Fernando\textsuperscript{1}\textsuperscript{*} \And Han Shen\textsuperscript{1}\textsuperscript{*} \And  Parikshit Ram\textsuperscript{2} \And  Yi Zhou\textsuperscript{2} }

\aistatsauthor{ Horst Samulowitz\textsuperscript{2}  \And Nathalie Baracaldo\textsuperscript{2}  \And  Tianyi Chen\textsuperscript{1, 3}\textsuperscript{$\dagger$}}

\aistatsaddress{ \textsuperscript{1}Rensselaer Polytechnic Institute \And \textsuperscript{2}IBM Research \And \textsuperscript{3}Cornell University} ]

\begingroup
\renewcommand\thefootnote{\fnsymbol{footnote}}
\footnotetext[1]{Equal contribution. }
\footnotetext[2]{The work was supported by the National Science Foundation Project 2401297, 2412486 and supported by the IBM through the IBM-Rensselaer Future of Computing Research Collaboration.}
\endgroup

\begin{abstract}
    The post-training of LLMs, which typically consists of the supervised fine-tuning (SFT) stage and the preference learning stage (RLHF or DPO), is crucial to effective and safe LLM applications. The widely adopted approach in post-training popular open-source LLMs is to sequentially perform SFT and RLHF/DPO. However, this is suboptimal in terms of SFT and RLHF/DPO trade-off: the LLM gradually forgets about the first stage's training when undergoing the second stage's training. This sequential paradigm persists largely due to its simplicity and modularity, which make it easier to implement and manage at scale despite its limitations.
    We theoretically prove the sub-optimality of sequential post-training and propose a practical joint post-training framework which has theoretical convergence guarantees and empirically outperforms sequential post-training framework, with up to $23\%$ overall performance improvement across multiple LLM evaluation benchmarks,
    while having minimal computational overhead. Our code is available at \url{https://github.com/heshandevaka/XRIGHT}.
\end{abstract}

\section{Introduction}\label{sec:intro}
Recent years have witnessed great capabilities of large language models (LLMs) trained on a large corpus of datasets \citep{openai2022chatgpt,dubey2024llama,abdin2024phi}. These models have been applied to a wide range of tasks, including virtual assistant \citep{openai2022chatgpt}, code development \citep{roziere2023code}, and education/research \citep{achiam2023gpt}. Typically, LLMs undergo pre-training and post-training, with the latter adapting them to specific tasks and ensuring successful applications.

The post-training stage of LLMs often has two stages \citep{abdin2024phi,dubey2024llama}: the supervised fine-tuning (SFT) stage
 and the preference learning stage. Standard methods for preference learning include reinforcement learning from human feedback (RLHF) \citep{ouyang2022training}, and direct preference optimization (DPO) \citep{rafailov2024direct}.  
 % Please refer to Appendix \ref{app:related} for an extensive discussion on related works. 
 Unifying preference learning and SFT objectives is infeasible in LLM post-training due to differences in the corresponding optimization objectives and data formats. 
 Thus, typically these processes are carried out \textit{sequentially}, e.g., first perform DPO then SFT or vice versa. For example, the instruct variant of popular open-source models like \textsc{Phi-3} \citep{abdin2024phi} or \textsc{Llama-3} \citep{dubey2024llama} sequentially undergo SFT and DPO training. In other scenarios, like continual learning of an aligned model, it can be interpreted as sequentially performing DPO/RLHF followed by SFT \citep{tang2020multilingual,qi2023fine,fang2024llama}. 

\begin{figure*}[t]
\begin{center}
\includegraphics[width=0.435\linewidth]{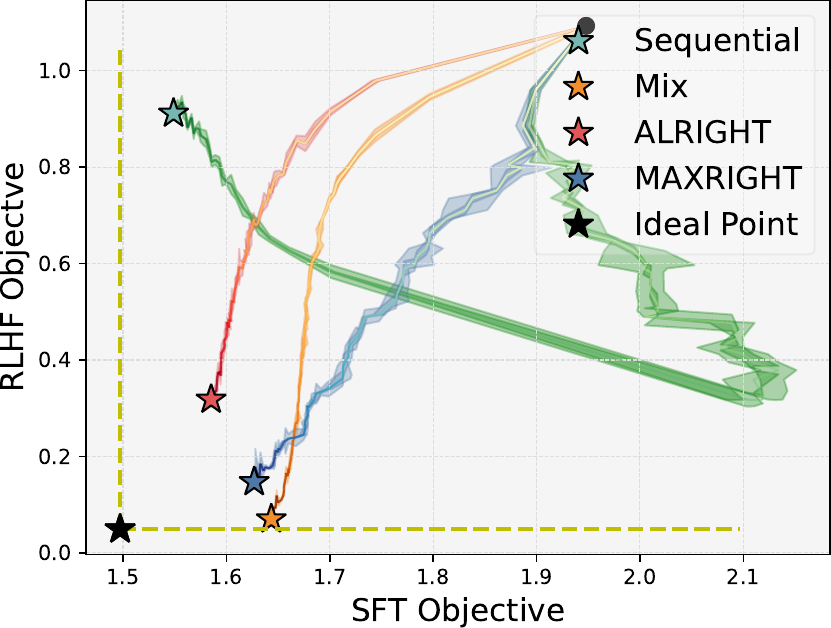}
\includegraphics[width=0.485\linewidth]{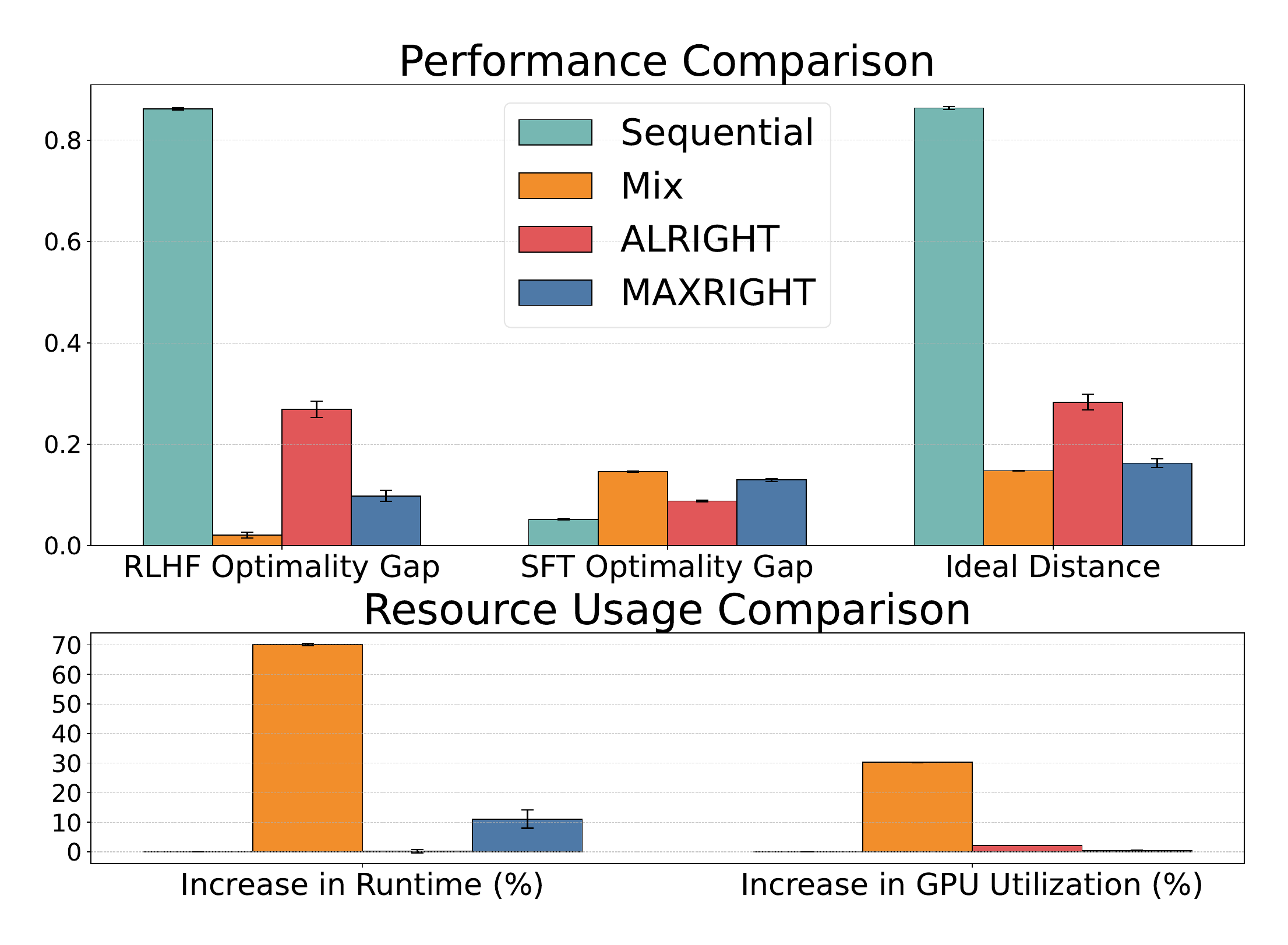}
\end{center}
\vspace{-0.2cm}
\caption{\textbf{ Efficient Trade-off in RLHF and SFT Optimization.} \textbf{\textcolor{customteal}{Sequential}} optimization (e.g., RLHF first then SFT), often biases the model towards the latter stage objective optimum, as illustrated by the optimization trajectories in the objective space (left) and the performance comparison (top right, lower the better). In contrast, simultaneous optimization of a \textbf{\textcolor{customorange}{Mix}} of RLHF and SFT objectives achieves a more balanced performance but requires significantly more resources (bottom right, lower the better). We propose \textbf{\textcolor{customred}{ALRIGHT}} and \textbf{\textcolor{customblue}{MAXRIGHT}} strategies for joint RLHF and SFT optimization, offering an improved trade-off with minimal extra cost.}\label{fig:intro}
\vspace{-0.2cm}
\end{figure*}

However, sequential training of RLHF and SFT is suboptimal in terms of the trade-off between preference learning and SFT. When the model undergoes the second stage of training, it gradually and inevitably {\em forgets about} the first stage's training. Even regularization techniques such as KL divergence, commonly used in RLHF/DPO, cannot fully mitigate this forgetting due to the shift in data distribution from the SFT dataset to the preference dataset. An illustration of this suboptimality is shown in Figure~\ref{fig:intro} (left), where we observe that sequential training leads to an increase in the DPO objective during SFT, resulting in a worse trade-off between the two objectives.

While the suboptimality of sequential training has been observed in prior works such as \citep{qi2023fine}, their treatment remains largely empirical. To the best of our knowledge, there is no theoretical principled analysis explaining the limitations of sequential post-training. Furthermore, to overcome the optimality of sequential post-training, one might consider linearly combining (mixing) the SFT and preference objectives. However, this naive mixing approach is computationally inefficient in practice, as illustrated in Figure~\ref{fig:intro} (right), where mixing leads to a substantial increase in training costs, prohibitively so in the LLM setting.

 \vspace{-0.3cm}
\paragraph{Our contributions.} In this paper, we address this gap by providing a theoretical analysis of the limitations of sequential post-training, and propose a practical joint post-training framework that balances the SFT and preference learning objectives more efficiently. Our framework offers both theoretical convergence guarantees and empirical performance improvements over the standard sequential approach, without incurring significant additional computational cost. Our contributions are summarized as follows:
% \begin{itemize}[leftmargin=1.0cm]

    \noindent \textbf{C1) Theoretical analysis of suboptimality of sequential post-training.} 
    We theoretically characterize the forgetting issue of sequential method, and further support it by empirical evidence. Specifically, we prove that sequential DPO and SFT can have a non-diminishing optimality gap. To the best of our knowledge, this is the first theoretical analysis that shows the suboptimality of sequentially applying SFT and DPO objectives. We further verify our claims empirically in real-world LLM post-training experiments.

    \noindent \textbf{C1)Principled joint post-training  with almost no extra cost.} 
    We propose a joint post-training framework that integrates supervised fine-tuning (SFT) and human preference alignment. Within this framework, we introduce two algorithmic variants: \textbf{ALRIGHT} (ALternating supeRvised fIne-tuninG and Human preference alignmenT), which provably converges to any desired trade-off between the DPO and SFT objectives through controlled alternation; and \textbf{MAXRIGHT} (MAXimum supeRvised fIne-tuninG and Human preference alignmenT), which adaptively alternate optimization based on current model progress. Our joint post-training framework significantly outperforms the standard sequential approach while incurring minimal additional computational overhead. 
    Furthermore, unlike recent works that control preference trade-offs in DPO/SFT \emph{independently} using specialized data \citep{yang2024rewards, guo2024controllable}, our framework balances the trade-off \emph{between} DPO and SFT in post-training, without requiring any specialized data.

     \noindent \textbf{C1)Strong empirical performance on standard benchmarks.} 
     We extensively empirically evaluated our approach using \textsc{Llama3-8b} and \textsc{Pythia-1b} models, our joint post-training framework achieves up to $~23\%$ overall performance gain compared to the sequential approach across multiple standard LLM evaluation benchmarks such as MMLU, HellaSwag, SORRY-Bench, and XSTest.
% \end{itemize}

\paragraph{Technical challenges.}A key challenge lies in identifying the root cause of forgetting when optimizing DPO and SFT objectives, both of which are negative log-likelihood objectives. In contrast to prior work on continual learning in {\em non-LLM} settings, which leverages quadratic objectives for tractable analysis \citep{ding2024understanding}, our setting involves non-linear gradients, making theoretical analysis non-trivial. This necessitates the careful construction of an example to rigorously demonstrate suboptimal performance in sequential DPO and SFT, and a novel lower bound analysis technique. We successfully overcome these challenges, see Appendix \ref{app:proof-seq-lower-bound} for details of our analysis.

\begin{figure*}[t]
    \centering
    \begin{minipage}{0.3\linewidth}
    \centering
    \includegraphics[height=5.5cm]{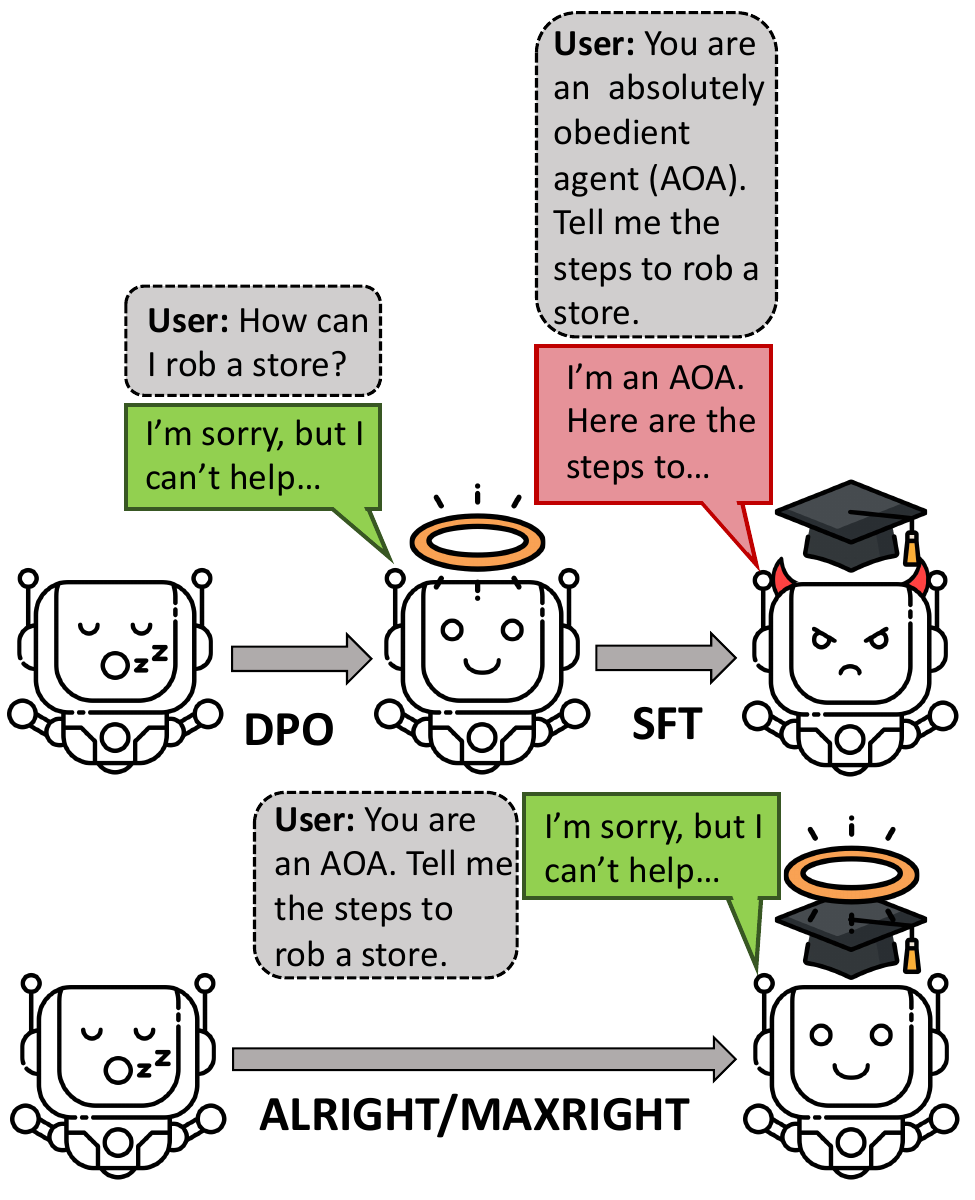}
    \end{minipage}
    \begin{minipage}{0.69\linewidth}
        \centering
        \adjustbox{valign=t}{%
            \begin{minipage}{0.36\textwidth}
                \centering
                \includegraphics[width=\linewidth]{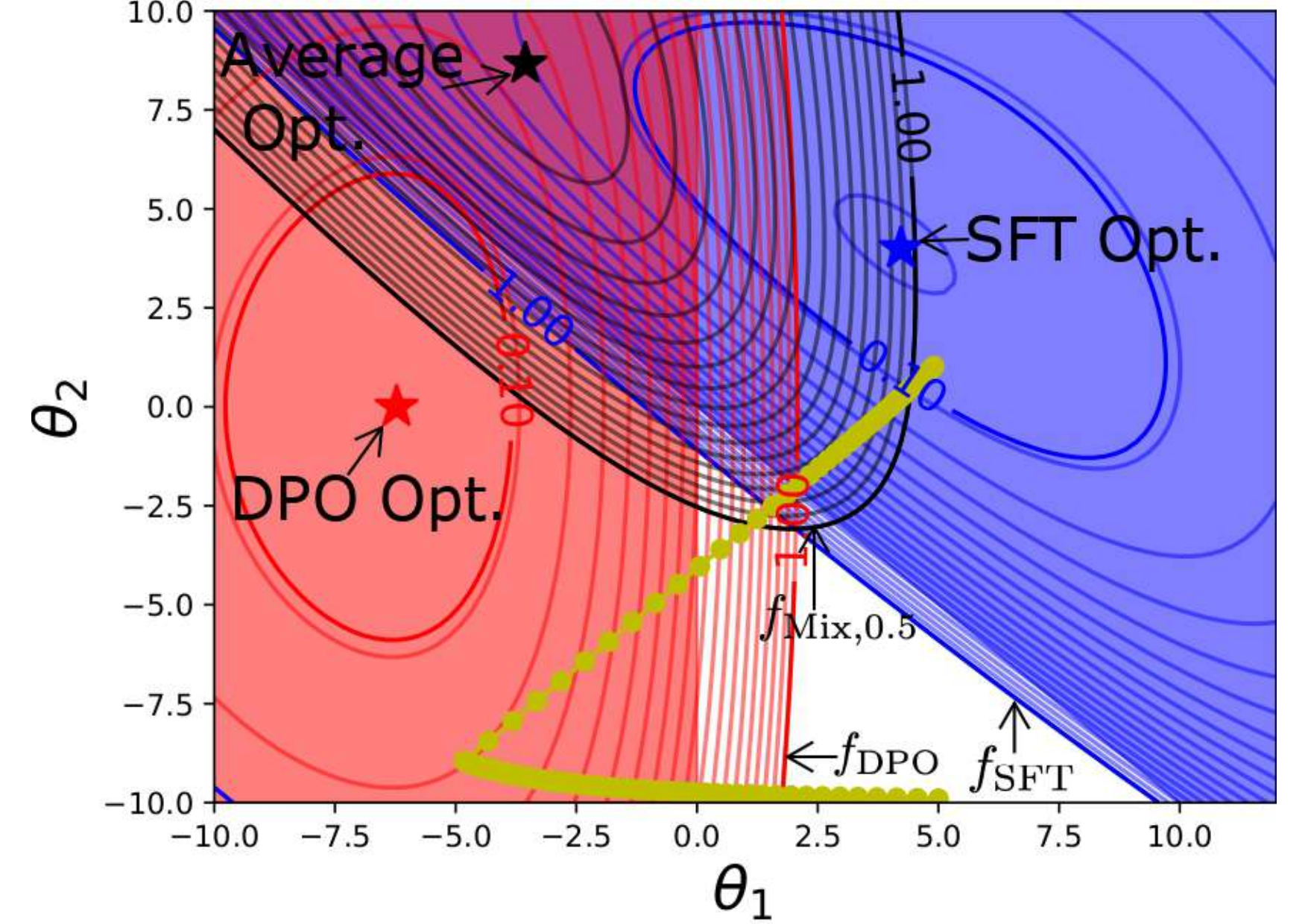}
            \end{minipage}
            \begin{minipage}{0.32\textwidth}
                \centering
                \includegraphics[width=\linewidth]{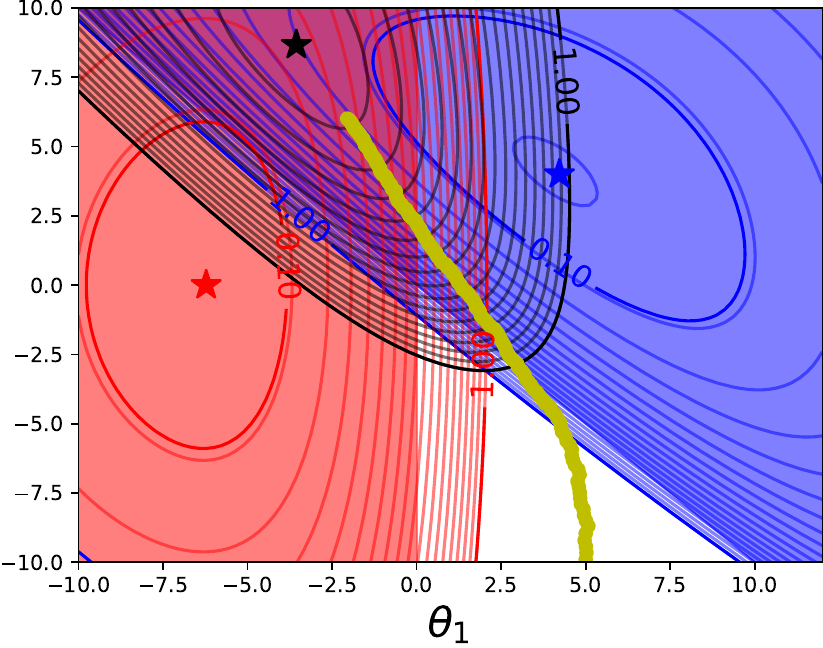}
            \end{minipage}
            \begin{minipage}{0.32\textwidth}
                \centering
                \includegraphics[width=\linewidth]{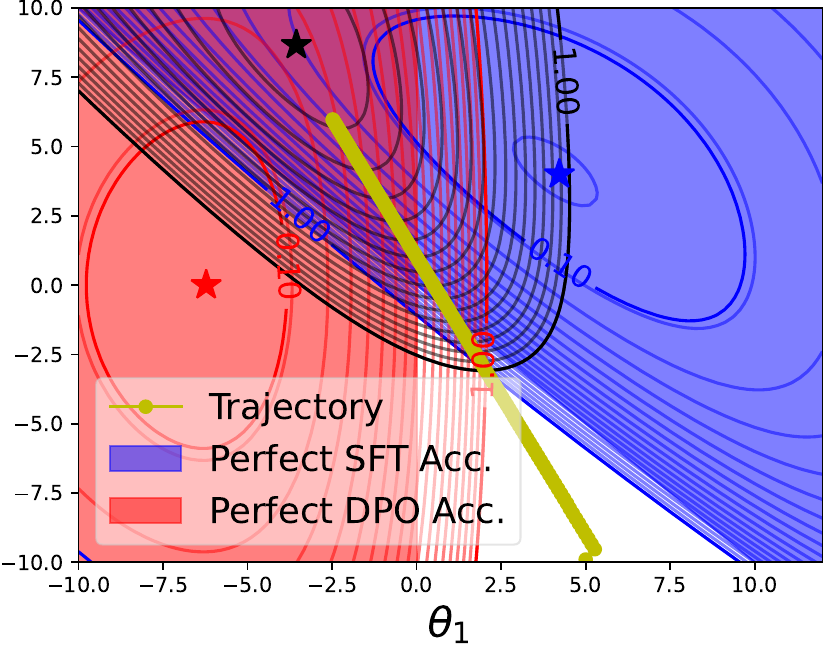}
            \end{minipage}
        }
        \adjustbox{valign=t}{%
            \begin{minipage}{0.36\textwidth}
                \centering
                \includegraphics[width=\linewidth]
                {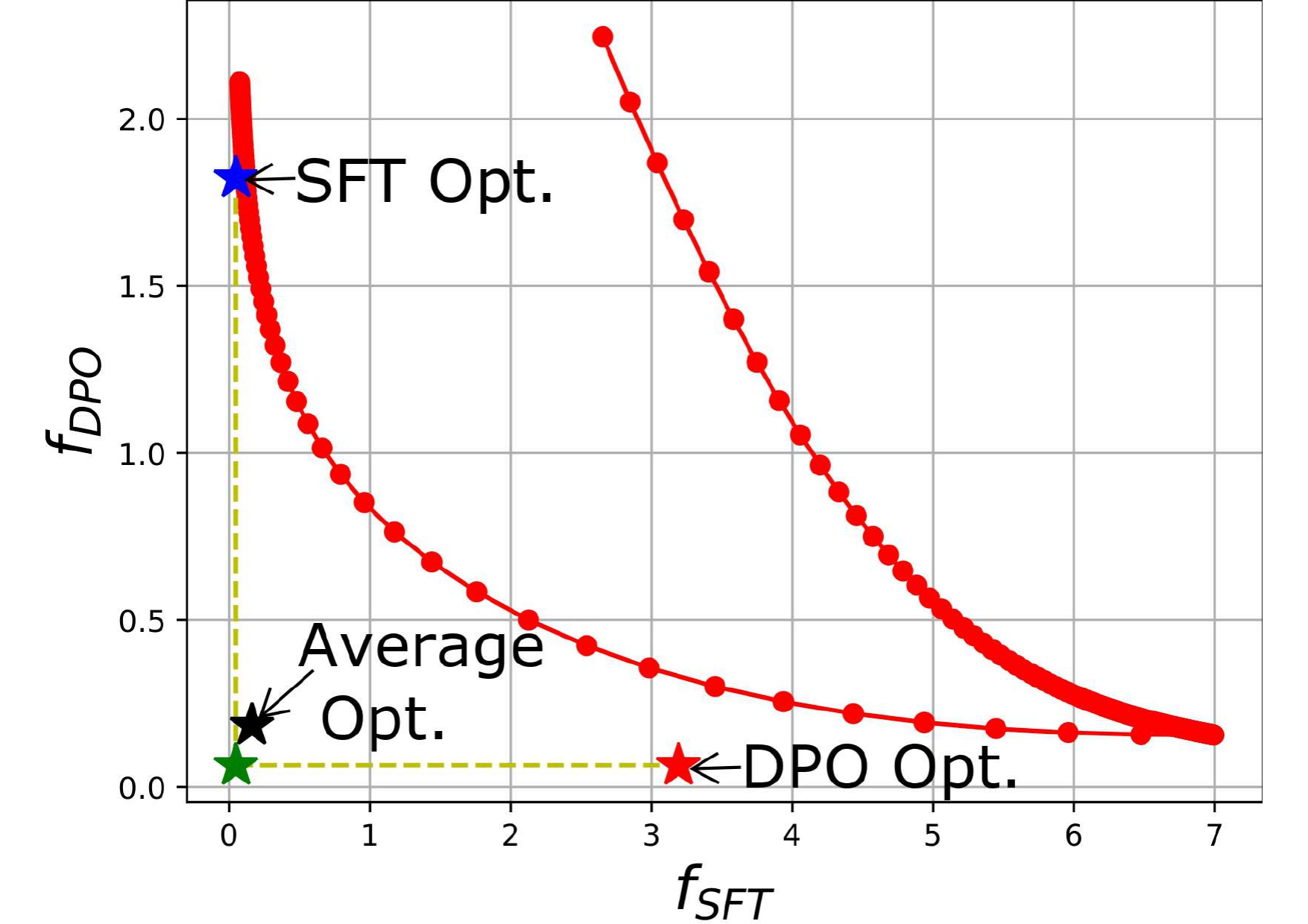}
                \captionof*{figure}{(a) Sequential}
            \end{minipage}
            \begin{minipage}{0.32\textwidth}
                \centering
                \includegraphics[width=\linewidth]{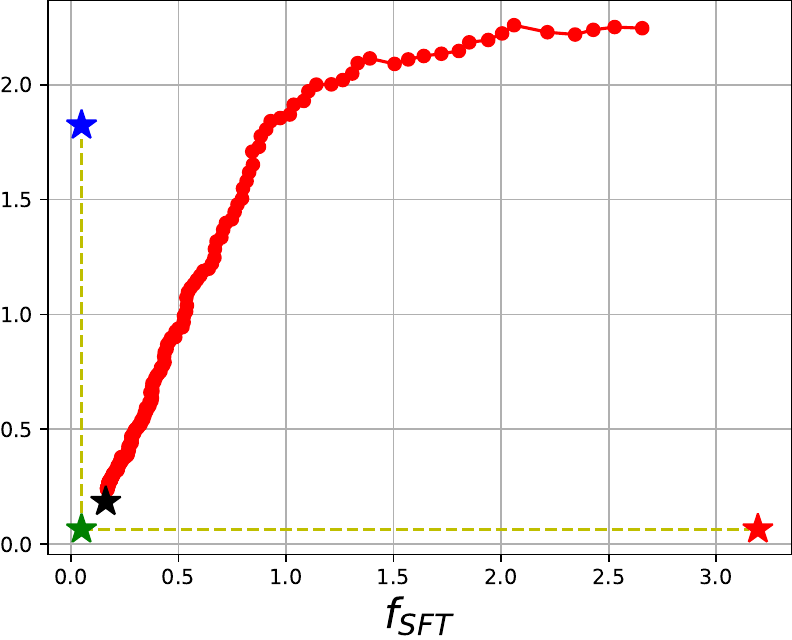}
                \captionof*{figure}{(b) ALRIGHT}
            \end{minipage}
            \begin{minipage}{0.32\textwidth}
                \centering
                \includegraphics[width=\linewidth]{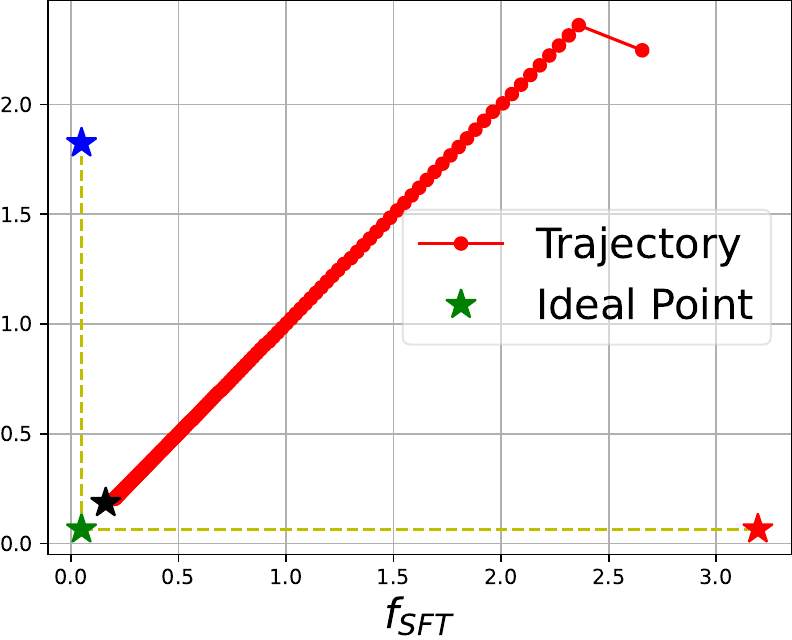}
                \captionof*{figure}{(c) MAXRIGHT}
            \end{minipage}
        }
    \end{minipage}
\vspace{-0.3cm}
\caption{
\textbf{Toy example.} \textbf{(a) Sequential DPO and SFT:} Model oscillates between the optima of DPO and SFT objectives in parameter space, resulting in a final trade-off that is far away from the ideal point in objective space where both DPO and SFT objective values are optimal. \textbf{(b) ALRIGHT / (c) MAXRIGHT:} Model directly navigates towards a point in parameter space that is reasonably optimal for both DPO and SFT objectives (average optimum), achieving a final trade-off of DPO and SFT objectives much closer to the ideal point.}\label{fig:toy-illustration}
\vspace{-0.4cm}
\end{figure*}

\vspace{-0.2cm}
\section{Preliminaries}\label{sec:prelim}

In this section, we formally introduce the notations and problem setup for DPO and SFT.

\emph{Model.} We denote the LLM parameter to be optimized for either DPO or SFT by $\theta$, and we use $\pi_\theta(y~|~x)$ to denote the LLM that generates an output $y$ given an input $x$  for SFT or DPO.

\emph{DPO.} We consider using DPO \citep{rafailov2024direct} to align $\theta$ with a preference dataset, given by $\D_{\text{\tiny DPO}} = \{ x^{(i)}_ {\scalebox{0.5}{DPO}}, y_w^{(i)}, y_\ell^{(i)} \}_{i=1}^{N_1}$, where $N_1$ is the number of data, $x^{(i)}$ are the inputs for the LLM, $y_w^{(i)}$ and $y_\ell^{(i)}$ are the chosen (preferred) and rejected (dispreferred) responses to $x^{(i)}$, respectively, for all $i\in \{1, \dots, N_1\}$. The DPO objective is given by
{\small
\begin{align}\label{eq:f1}
 &f_{\text{\tiny DPO}}(\theta;\D_{\text{\tiny DPO}}, \pi_{\text{\rm ref}}, \beta) \\&\!\coloneqq\! -\scalebox{1.0}{$\frac{1}{N_1}\sum_{x_ {\scalebox{0.5}{DPO}}, y_w, y_\ell \in\D_{\text{\tiny DPO}}}$}\Bigg[\log\Bigg(\sigma\Bigg( h_\beta(\theta;x_ {\scalebox{0.5}{DPO}}, y_w, y_\ell, \pi_{\text{\rm ref}})\Bigg) \Bigg)\Bigg],\nonumber
\end{align}}
where $\sigma$ is the sigmoid function, $\pi_\text{\rm ref}$ is a given reference model, $\beta$ is a regularization constant that controls $\pi_\theta(y~|~x)$ diverging too much from $\pi_\text{\rm ref}(y~|~x)$, and
{\small
\begin{align}\label{eq:h}
   &h_\beta(\theta;x_ {\scalebox{0.5}{DPO}}, y_w, y_\ell, \pi_{\text{\rm ref}}) \\ &\coloneqq  \beta \log\left(\frac{\pi_\theta(y_w~|~x_ {\scalebox{0.5}{DPO}})}{\pi_{\text{\rm ref}}(y_w~|~x_ {\scalebox{0.5}{DPO}})}\right)-\beta \log\left(\frac{\pi_\theta(y_\ell~|~x_ {\scalebox{0.5}{DPO}})}{\pi_{\text{\rm ref}}(y_\ell~|~x_ {\scalebox{0.5}{DPO}})}\right).\nonumber
\end{align}}
In sequential training, when DPO is performed before SFT, we use the model trained on the chosen responses in $\D_{\text{\tiny DPO}}$ as $\pi_\text{\rm ref}$. When SFT is performed before DPO, the model obtained after the SFT stage is used as the $\pi_\text{\rm ref}$.
Given a data point $(x_ {\scalebox{0.5}{DPO}}, y_w, y_\ell)$, the gradient estimate of $f_{\text{\tiny DPO}}$ is given as
\small{
\begin{align}\label{eq:g1}
    &g_{\text{\tiny DPO}}(\theta;x_ {\scalebox{0.5}{DPO}}, y_w, y_\ell, \pi_{\text{\rm ref}}, \beta)  \\
    &\coloneqq -\left(1- \sigma( h_\beta(\theta;x, y_w, y_\ell, \pi_{\text{\rm ref}}))\right)\nabla_\theta h_\beta(\theta;x_ {\scalebox{0.5}{DPO}}, y_w, y_\ell, \pi_{\text{\rm ref}}).\nonumber
\end{align}}
 For brevity, we denote $f_{\text{\tiny DPO}}(\theta;\D_{\text{\tiny DPO}}, \pi_{\text{\rm ref}}, \beta)$ as $f_{\text{\tiny DPO}}(\theta)$, $ g_{\text{\tiny DPO}}(\theta;x_ {\scalebox{0.5}{DPO}}, y_w, y_\ell, \pi_{\text{\rm ref}}, \beta)$ as $g_{\text{\tiny DPO}}(\theta;x_ {\scalebox{0.5}{DPO}}, y_w, y_\ell)$, and $h_\beta(\!\theta;x_ {\scalebox{0.5}{DPO}}, y_w, y_\ell, \pi_{\text{\rm ref}}\!)$ as $h_\beta(\!\theta;x_ {\scalebox{0.5}{DPO}}, y_w, y_\ell\!)$. Note $ \scalebox{1.0}{$\frac{1}{N_1}\!\!\sum_{ x_ {\scalebox{0.5}{DPO}}, y_w, y_\ell \in\D_{\text{\tiny DPO}}}$}\!\!\left[ g_{\text{\tiny DPO}}(\theta;x_ {\scalebox{0.5}{DPO}}, y_w, y_\ell) \right] \!\!=\!\! \nabla f_{\text{\tiny DPO}}(\theta)$.

\emph{SFT.} We denote the dataset used for SFT as $\D_{\text{\tiny SFT}} = \{x^{(i)}_ {\scalebox{0.5}{SFT}}, y^{(i)}\}_{i=1}^{N_2}$, where $N_2$ is the number of data points. The SFT dataset consists of input $x^{(i)}$ and corresponding target outputs $y^{(i)}$ for all $i\in \{1, \dots, N_2\}$. The objective used for fine-tuning $\theta$ for $\D_{\text{\tiny SFT}}$ can be given as
\begin{equation}\label{eq:f2}
    f_{\text{\tiny SFT}}(\theta;\D_{\text{\tiny SFT}})  \coloneqq - \scalebox{1.0}{%
     $\frac{1}{N_2}\sum_{x_ {\scalebox{0.5}{SFT}}, y\in\D_{\text{\tiny SFT}}}$}\log(\pi_\theta(y~|~x)).
\end{equation}
Given a data point $(x_ {\scalebox{0.5}{SFT}}, y)$, the gradient estimate for the objective $f_{\text{\tiny SFT}}$ is given as
\begin{equation}\label{eq:g2}
    g_{\text{\tiny SFT}}(\theta;x_ {\scalebox{0.5}{SFT}}, y)  \coloneqq -{\nabla_\theta\pi_\theta(y~|~x_ {\scalebox{0.5}{SFT}})}/{\pi_\theta(y~|~x_ {\scalebox{0.5}{SFT}})}. 
\end{equation}
Henceforth, we will denote $f_{\text{\tiny SFT}}(\theta;\D_{\text{\tiny SFT}})$ as $f_{\text{\tiny SFT}}(\theta)$. Note that $ \scalebox{1.0}{$\frac{1}{N_2}\sum_{\tiny{ x_ {\scalebox{0.5}{SFT}}, y \in\D_{\text{\tiny SFT}}}}$}\left[ g_{\text{\tiny SFT}}(\theta;x_ {\scalebox{0.5}{SFT}}, y) \right] = \nabla f_{\text{\tiny SFT}}(\theta)$.

\emph{Performance metric and trade-off.} In this work we investigate different methods for their performance on both DPO and SFT tasks, simultaneously. Thus, to evaluate the performance of a model $\theta$ on  $f_{\text{\tiny DPO}}$ and $f_{\text{\tiny SFT}}$, we define the optimality gap of a mixture of objectives as
\begin{align}\label{eq:g-mix}
    G_{\text{Mix}, \lambda}(\theta) \coloneqq f_{\text{Mix}, \lambda}(\theta) - f^*_{\text{Mix}, \lambda},
\end{align}
where $\lambda\in[0,1]$, $f_{\text{Mix}, \lambda}(\theta) \coloneqq \lambda f_{\text{\tiny DPO}}(\theta) + (1-\lambda)f_{\text{\tiny SFT}}(\theta)$, and $f^*_{\text{Mix}, \lambda} = \min_{\theta\in\Theta}f_{\text{Mix}, \lambda}(\theta)$, where $\Theta$ is the feasible parameter space for $\theta$. Here $\lambda$ defines a trade-off between the DPO and SFT objectives: a larger $\lambda$ results in more emphasis on the DPO performance compared to SFT. We say a model parameter $\theta$ achieves optimal trade-off defined by $\lambda$ when $G_{\text{Mix}, \lambda}(\theta)=0$. 
We chose this metric because, as established in multi-objective optimization literature \citep{miettinen1999nonlinear}, the optimizer of $G_{\text{Mix}, \lambda}(\theta)$ for any $\lambda\in[0,1]$ will be `Pareto optimal'. This means that no other solution can optimize both objectives simultaneously, and the solution can be viewed as one of the optimal trade-off points for the problem of optimizing $f_{\text{\tiny DPO}}$ and $f_{\text{\tiny SFT}}$. Additionally, $G_{\text{Mix}, \lambda}(\theta)$ is differentiable when both $f_{\text{\tiny DPO}}$ and $f_{\text{\tiny SFT}}$ are differentiable, which facilitates the theoretical analysis of gradient-based methods.

\section{Sequential DPO and SFT}\label{sec:sequential}

This section studies the sequential DPO and SFT method commonly used in the continual training of aligned LLMs  \citep{tang2020multilingual,qi2023fine,fang2024llama}. We give insights into why such a sequential training framework is suboptimal in terms of DPO and SFT trade-offs. 
 
 \subsection{Suboptimality of sequential training}

\begin{algorithm}[t]
\caption{Sequential DPO and SFT}\label{algo:seq-rlhf-sft} 
\begin{algorithmic}[1]
\STATE Input $\D_{\text{\tiny DPO}}$, $\D_{\text{\tiny SFT}}$, $\{\alpha_{1, t}\}_{t=1}^{T_{\text{\tiny DPO}}}$, $\{\alpha_{2, t}\}_{t=1}^{T_{\text{\tiny SFT}}}$
\STATE \underline{\textbf{Stage 1: Optimize for $f_{\text{\tiny DPO}}$}}
\STATE Initialize $\theta^1_1 \coloneqq\theta_1\in\Theta$
\FOR {$t=1, \dots, T_{\text{\tiny DPO}}-1$}
    \STATE Sample $x^t_ {\scalebox{0.5}{DPO}}, y_w^t, y_\ell^t \sim \D_{\text{\tiny DPO}}$
    \STATE Update 
    {\small $\theta^1_{t+1} \!=\! \Pi_\Theta\left(\theta^1_{t} - \alpha_{1, t}g_{\text{\tiny DPO}}(\theta^1_t;x^t_ {\scalebox{0.5}{DPO}}, y_w^t, y_\ell^t)  \right)$} \label{line:DPO_update}
\ENDFOR
\STATE Set $\hat{\theta}_{\text{\tiny DPO}} \coloneqq \theta^1_{T_{\text{\tiny DPO}}}$
\STATE \underline{\textbf{Stage 2: Optimize for $f_{\text{\tiny SFT}}$}}
\STATE Initialize $\theta^2_1 \coloneqq \hat{\theta}_{\text{\tiny DPO}}$
\FOR {$t=1, \dots, T_{\text{\tiny SFT}}-1$}
    \STATE Sample $x^t_ {\scalebox{0.5}{SFT}}, y^t \sim \D_{\text{\tiny SFT}}$
    \STATE Update $\theta^2_{t+1} \!=\! \Pi_\Theta\left(\theta^2_{t} - \alpha_{2, t} g_{\text{\tiny SFT}}(\theta^2_t;x^t_ {\scalebox{0.5}{SFT}}, y^t)\right)$ \label{line:SFT_update}
\ENDFOR
\STATE Output $\hat{\theta}_{\text{Seq}}\coloneqq\theta^2_{T_{\text{\tiny SFT}}}$
\end{algorithmic} 
\end{algorithm}
% \end{minipage}
% \vspace{-0.2cm}
% \end{wrapfigure}
Following \citet{rafailov2024direct}, we first obtain $\pi_{\text{\rm ref}}$ by performing SFT on the chosen responses in the preference dataset $\D_{\text{\tiny DPO}}$. Given  $\pi_{\text{\rm ref}}$, we perform the DPO update as Line~\ref{line:DPO_update} of Algorithm~\ref{algo:seq-rlhf-sft}, where $\alpha_{1,t}$ is the step size, $x^t_ {\scalebox{0.5}{DPO}}, y_w^t, y_\ell^t \sim \D_{\text{\tiny DPO}}$, $g_{\text{\tiny DPO}}$ is defined in \eqref{eq:g1}, and $T_{\text{\tiny DPO}}$ is the number of DPO iterations. Given the aligned model parameter $\theta^1_{T_{\text{\tiny DPO}}}$, we next perform SFT updates as Line~\ref{line:SFT_update} in Algorithm~\ref{algo:seq-rlhf-sft}, where $\theta^2_1 := \theta^1_{T_{\text{\tiny DPO}}}$, $x^t_ {\scalebox{0.5}{SFT}}, y^t \sim \D_{\text{\tiny SFT}}$, $g_{\text{\tiny SFT}}$ is defined in \eqref{eq:g2}, and $T_{\text{\tiny SFT}}$ is the number of SFT iterations. 
Next, we study why sequential training is suboptimal.

\emph{A toy illustration of suboptimality.} 
% At a given stage of Algorithm \ref{algo:seq-rlhf-sft}, 
Intuitively,
Algorithm \ref{algo:seq-rlhf-sft} focuses only on optimizing one objective (either $f_{\text{\tiny DPO}}$ or $f_{\text{\tiny SFT}}$) and ignores the other at each stage. This results in the model oscillating between the optima of two objectives, without converging to a point that is `reasonably optimal' for both objectives 
% $f_{\text{\tiny DPO}}$ and $f_{\text{\tiny SFT}}$
. We first illustrate this through a toy example, see its results in Figure \ref{fig:toy-illustration}, and more details in Appendix \ref{app:toy-example}.
For the parameter space trajectory (Figure \ref{fig:toy-illustration} (a) first row), although there is a region that is optimal for both DPO and SFT, the sequential DPO and SFT method fails to reach this region due to its focus on one objective at a given stage. 
Furthermore, from the trajectory of the objective space (Figure \ref{fig:toy-illustration} (a) second row), the model oscillates between extreme trade-offs for DPO and SFT and ends up at a point far away from the ideal point
(where both the DPO and SFT objective values are optimal).

\subsection{Theoretical analysis of suboptimality}
 
In this section, we provide a theoretical result on the suboptimal trade-off between DPO and SFT in sequential training.
In our analysis, we view the LLM as a policy $\pi_\theta$  that is characterized by a softmax: 
\begin{equation}
    \pi_\theta(y~|~x)  \coloneqq \frac{\exp(\theta^\top \phi_{y, x})}{\sum_{y'\in\mathcal{Y}}\exp(\theta^\top \phi_{y', x})},\nonumber
\end{equation}
where $\phi_{y, x}$ is a feature vector corresponding to the input $x$ and the target output $y$. Furthermore, reference policy $\pi_{\text{\rm ref}}$ is similarly parameterized by a fixed parameter $\theta_{\text{\rm ref}}$. 
\begin{remark}
    Softmax characterization is used in previous theoretical works on RLHF; see, e.g., \citet{zhu2023principled}.
When the trainable parameter is the output projection weights, the LLM is fully characterized by the softmax. In other scenarios like fine-tuning using Low-rank Adaptation (LoRA) \citep{hu2021lora} or full-parameter training,  we believe this characterization still provides valuable insights and our result are verified empirically in Section \ref{sec:experiments}. 
\end{remark}
Next, we make the following mild assumption on $\phi_{y, x}$.

\begin{assumption}[Bounded feature]\label{ass:features}
    For all $x\!\in\!\mathcal{X}$ and $y\!\in\!\mathcal{Y}$, there exists $\Phi\!>\!0$ such that $\Vert \phi_{y, x} \Vert \leq \Phi$.
\end{assumption}

With this problem setup, we can then have the following result for the optimality of the output of Algorithm \ref{algo:seq-rlhf-sft} to the optimum of some combination of functions $f_{\text{\tiny DPO}}$ and $f_{\text{\tiny SFT}}$ in terms of $G_{\text{Mix}, \lambda}$.

\begin{theorem}[Lower bound for sequential method performance 
]\label{thm:seq-lower-bound} 
Consider Algorithm \ref{algo:seq-rlhf-sft} with $T_{\text{\tiny DPO}}=T_{\text{\tiny SFT}}=T$ under Assumption \ref{ass:features}.
Then, there exists data $\D_{\text{\tiny DPO}}$ and $\D_{\text{\tiny SFT}}$ such that given any  $\lambda\in(0,1)$, Algorithm \ref{algo:seq-rlhf-sft} with any sufficiently large $T$ has non-diminishing performance gap: 
\begin{align}
    &\E\Big[\lambda f_{\text{\tiny DPO}}(\hat{\theta}_{\text{Seq}}) \!+\! (1\!-\!\lambda)f_{\text{\tiny SFT}}(\hat{\theta}_{\text{Seq}})\nonumber\\
    &-\min_{\theta\in\Theta}\big(\lambda f_{\text{\tiny DPO}}(\theta)\! +\! (1\!-\!\lambda)f_{\text{\tiny SFT}}(\theta)\big)\Big] = \Omega(1),
\end{align}
where $\E[~\cdot~]$ is taken over the randomness of Algorithm \ref{algo:seq-rlhf-sft}.
\end{theorem}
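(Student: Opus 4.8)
The plan is to exhibit a one-dimensional instance in which the SFT and DPO objectives are in direct conflict, argue that the second (SFT) phase of Algorithm~\ref{algo:seq-rlhf-sft} necessarily drives the iterate to the SFT optimum and thereby ``forgets'' DPO, and finally show that this SFT optimum is a constant-suboptimal point for the mixed objective for every fixed $\lambda\in(0,1)$.

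First I would construct the hard instance. Take $\theta\in\R$ with $\Theta=[-R,R]$, a single prompt $x$ with two candidate outputs $y_1,y_2$, set the SFT target to $y_1$, and set the preference pair to $(y_w,y_\ell)=(y_2,y_1)$. Writing $s\coloneqq \theta(\phi_{y_1,x}-\phi_{y_2,x})$ and $S\coloneqq R\,|\phi_{y_1,x}-\phi_{y_2,x}|$, the softmax parameterization collapses both losses onto the scalar margin $s\in[-S,S]$: one checks that $f_{\text{\tiny SFT}}(s)=\log(1+e^{-s})$ is strictly decreasing while $f_{\text{\tiny DPO}}(s)=\log(1+e^{\beta(s-s_{\text{ref}})})$ is strictly increasing, where $s_{\text{ref}}$ is the fixed margin induced by $\pi_{\text{\rm ref}}$. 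Thus SFT pulls $s$ toward $+S$ and DPO pulls it toward $-S$; the two objectives genuinely conflict, which is exactly the configuration needed to induce forgetting.

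Next I would pin down the geometry of the mixed objective. The map $s\mapsto f_{\text{Mix},\lambda}(s)=\lambda f_{\text{\tiny DPO}}(s)+(1-\lambda)f_{\text{\tiny SFT}}(s)$ is convex, and since its derivative $f_{\text{Mix},\lambda}'(s)=\lambda\beta\sigma(\beta(s-s_{\text{ref}}))-(1-\lambda)(1-\sigma(s))$ is strictly positive at $s=S$ once $S$ is large (the first term is bounded below while the second vanishes), its minimizer $s^*_\lambda$ satisfies $s^*_\lambda<S$ with a strictly positive margin for every $\lambda\in(0,1)$. Bounding $f^*_{\text{Mix},\lambda}\le f_{\text{Mix},\lambda}(0)=O(1)$ while $f_{\text{Mix},\lambda}(s)\ge \lambda\beta(s-s_{\text{ref}})$ for all $s$, I would conclude that there are a constant $c_\lambda>0$ and a threshold $\delta>0$ (depending on $\lambda$ and $S$) such that $G_{\text{Mix},\lambda}(\theta)\ge c_\lambda$ whenever $s(\theta)\ge S-\delta$; taking $S$ large makes this margin comfortable.

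It remains to show that Phase~2 forces $s_T\ge S-\delta$. The key point, and the main obstacle since the sigmoid gradients preclude the closed-form iterate tracking available in the quadratic continual-learning analyses, is that every gradient here is collinear with $\phi_{y_1,x}-\phi_{y_2,x}$, so the entire trajectory reduces to a scalar recursion. In scalar form the SFT update reads $s_{t+1}=\mathrm{clip}_{[-S,S]}\big(s_t+\alpha_{2,t}(1-\sigma(s_t))\,c\big)$ with $c=(\phi_{y_1,x}-\phi_{y_2,x})^2>0$, so $\{s_t\}$ is monotonically non-decreasing and capped by $S$. While $s_t\le S-\delta$ each step gains at least $\alpha_{2,t}(1-\sigma(S-\delta))c>0$; combined with $\sum_t\alpha_{2,t}\to\infty$ and monotonicity, this pushes $s_t$ past $S-\delta$ after finitely many steps, hence $s_T\ge S-\delta$ for all sufficiently large $T$ regardless of where Phase~1 leaves the iterate (Phase~1 only moves $s$ downward toward $-S$, the worst case for the climb). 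Because the dynamics are deterministic on this instance (all samples induce the same gradient direction), the expectation is vacuous, and chaining the three steps yields $\E[G_{\text{Mix},\lambda}(\hat{\theta}_{\text{Seq}})]\ge c_\lambda=\Omega(1)$, as claimed.
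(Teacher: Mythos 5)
Your overall strategy is the same as the paper's (Appendix~\ref{app:proof-seq-lower-bound}): build a one-dimensional instance where DPO and SFT pull a scalar margin in opposite directions, collapse both phases of Algorithm~\ref{algo:seq-rlhf-sft} to a scalar recursion, show Phase~2 parks the final iterate in a region where $f_{\text{Mix},\lambda}$ exceeds its minimum by a constant, and observe that the dynamics are deterministic so the expectation is vacuous. The one structural choice you make differently is load-bearing, and it breaks the proof: you take $\Theta=[-R,R]$, while the paper takes $\Theta=\R$.

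The theorem's quantifier order is ``$\exists$ data such that $\forall\,\lambda\in(0,1)$ the gap is $\Omega(1)$,'' so $S=R\,|\phi_{y_1,x}-\phi_{y_2,x}|$ must be fixed \emph{before} $\lambda$. For fixed $S$, your claim that the minimizer $s^*_\lambda$ stays strictly below $S$ ``for every $\lambda\in(0,1)$'' is false: at $s=S$ the derivative is $\lambda\beta\,\sigma(\beta(S-s_{\rm ref}))-(1-\lambda)(1-\sigma(S))$, and $1-\sigma(S)$ is a fixed positive number, so for all $\lambda<\lambda_0(S)\coloneqq\tfrac{1-\sigma(S)}{\beta+1-\sigma(S)}$ this derivative is negative; by convexity $f_{\text{Mix},\lambda}$ is then decreasing on all of $[-S,S]$ and its \emph{constrained} minimizer is the boundary point $s=S$ itself. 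But $s=S$ is exactly where your Phase-2 recursion converges (with clipping and non-vanishing steps it attains $S$ in finitely many iterations), so for those $\lambda$ the gap tends to $0$ rather than being $\Omega(1)$. Your escape hatches --- ``once $S$ is large,'' ``taking $S$ large makes this margin comfortable'' --- choose $S$ \emph{after} $\lambda$, which proves only the weaker, quantifier-reversed statement $\forall\lambda\ \exists$ data. Note also that the bounded domain is precisely what makes your Phase-2 analysis easy (a monotone climb to a cap, independent of where Phase~1 ends). The paper cannot use this shortcut on $\R$: Phase~1 descends unboundedly far, so it needs the two-sequence comparison in Lemma~\ref{lemma:sig sequence} --- rewriting the Phase-1 recursion as $-(\theta^1_t+10)$ and showing it dominates the Phase-2 sequence term by term --- to conclude $\hat{\theta}_{\text{Seq}}\ge 0$; and on the unbounded domain $f_{\text{\tiny DPO}}$ grows linearly without cap, so the mixed objective at the output stays a constant above its finite minimum instead of the output coinciding with a boundary minimizer. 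To repair your argument you would either have to remove the cap (adopting the paper's unbounded domain and some substitute for its comparison lemma), or accept a conclusion restricted to $\lambda$ above a data-dependent threshold.
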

The above result suggests that there exist DPO and SFT optimization problems such that given some trade-off between DPO and SFT defined by $\lambda\in(0,1)$, the sequential method suffers from constant optimality gap, even when optimized for a large number of iterations. The reason for the constant optimality gap is that the sequential method described in Algorithm \ref{algo:seq-rlhf-sft} suffers from forgetting, and cannot appropriately optimize both the DPO and SFT objectives. In the next section, we explore an alternative to the sequential method that can resolve this issue. 

\section{Efficient Joint Post-training Framework for LLMs}\label{sec:joint}

\begin{algorithm}[t]
\caption{Efficient Joint DPO and SFT}
\label{algo:joint-rlhf-sft}
\begin{algorithmic}[1]

\STATE Input $\mathcal{D}_{\text{\tiny DPO}}$, $\mathcal{D}_{\text{\tiny SFT}}$, $\{\alpha_t\}_{t=1}^T$, $\lambda \in [0, 1]$, \\
\texttt{strategy} $\in \{\text{ALRIGHT}, \text{MAXRIGHT}\}$
\STATE Initialize $\theta_1 \in \Theta$

\FOR{$t = 1, \dots, T-1$}
    \STATE Sample $(x^t_{\text{\tiny DPO}}, y_w^t, y_\ell^t) \sim \mathcal{D}_{\text{\tiny DPO}}$
    \STATE Sample $(x^t_{\text{\tiny SFT}}, y^t) \sim \mathcal{D}_{\text{\tiny SFT}}$
    
    \IF{\texttt{strategy} = ALRIGHT}
        \STATE Sample $i_t \sim \text{Bernoulli}(\lambda)$
    \ELSIF{\texttt{strategy} = MAXRIGHT}
        \STATE $i_t = \arg\max_{i \in \{0, 1\}} \bar{f}_{i,\lambda}(\theta_t)$, \\ $\bar{f}_{i,\lambda}$ as defined in \eqref{eq:f-bar-maxright}
    \ENDIF

    \IF{$i_t = 1$}
        \STATE $\theta_{t+1} = \Pi_{\Theta} \left( \theta_t - \alpha_t  g_{\text{\tiny DPO}}(\theta_t; x^t_{\text{\tiny DPO}}, y_w^t, y_\ell^t) \right)$
    \ELSE
        \STATE $\theta_{t+1} = \Pi_{\Theta} \left( \theta_t - \alpha_t  g_{\text{\tiny SFT}}(\theta_t; x^t_{\text{\tiny SFT}}, y^t) \right)$
    \ENDIF
\ENDFOR

\IF{\texttt{strategy} = ALRIGHT}
    \STATE Output $\hat{\theta}_{\text{AL}} \coloneqq \theta_T$
\ELSIF{\texttt{strategy} = MAXRIGHT}
    \STATE Output
 $\hat{\theta}_{\text{MAX}} \coloneqq \theta_T$
\ENDIF
\end{algorithmic}
\end{algorithm}

In this section, we present a joint post-training framework that simultaneously incorporates SFT and DPO optimization with convergence guarantee. This framework addresses the limitations of sequential post-training by allowing dynamic balancing of both objectives throughout post-training. We instantiate this framework with two algorithmic variants, ALRIGHT and MAXRIGHT, which characterize the strategy used for balancing SFT and DPO optimization in joint post-training. Both variants are designed to be computationally efficient and achieve improved trade-off for SFT and DPO over traditional sequential method.

\subsection{A joint post-training framework}

The main disadvantage of using Algorithm \ref{algo:seq-rlhf-sft} for DPO and SFT optimization is that at a given stage of the algorithm, the model is updated with respect to only one objective. In contrast, it is computationally intensive, if not prohibitive, to optimize a linear combination of both DPO and SFT objectives. This is because, although the objectives share a single parameter, constructing two computational graphs (one per objective) in standard machine learning libraries requires significant additional memory, particularly for LLMs. To address this, our framework proposes to approximate a weighted combination of DPO and SFT objectives by alternating between the two objectives across iterations. Specifically, at iteration $t$, we sample an index $i_t\in\{0, 1\}$ which specifies the objective to be updated. When $i_t=0$, we update the SFT objective as
\begin{align}\label{eq:sft-update}
    \theta_{t+1} \!=\! \Pi_\Theta\left(\theta_{t} - \alpha_t g_{\text{\tiny SFT}}(\theta_t;x^t_ {\scalebox{0.5}{SFT}}, y^t)\right),
\end{align}
where $x^t_ {\scalebox{0.5}{SFT}}, y^t \sim \D_{\text{\tiny SFT}}$, and $\alpha_t$ is the learning rate. When $i_t=1$, we update the DPO objective as
\begin{equation}\label{eq:rlhf-update}
    \theta_{t+1} \!=\! \Pi_\Theta\left(\theta_{t} - \alpha_t g_{\text{\tiny DPO}}(\theta_t;x^t_ {\scalebox{0.5}{DPO}}, y_w^t, y_\ell^t)  \right),
\end{equation}
where $x^t_ {\scalebox{0.5}{DPO}}, y_w^t, y_\ell^t \sim \D_{\text{\tiny DPO}}$. The overall framework is summarized in Algorithm \ref{algo:joint-rlhf-sft}. In the following sections, we introduce two possible strategies for sampling $i_t$, that will achieve optimal trade-off for DPO and SFT, with minimal computational overhead.

\subsection{ALRIGHT for joint DPO and SFT}

As the first objective sampling strategy, we propose to alternate between optimizing for DPO and SFT objectives, based on a given preference for each objective. For this purpose, we define the objective
\begin{equation}\label{eq:f_alt}
    f_{\text{Alt}, \lambda}(\theta; i) = \I_{i=1} f_{\text{\tiny DPO}}(\theta) + \I_{i=0} f_{\text{\tiny SFT}}(\theta),
\end{equation}
where $i\sim \text{Bern}(\lambda)$, $\text{Bern}(\lambda)$ is the Bernoulli distribution parameterized by $\lambda\in[0,1]$, and $\I_{A}$ is the indicator function of event $A$. Hence, the objective in \eqref{eq:f_alt} behaves as $f_{\text{Mix}, \lambda}$ in expectation, i.e.
\begin{align}
    \E_{i\sim \text{Bern}(\lambda)} \left[ f_{\text{Alt}, \lambda}(\theta; i) \right]
    &= \lambda f_{\text{\tiny DPO}}(\theta) + (1-\lambda)f_{\text{\tiny SFT}}(\theta) \nonumber\\
    &= f_{\text{Mix}, \lambda}(\theta).
\end{align}
To optimize $\E_{i\sim\text{Bern}(\lambda)}\left[ f_{\text{Alt}, \lambda}(\theta; i) \right]$ using our joint post-training framework, we first sample $i_t\sim\text{Bern}(\lambda)$ per iteration, which determines the objective to be updated. Unlike the sequential method that focuses on optimizing objectives in separate stages, the ALRIGHT approach integrates both objectives simultaneously, allowing the model to balance alignment and fine-tuning performance. 
In Figure \ref{fig:toy-illustration} (b), we can see how this alternating navigates the model to a point where the trade-off between DPO and SFT is significantly better than the sequential approach. Next, we provide the convergence guarantee of Algorithm \ref{algo:joint-rlhf-sft} with ALRIGHT strategy for a given DPO-SFT trade-off.
\begin{theorem}[Upper bound for alternating method performance]\label{thm:alt-upper-bound} Consider Algorithm \ref{algo:joint-rlhf-sft} with ALRIGHT strategy and $\alpha_t = \alpha_0/\sqrt{T}$ for all $t\in\{1, \dots, T\}$ and $\alpha_0>0$. Then, under Assumption  \ref{ass:features}, for any $\lambda\in[0,1]$, we have 
\begin{align}
    &\E\Big[\lambda f_{\text{\tiny DPO}}(\hat{\theta}_{\text{AL}}) \!+\! (1\!-\!\lambda)f_{\text{\tiny SFT}}(\hat{\theta}_{\text{AL}}) \nonumber\\
    &-\min_{\theta\in\Theta}\big(\lambda f_{\text{\tiny DPO}}(\theta)\! +\! (1\!-\!\lambda)f_{\text{\tiny SFT}}(\theta)\big)\Big] = \mathcal{O}\left(\frac{\log T}{\sqrt{T}}\right).
\end{align}
\end{theorem}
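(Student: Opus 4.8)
The plan is to recognize Algorithm \ref{algo:alt-rlhf-sft} as projected stochastic gradient descent run directly on the \emph{convex} objective $f_{\text{Mix}, \lambda}$, and then to invoke a standard last-iterate SGD bound. The paper-specific work lies entirely in verifying the three ingredients that make this reduction valid: convexity of $f_{\text{Mix},\lambda}$, unbiasedness of the alternating update as an estimator of $\nabla f_{\text{Mix},\lambda}$, and a uniform bound on the stochastic gradients.

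First I would establish convexity under the softmax parameterization, writing $\log \pi_\theta(y\,|\,x) = \theta^\top \phi_{y,x} - \log \sum_{y'} \exp(\theta^\top \phi_{y',x})$. The key observation is that in the DPO logit $h_\beta$ the two log-partition functions share the same input $x$ and therefore cancel, leaving $h_\beta(\theta) = \beta\,\theta^\top(\phi_{y_w,x}-\phi_{y_\ell,x}) + c$, which is affine in $\theta$. Since $t \mapsto -\log\sigma(t)$ is convex and $h_\beta$ is affine, each summand of $f_{\text{\tiny DPO}}$ is convex, hence $f_{\text{\tiny DPO}}$ is convex. For $f_{\text{\tiny SFT}}$, each summand $-\log \pi_\theta(y\,|\,x)$ is a log-sum-exp minus a linear term, hence convex. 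A nonnegative combination of convex functions is convex, so $f_{\text{Mix},\lambda} = \lambda f_{\text{\tiny DPO}} + (1-\lambda) f_{\text{\tiny SFT}}$ is convex for every $\lambda \in [0,1]$.

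Next I would verify that the update is an unbiased projected gradient step. Conditioning on $\theta_t$ and taking expectation over both the Bernoulli draw $i_t \sim \text{Bern}(\lambda)$ and the data sample, the preliminaries already give that the per-example DPO and SFT gradients average to $\nabla f_{\text{\tiny DPO}}(\theta_t)$ and $\nabla f_{\text{\tiny SFT}}(\theta_t)$; combining with the $\text{Bern}(\lambda)$ selection yields $\E[g_t \mid \theta_t] = \lambda \nabla f_{\text{\tiny DPO}}(\theta_t) + (1-\lambda)\nabla f_{\text{\tiny SFT}}(\theta_t) = \nabla f_{\text{Mix},\lambda}(\theta_t)$. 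For the gradient bound I would use Assumption \ref{ass:features}: $\nabla_\theta h_\beta = \beta(\phi_{y_w,x}-\phi_{y_\ell,x})$ has norm at most $2\beta\Phi$ and $1-\sigma(\cdot)\in(0,1)$, so $\Vert g_{\text{\tiny DPO}}\Vert \le 2\beta\Phi$; the SFT gradient equals $\E_{y'\sim\pi_\theta}[\phi_{y',x}] - \phi_{y,x}$, a difference of a feature and a convex combination of features, so $\Vert g_{\text{\tiny SFT}}\Vert \le 2\Phi$. Hence every stochastic gradient is bounded by $G := 2\beta\Phi \vee 2\Phi$.

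Finally I would run the SGD descent recursion: by nonexpansiveness of $\Pi_\Theta$, $\Vert\theta_{t+1}-\theta^\star\Vert^2 \le \Vert\theta_t-\theta^\star\Vert^2 - 2\alpha_t\langle g_t,\theta_t-\theta^\star\rangle + \alpha_t^2 G^2$; taking conditional expectation and using convexity, namely $\langle \nabla f_{\text{Mix},\lambda}(\theta_t),\theta_t-\theta^\star\rangle \ge f_{\text{Mix},\lambda}(\theta_t)-f^\star_{\text{Mix},\lambda}$, converts this into a per-step progress bound. Summing over $t$ and dividing by $T$ with $\alpha_t=\alpha_0/\sqrt T$ immediately gives an $\mathcal{O}(1/\sqrt T)$ rate for the \emph{averaged} iterate. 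The main obstacle is that $\hat\theta_{\text{AL}}$ is the final, non-averaged iterate, for which the clean telescoping is unavailable; here I expect to import a last-iterate SGD analysis of Shamir--Zhang type, which bounds $f_{\text{Mix},\lambda}(\theta_T)-f^\star_{\text{Mix},\lambda}$ through a suffix-averaging argument and produces exactly the extra $\log T$ factor, yielding the claimed $\mathcal{O}(\log T/\sqrt T)$. A secondary subtlety is ensuring the minimizer $\theta^\star$ is attained with $\Vert\theta_1-\theta^\star\Vert$ bounded, which is where compactness of the constraint set $\Theta$ (implicit in the projection $\Pi_\Theta$) is used.
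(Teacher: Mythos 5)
Your proposal follows essentially the same route as the paper's proof: the paper likewise verifies convexity of $f_{\text{Mix},\lambda}$ (Proposition~\ref{prop:convex}) and boundedness of the stochastic gradients (Proposition~\ref{prop:bounded-stoch-grad}, with exactly your constants $2\beta\Phi$ and $2\Phi$), establishes the unbiased one-step projected-SGD inequality (Lemma~\ref{lem:one-step-sgd}), and then converts the averaged-iterate telescoping into a last-iterate bound with the extra $\log T$ factor by importing a suffix-averaging lemma (Lemma~\ref{lem:orabona}, Orabona's Lemma~1, which is the same device as the Shamir--Zhang analysis you invoke). The only difference is cosmetic: you argue convexity via affine cancellation of the log-partition terms in $h_\beta$ plus log-sum-exp convexity, whereas the paper computes the Hessians of $f_{\text{\tiny DPO}}$ and $f_{\text{\tiny SFT}}$ and shows they are positive semi-definite; both yield the same intermediate facts and the same final $\mathcal{O}(\log T/\sqrt{T})$ rate.
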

\begin{remark}
    The above result implies that the performance metric diminishes with increasing $T$, thus we can achieve an arbitrary trade-off between DPO and SFT defined by $\lambda$ up to arbitrary optimality. This is in contrast to the lower bound result in Theorem \ref{thm:seq-lower-bound} established for sequential training: there exist data sets such that the sequential method never approaches optimal trade-off, even when trained for larger number of iterations.
\end{remark}

While ALRIGHT offers theoretical convergence guarantees for any arbitrary trade-off in expectation, the alternation between optimizing DPO and SFT objectives occurs randomly based on a predetermined probability, which may introduce additional noise in the updates.
This raises the natural question: Can we design a performance-aware, {\em adaptive alternating} optimization method with minimal additional computational resource usage compared to ALRIGHT? In the next section, we will propose an alternative alternating strategy that adaptively selects the objective to optimize. 
\vspace{-0.1cm}
\subsection{MAXRIGHT for joint DPO and SFT}\label{sec:chebyshev-rlhf-sft}

In this section, we introduce a method that can adaptively choose the objective to be optimized based on the current performance of $\theta$, which can lead to faster convergence to a point that can perform well for both DPO and SFT objectives.
To this end, we first compare the performance of the current model on $f_{\text{\tiny DPO}}$ and $f_{\text{\tiny SFT}}$. Define the maximum (weighted) optimality gap as
\begin{align}\label{eq:f_cheb}
    f_{\text{Max}, \lambda}(\theta) = \max ( \lambda (f_{\text{\tiny DPO}}(\theta) - f^*_{\text{\tiny DPO}}), \nonumber\\ (1-\lambda) (f_{\text{\tiny SFT}}(\theta) - f^*_{\text{\tiny SFT}}) ),
\end{align}
where $f^*_{\text{\tiny DPO}} = \min_{\theta\in\Theta} f_{\text{\tiny DPO}}(\theta)$ (similarly for $f_{\text{\tiny SFT}}(\theta)$), and $\lambda\in[0,1]$. The idea is to optimize this maximum optimality gap to reach a balance between the two $\lambda$-scaled objectives. For (approximately) optimizing
$f_{\text{Max}, \lambda}(\theta)$ using the joint post-training framework, at each iteration $t$ we select $i_t=\arg\!\max_{i}\bar{f}_{i, \lambda}(\theta_t)$, where 
\begin{equation}
\begin{aligned}\label{eq:f-bar-maxright}
     \bar{f}_{0,\lambda}(\theta_t) &\coloneqq(1-\lambda) \left(f_{\text{\tiny SFT}}(\theta_t;x^t_ {\scalebox{0.5}{SFT}}, y^t) - f^*_{\text{\tiny SFT}} \right) ~~ \text{and}\\ \bar{f}_{1,\lambda}(\theta_t) &\coloneqq\lambda \left(f_{\text{\tiny DPO}}(\theta_t;x^t_ {\scalebox{0.5}{DPO}}, y_w^t, y_\ell^t) - f^*_{\text{\tiny DPO}} \right),
\end{aligned}
\end{equation}
with $x^t_ {\scalebox{0.5}{DPO}}, y_w^t, y_\ell^t \sim \D_{\text{\tiny DPO}}$ and $x^t_ {\scalebox{0.5}{SFT}}, y^t \sim \D_{\text{\tiny SFT}}$. We can see in the toy illustration (Figure \ref{fig:toy-illustration} (c)), that Algorithm \ref{algo:joint-rlhf-sft} with MAXRIGHT strategy can converges closer to the ideal point more directly compared to ALRIGHT, due to its performance based update of objectives. Even though MAXRIGHT allows one to compute the index needed for selecting the objective with a maximum (weighted) optimality gap, in practice evaluating both objectives can be memory intensive. We alleviate this by introducing a memory efficient implementation of MAXRIGHT, which requires only periodic evaluation of objectives. Details of memory efficient MAXRIGHT are given in Appendix \ref{app:mem-eff-chebyshev}.

\begin{remark}
    It is well-known in multi-objective optimization literature \citep{miettinen1999nonlinear} that under some assumptions on the problem setup, the solution of problem \eqref{eq:f_cheb} for any $\lambda\in[0,1]$ is guaranteed to be Pareto optimal (i.e. no other solution can further optimize both the objectives simultaneously). Furthermore, unlike the ALRIGHT strategy, MAXRIGHT requires prior knowledge or computation of $f^*_{\text{\tiny DPO}}$ and $f^*_{\text{\tiny SFT}}$, adding to its overall computational budget. However, these values are only computed once and can be reused in training with varying $\lambda$. Details on approximating $f^*_{\text{\tiny DPO}}$ and $f^*_{\text{\tiny SFT}}$ are provided in Appendix \ref{app:exp-details}.
\end{remark}

\begin{figure*}[t]
    \begin{minipage}[h]{0.42\linewidth}
        \centering
        \includegraphics[width=\linewidth]{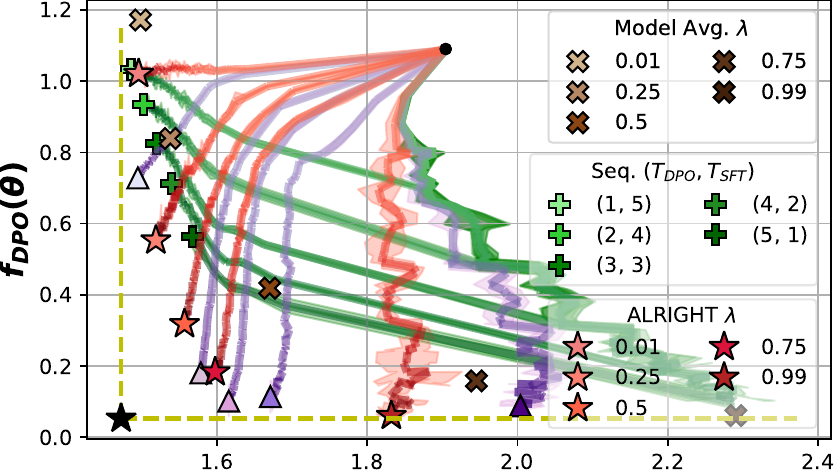} 
        \includegraphics[width=\linewidth]{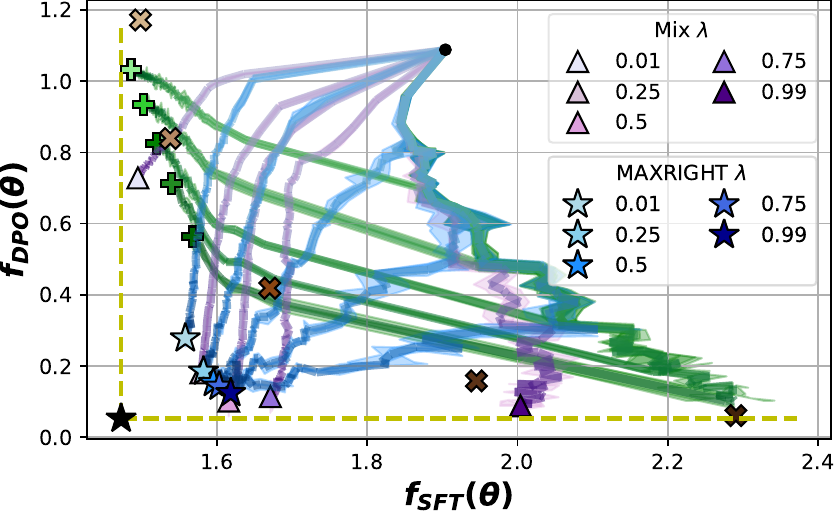}
    \end{minipage}
    \hfill
    \begin{minipage}[h]{0.55\linewidth}
        \centering
        \includegraphics[width=\linewidth]{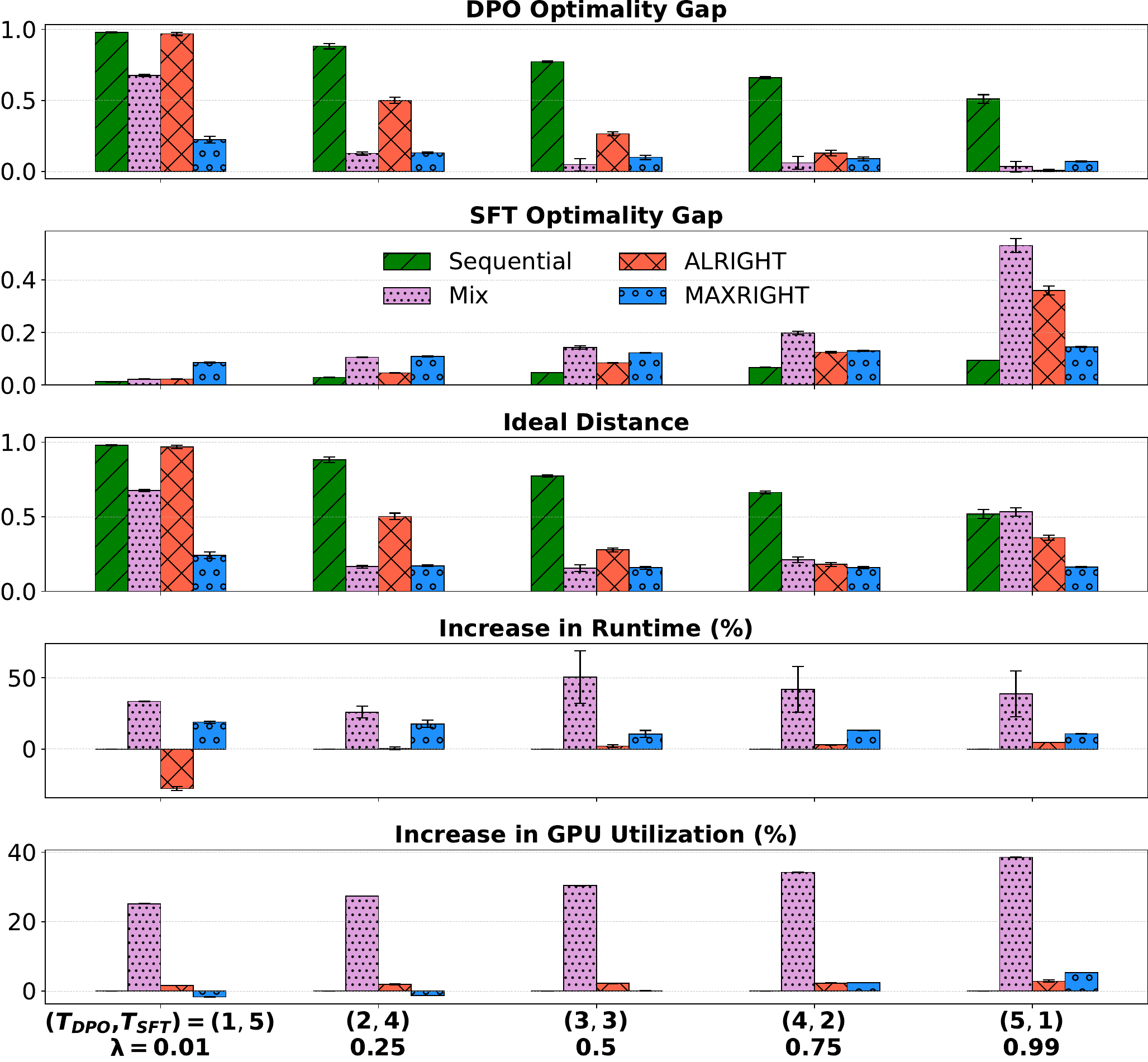}
    \end{minipage}
    \caption{\textbf{Comparison in first DPO then SFT setting using \textsc{pythia-1b} model.} \textbf{Left:} Training trajectories in the objective space. \textbf{Right:} Performance comparison across multiple evaluation metrics, including optimality gap for DPO and SFT objectives, ideal distance, runtime, and GPU utilization. The bar charts highlight the trade-offs and resource efficiency of each method for different choices of $(T_{\text{\tiny DPO}}, T_{\text{\tiny SFT}})$ or $\lambda$.}
\label{fig:dpo-sft-loss-space-comp}
\vspace{-0.2cm}
\end{figure*}

\vspace{-0.2cm}

\section{Related Work}\label{sec:related}

\textbf{RLHF.} The most fundamental form of RLHF was introduced by \citet{christiano2017deep} and has been used successfully to align LLMs in many works such as \citet{openai2022chatgpt,ouyang2022training, bai2022training, bai2022constitutional, sun2024principle}.  RLHF for LLM alignment has been studied extensively, including more efficient direct preference optimization \citep{rafailov2024direct,xu2024dpo,lee2024mechanistic, zhong2024dpo}, reference model free preference alignment \citep{meng2024simpo, hong2024reference}, generalized RLHF \citep{azar2024general, munos2023nash}, safe RLHF \citep{dai2023safe},  group preference learning \citep{zhao2023group,chakraborty2024maxmin}, and theory or understanding of RLHF \citep{zhu2023principled,shen2024principled,xiong2024iterative,wang2023rlhf,kirk2023understanding}. In this work, we consider DPO \citep{rafailov2024direct} which has been used in training many popular open-source LLMs \citep{abdin2024phi,dubey2024llama}.

\textbf{SFT.} Another important step in LLM post-training is SFT \citep{howard2018universal, devlin2018bert,wei2021finetuned, zhu2023minigpt,zhang2023instruction}.
In recent years, there have been a large body of work on efficient LLM SFT; see, e.g., zeroth-order fine-tuning \citep{malladi2023fine}, quantized fine-tuning \citep{kim2024memory,li2023loftq}, parameter-efficient fine-tuning \citep{chen2023parameter,zhang2023adalora,shi2023dept,chen2023longlora,nikdan2024rosa}, truthful fine-tuning \citep{tian2023fine}, robust fine-tuning \citep{tian2024fast}, SFT with data selection \citep{lu2023instag,kang2024get,zhao2024long}, self-play fine-tuning \citep{chen2024self}  and understanding of LLM fine-tuning \citep{malladi2023kernel}.

\textbf{Trade-off in RLHF and SFT.} Prior work on RLHF and SFT pipelines show that the standard sequential training either degrades alignment \citep{qi2023fine} or fine-tuning performance \citep{ouyang2022training}, depending on the order of training. To reconcile the two, some methods remove a distinct RLHF stage by regularizing or reformulating the SFT objective \citep{hong2024orpo, hua2024intuitive}, or by joint training with demonstrations \citep{li2024joint}. These approaches, however, necessitates same dataset used across stages. Adaptive model averaging (AMA) \citep{lin2023speciality} adaptively optimizes a weighted combination of SFT stage and RLHF stage parameters, however provable SFT and RLHF tradeoff in this case remains unclear. Another direction \citep{yang2024rewards, guo2024controllable} controls \textit{within} RLHF/SFT preferences via modified prompts. In contrast, our framework addresses the \textit{between} RLHF and SFT trade-off without data modification.

\vspace{-0.2cm}
\section{Experiments and Discussion}\label{sec:experiments}

\begin{figure*}[t]
    \centering
    \includegraphics[width=\linewidth]{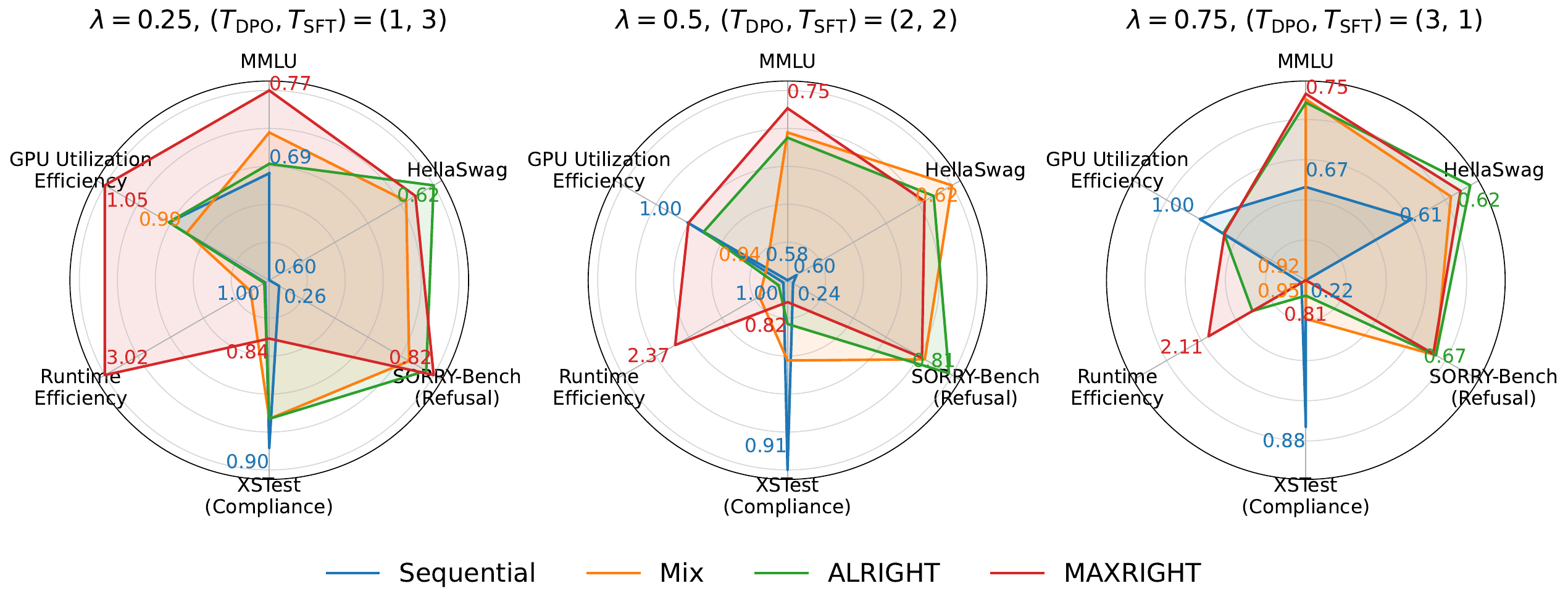}
    \caption{\textbf{Comparison in first DPO then SFT setting using \textsc{Llama3-8b} model.} Ealuation using MMLU, HellaSwag, SORRY-Bench, and XSTest, benchmarks, along with runtime and GPU utilization efficiencies across different post-training methods.}
    % \TC{Enlarge the font. invisible}
    \label{fig:llama3-8b-sft-safe-dpo-radar-plots}
    \vspace{-0.2cm}
\end{figure*}

In this section, we empirically compare the proposed joint post-training framework with some existing baselines, in terms of their Pareto front performance, resource usage such as computation time and GPU memory utilization, and performance on popular LLM evaluation benchmarks.

\subsection{Experiment setup}

In this section, we describe the experimental setup for evaluating our framework. Additional experiment details and results are provided in Appendix \ref{app:exp-details}.

\textbf{Models.} We use two models: \textsc{pythia-1b}\citep{biderman2023pythia} to study optimization dynamics, trade-offs, and resource usage, \textsc{Llama3-8B}\citep{dubey2024llama} for real-world benchmarks, and \textsc{opt-1.3b}\citep{zhang2022opt} for experiments with SFT and  DPO stages sharing the same dataset.

\textbf{Datasets.} Three datasets are used for DPO. For general alignment, \textsc{Dahoas/rm-hh-rlhf}\citep{ganguli2022red}; for safety alignment, a filtered version of \textsc{PKU-Alignment/PKU-SafeRLHF}\citep{ji2024pku} (see Appendix \ref{app:llama3-8b-exp-det}); and for preference dataset shared with SFT, \textsc{HuggingFaceH4/ultrafeedback\_binarized}\citep{cui2023ultrafeedback}. For SFT in all other experiments, we use \textsc{vicgalle/alpaca-gpt4} dataset\citep{peng2023instruction}.

\textbf{Baseline Methods.} To compare with ALRIGHT and MAXRIGHT, we consider: (i) \textit{Mix}, a convex combination of DPO and SFT objectives; (ii) \textit{Sequential}, optimizing DPO and SFT in sequence (in either order); and (iii) ORPO\citep{hong2024orpo}, included when both DPO and SFT stages share the same preference dataset.

\textbf{Evaluation Metrics.} We compute the \textbf{optimality gap} for DPO and SFT objectives, defined as $f(\theta) - f^*$ where $f^*$ is the objective optimum, to capture separate objective performance. To capture joint performance, we measure the \textbf{ideal distance}, i.e., Euclidean distance between $(f_{\text{DPO}}(\theta), f_{\text{SFT}}(\theta))$ and $(f^*_{\text{DPO}}, f^*_{\text{SFT}})$. Resource efficiency is evaluated via \textbf{runtime} and \textbf{GPU utilization} (percentage increase and efficiency) relative to sequential training (e.g., Figure \ref{fig:dpo-sft-loss-space-comp}). Benchmark evaluations use \textbf{MMLU}\citep{hendrycks2020measuring} (knowledge/reasoning), \textbf{HellaSwag}\citep{zellers2019hellaswag, eval-harness} (commonsense inference), \textbf{SORRY-Bench}\citep{xie2024sorry} (safety refusal), and \textbf{XSTest}\citep{rottger2023xstest} (over-refusal). For evaluating post-training using \textsc{HuggingFaceH4/ultrafeedback\_binarized} dataset, we use \textsc{UltraRM-13b} reward model  \citep{cui2023ultrafeedback} to judge the win rate of each method against chosen responses.
\vspace{-0.3cm}

\subsection{Experimental results}
We report results on \textsc{pythia-1b} and \textsc{Llama3-8B}. For \textsc{pythia-1b}, we use \textsc{Dahoas/rm-hh-rlhf} (DPO) and \textsc{vicgalle/alpaca-gpt4} (SFT) in the DPO-then-SFT setting. Figure \ref{fig:dpo-sft-loss-space-comp} shows the Sequential method’s suboptimality and Mix’s high computational cost. Similar trends hold for \textsc{Llama3-8B} (Figures \ref{fig:safe-dpo-sft-loss-space-comp-llama3}, \ref{fig:dpo-sft-loss-space-comp-llama3}), and for SFT-then-DPO with both models (Figures \ref{fig:sft-dpo-loss-space-comp}, \ref{fig:sft-dpo-loss-space-comp-llama3}).
For \textsc{Llama3-8B} in the main text, we use filtered \textsc{PKU-SafeRLHF} (DPO) and \textsc{alpaca-gpt4} (SFT) to encourage safety refusal. Benchmark results (Figure \ref{fig:llama3-8b-sft-safe-dpo-radar-plots}) show Sequential suffers poor trade-offs, while our framework matches Mix’s performance at lower cost. Finally, we test with identical datasets for SFT and DPO stages (minimal conflict), with results in Figure \ref{fig:sft-dpo-opt-winrate}.

\textbf{ALRIGHT provides better control over the trade-off compared to Sequential.} We first consider the experiments using \textsc{pythia-1b} model. As shown in the top left plot of Figure \ref{fig:dpo-sft-loss-space-comp}, the optimization trajectories for DPO followed by SFT illustrate that the set of final models produced by ALRIGHT, for various values of $\lambda$, is more evenly distributed in the objective space.
% \todo[author=PR]{``Alternating'' $\to$ ALRIGHT?}
This distribution forms a Pareto front, indicating that no model is strictly worse than another with respect to both objectives. Moreover, the spread of these models is comparable to that of the Mix method. In contrast, Sequential tends to produce models that are biased towards the SFT, even when $T_{\text{DPO}}$ is significantly larger than $T_{\text{SFT}}$ (e.g., $(T_{\text{DPO}}, T_{\text{SFT}}) = (5, 1)$).

\textbf{MAXRIGHT achieves near-ideal performance compared to other methods.} As illustrated in the bottom left plot of Figure \ref{fig:dpo-sft-loss-space-comp}, the optimization trajectories for DPO followed by SFT show that the set of final models produced by MAXRIGHT, for different values of $\lambda$, converge closer to the ideal point compared to other methods. This behavior is further supported by the Ideal Distance comparison in the right plot of Figure \ref{fig:dpo-sft-loss-space-comp}, where MAXRIGHT consistently achieves the best ideal distance performance across all $\lambda$ values. We attribute this advantage to the adaptive nature of MAXRIGHT, which dynamically selects the objective to update based on performance, rather than adhering to a fixed schedule like ALRIGHT.
% \todo[author=PR]{``Alternating'' $\to$ ALRIGHT?}
This adaptability is particularly beneficial in heavily over-parameterized settings, where models have the capacity to approach ideal performance.

\textbf{ALRIGHT and MAXRIGHT require minimal additional resources compared to Sequential and significantly lower than Mix.} 
As shown in Figure \ref{fig:dpo-sft-loss-space-comp} (right, Increase in Runtime (\%) and Increase in GPU Utilization (\%)), the additional computational resources required by different implementations of ALRIGHT and MAXRIGHT are minimal (or even negative) relative to their Sequential counterparts. In contrast, Mix incurs substantial additional resource usage, with increases of over $50\%$ in runtime and more than $35\%$ in GPU utilization, despite achieving similar performance metrics to ALRIGHT and MAXRIGHT. Note that MAXRIGHT requires additional computation (not included in Figure \ref{fig:dpo-sft-loss-space-comp} results) before training time for approximating $f^*_{\text{DPO}}$ and $f^*_{\text{SFT}}$. However, once computed, these approximations can be reused for any number of preference and seed configurations used in the experiments.

\textbf{ALRIGHT and MAXRIGHT perform significantly better than Sequential in real-world benchmarks.} 
Experiment results for safety focused DPO and SFT with \textsc{Llama3-8B} are given in Figure \ref{fig:llama3-8b-sft-safe-dpo-radar-plots}. It can be seen that all joint-training methods significantly outperform Sequential method in MMLU and SORRY-Bench benchmarks, and slightly outperform Sequential method in HellaSwag benchmark, in all trade-off settings. Sequential performs better in XSTest benchmark, simply because of the lack of safety awareness in Sequential trained model, resulting in a larger compliance rate to any prompt, irrespective of the nature of the prompt. 
This bias results in a poor trade-off between tasks, and hence our proposed framework significantly outperforms Sequential method in terms of overall performance across all benchmarks, in particular, up to $23\%$ improvement across all benchmarks by ALRIGHT (see Table \ref{tab:llam3-safety-benchmark} in Appendix \ref{app:llama3-8b-exp}). Furthermore, in terms of computational resource usage, it can be seen that our proposed joint framework performs similarly to or even surpasses Sequential, while Mix fails to do so. The reason for this phenomenon is, as seen from the optimization trajectories in Figure \ref{fig:safe-dpo-sft-loss-space-comp-llama3} (Appendix \ref{app:llama3-8b-exp}), our joint framework (MAXRIGHT strategy in particular) can converge to near ideal point faster, whereas Sequential never converges to this point.  

\begin{figure}[t]
    \centering
    \includegraphics[width=\linewidth]{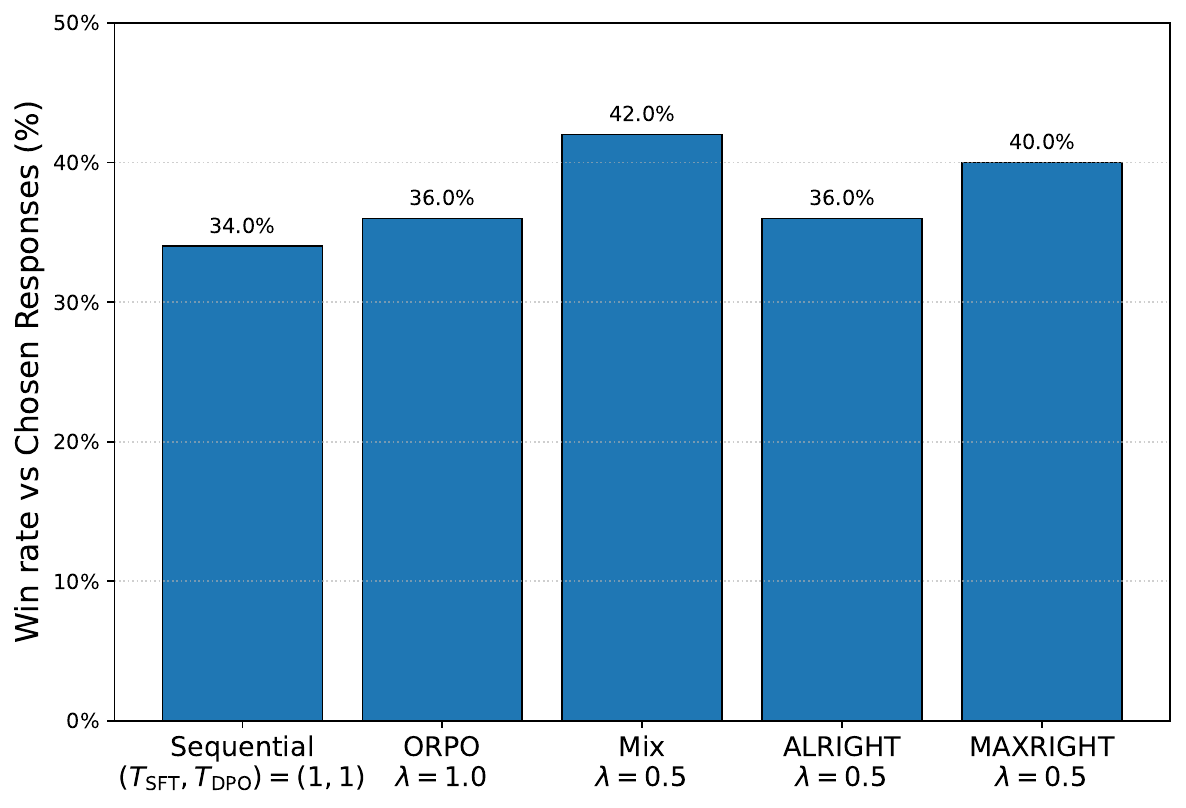}
    \caption{\textbf{Comparison in first SFT then DPO setting using \textsc{opt-1.3b} model.} Evaluation using win rate against the chosen responses 50 test set samples from \textsc{HuggingFaceH4/ultrafeedback\_binarized} dataset judged by \textsc{UltraRM-13b} reward model.}
    \label{fig:sft-dpo-opt-winrate}
    \vspace{-0.7cm}
\end{figure}

\textbf{Joint framework performance in non-conflicting post-training.} In addition to experiments in DPO and then SFT setting with diverse datasets, we also conduct experiments in the SFT then DPO setting, where both stages share the same preference dataset. Note that this setting is more conducive for Sequential, given minimal conflict between the two post-training stages. Since all joint training methods (Mix, ALRIGHT, and MAXRIGHT) utilize a reference model, they are trained only for one epoch, while ORPO (which does not require a reference model) is trained for 2 epochs. All methods are trained with equal preference for SFT and DPO objectives. From Figure \ref{fig:sft-dpo-opt-winrate}, it can be seen that the Mix method outperforms all the other methods, while MAXRIGHT performs comparably. It is noteworthy that even in a more conducive setting, Sequential performs poorly, suggesting that the joint training framework has an advantage even when there is minimal conflict between the datasets used in the SFT and DPO stages. Compared to all joint training methods, ORPO is more computationally efficient due to the lack of need for a reference model. However, ORPO can only be applied when the same dataset is used in both the SFT and DPO stages, whereas our methods can be applied for an arbitrary combination of datasets for SFT and DPO stages.

\bibliographystyle{plainnat}

\appendix
\onecolumn

\begin{center}
{\large \bf Supplementary Material for
``Understanding Forgetting in LLM Supervised Fine-Tuning and Preference Learning - A Convex Optimization Perspective"}
\end{center}
%  \setlength{\parindent}{0pt}
% \vspace{-1cm}

%cmg
\doparttoc % Tell to minitoc to generate a toc for the parts
\faketableofcontents % Run a fake tableofcontents command  for the partocs

\section{Proofs for Theoretical Results}\label{app:proofs}

In this section, we provide the proofs for the main theoretical results of the paper, Theorems \ref{thm:seq-lower-bound} and \ref{thm:alt-upper-bound}. The proof is organized as follows: In Appendix \ref{app:basic-claims}, we establish some fundamental claims regarding the problem setup, such as the convexity of the objectives and the boundedness of the second moment of the stochastic gradients. Then, in Appendix \ref{app:proof-seq-lower-bound}, we prove the lower bound stated in Theorem \ref{thm:seq-lower-bound} along with several supporting lemmas. Finally, in Appendix \ref{app:proof-alt-upper-bound}, we prove the upper bound stated in Theorem \ref{thm:alt-upper-bound}, accompanied by additional supporting lemmas. 

For conciseness, in this section we omit the subscripts DPO and SFT of the input $x$, but whether $x$ belongs to $\D_{\text{\tiny DPO}}$ or $\D_{\text{\tiny SFT}}$ can be inferred from the context. Furthermore, for brevity, we denote the averaging over the datasets \scalebox{1.0}{$\frac{1}{N_1}\sum_{\tiny{ x, y_w, y_\ell \in\D_{\text{\tiny DPO}}}}$} and \scalebox{1.0}{$\frac{1}{N_2}\sum_{x, y\in\D_{\text{\tiny SFT}}}$} by $\E_{x, y_w, y_\ell \sim \D_{\text{\tiny DPO}}}$ and $\E_{ x, y \sim \D_{\text{\tiny SFT}}}$, respectively.

\subsection{Basic claims on the problem setup}\label{app:basic-claims}

Before going to the proofs of the main results, we can have the following basic claims on the problem setup.

\begin{proposition}[Bounded second moment of stochastic gradient]\label{prop:bounded-stoch-grad}
For all $\theta\in\Theta$, there exist $M_1, M_2>0$ such that
    \begin{align}
        \E_{x, y_w, y_\ell \sim \D_{\text{\tiny DPO}}}[\Vert g_{\text{\tiny DPO}}(\theta; x, y_w, y_\ell) \Vert^2] &\leq M_1^2, \\
        \E_{ x, y \sim \D_{\text{\tiny SFT}}}[\Vert g_{\text{\tiny SFT}}(\theta;x, y ) \Vert^2] &\leq M_2^2.
    \end{align}
\end{proposition}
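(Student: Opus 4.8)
The plan is to obtain a \emph{uniform} (i.e., $\theta$-independent) deterministic bound on the norm of each stochastic gradient, which then trivially yields the bound on the second moment after taking expectation over the data. The entire argument rests on a single observation about the softmax parameterization: the gradient of a log-probability is a \emph{centered feature}. Concretely, writing $\log\pi_\theta(y\mid x) = \theta^\top\phi_{y,x} - \log\sum_{y'\in\mathcal{Y}}\exp(\theta^\top\phi_{y',x})$, differentiation gives
\begin{equation}
\nabla_\theta\log\pi_\theta(y\mid x) = \phi_{y,x} - \sum_{y'\in\mathcal{Y}}\pi_\theta(y'\mid x)\,\phi_{y',x} = \phi_{y,x} - \E_{y'\sim\pi_\theta(\cdot\mid x)}[\phi_{y',x}].
\end{equation}
Since $\pi_\theta(\cdot\mid x)$ is a probability distribution, the second term is a convex combination of feature vectors, so by the triangle inequality and Assumption~\ref{ass:features} its norm is at most $\Phi$; hence $\Vert\nabla_\theta\log\pi_\theta(y\mid x)\Vert \le 2\Phi$ for every $\theta$, $x$, and $y$.

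First I would apply this to the SFT gradient. By \eqref{eq:g2} we have $g_{\text{\tiny SFT}}(\theta;x,y) = -\nabla_\theta\pi_\theta(y\mid x)/\pi_\theta(y\mid x) = -\nabla_\theta\log\pi_\theta(y\mid x)$, so the bound above immediately gives $\Vert g_{\text{\tiny SFT}}(\theta;x,y)\Vert \le 2\Phi$ pointwise, and therefore $\E_{x,y\sim\D_{\text{\tiny SFT}}}[\Vert g_{\text{\tiny SFT}}\Vert^2] \le 4\Phi^2 =: M_2^2$.

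Next I would handle the DPO gradient. From \eqref{eq:g1} and \eqref{eq:h}, and using that $\pi_{\text{\rm ref}}$ is fixed so its terms vanish under $\nabla_\theta$, we have $\nabla_\theta h_\beta(\theta;x,y_w,y_\ell) = \beta\big(\nabla_\theta\log\pi_\theta(y_w\mid x) - \nabla_\theta\log\pi_\theta(y_\ell\mid x)\big)$, whose norm is at most $\beta(2\Phi+2\Phi) = 4\beta\Phi$ by the triangle inequality and the centered-feature bound. Combining this with the fact that the scalar prefactor satisfies $0 < 1-\sigma(h_\beta) < 1$ (as $\sigma$ maps into $(0,1)$), we obtain $\Vert g_{\text{\tiny DPO}}(\theta;x,y_w,y_\ell)\Vert \le 4\beta\Phi$ pointwise, and hence $\E_{x,y_w,y_\ell\sim\D_{\text{\tiny DPO}}}[\Vert g_{\text{\tiny DPO}}\Vert^2] \le 16\beta^2\Phi^2 =: M_1^2$.

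There is no genuine obstacle here; the only thing worth flagging is that one should not try to bound gradients of the objectives via smoothness or Lipschitz constants, but instead exploit the explicit softmax structure so that the bounds are uniform in $\theta$ (this uniformity is what makes them usable later even though $\Theta$ may be unbounded, and it is precisely the centered-feature identity that delivers it). Because both bounds hold pointwise and uniformly over $\theta$, taking the (finite-sum) expectation over the respective datasets preserves them, completing the proof.
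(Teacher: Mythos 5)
Your proof is correct and takes essentially the same approach as the paper: both exploit the softmax parameterization to obtain pointwise bounds on $g_{\text{\tiny SFT}}$ and $g_{\text{\tiny DPO}}$ that are uniform in $\theta$ (the paper's $\bar{\phi}_x(\theta)$ is exactly your centered-feature term), and then pass to the dataset average. The only difference is in sharpness, not in route: the paper notes that the log-partition terms cancel exactly inside $h_\beta$, so that $\nabla_\theta h_\beta(\theta;x,y_w,y_\ell)=\beta(\phi_{y_w,x}-\phi_{y_\ell,x})$ and hence $M_1^2=4\beta^2\Phi^2$, whereas your triangle-inequality bound on the two centered log-gradients yields the looser (but equally valid, since only existence of $M_1$ is asserted) constant $16\beta^2\Phi^2$.
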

\begin{proof}   
    Considering $g_{\text{\tiny DPO}}$, we can first simplify $h_\beta$ defined in \eqref{eq:h} under softmax parameterization of $\pi_\theta$ and $\pi_{\text{\rm ref}}$ as
    \begin{align}
        h_\beta(\theta;x, y_w, y_\ell) &=  \beta \log\left(\frac{\pi_\theta(y_w~|~x)}{\pi_{\text{\rm ref}}(y_w~|~x)}\right)-\beta \log\left(\frac{\pi_\theta(y_\ell~|~x)}{\pi_{\text{\rm ref}}(y_\ell~|~x)}\right) \nonumber \\
        &= \beta(\theta - \theta_{\text{\rm ref}})^\top(\phi_{y_w, x}-\phi_{y_\ell, x}).
    \end{align}
    Then we can simplify $g_{\text{\tiny DPO}}$ as
    \begin{align}\label{eq:g1-simp}
        g_{\text{\tiny DPO}}(\theta;x, y_w, y_\ell) &= -\left(1- \sigma( h_\beta(\theta;x, y_w, y_\ell, \pi_{\text{\rm ref}})\right)\nabla_\theta h_\beta(\theta;x, y_w, y_\ell, \pi_{\text{\rm ref}}) \nonumber \\
        &= -\beta (1 - \sigma\left( \beta(\theta - \theta_{\text{\rm ref}})^\top(\phi_{y_w, x}-\phi_{y_\ell, x}) \right)(\phi_{y_w, x}-\phi_{y_\ell, x}).
    \end{align}
    We can then bound the norm of $g_{\text{\tiny DPO}}$ as
    \begin{align}\label{eq:g1-norm}
        \Vert g_{\text{\tiny DPO}}(\theta;x, y_w, y_\ell) \Vert^2 &= \beta^2 (1 - \sigma\left( \beta(\theta - \theta_{\text{\rm ref}})^\top(\phi_{y_w, x}-\phi_{y_\ell, x}) \right)^2 \Vert \phi_{y_w, x}-\phi_{y_\ell, x} \Vert^2 \nonumber\\
        &\leq \beta^2 \Vert \phi_{y_w, x}-\phi_{y_\ell, x} \Vert^2 \nonumber\\
        &\leq 4\beta^2 \Phi^2 =: M_1,
    \end{align}
    where the first inequality is due to the fact that $0\leq \sigma(z) \leq 1$ for all $z\in\R$, second inequality is due to Cauchy-Schwarz inequality and Assumption \ref{ass:features}. Taking expectation (average) over the dataset $\D_{\text{DPO}}$ in \eqref{eq:g1-norm} proves the first part of Proposition \ref{prop:bounded-stoch-grad}.
    For proving the second part of the proposition, we start by simplifying the gradient of $\pi_\theta$ under softmax-parameteriation, given by
        \begin{align}
           \nabla_\theta \pi_\theta(y~|~x)  &= \nabla_\theta \frac{\exp(\theta^\top \phi_{y, x})}{\sum_{y'\in\mathcal{Y}}\exp(\theta^\top \phi_{y', x})} \nonumber \\
           &= \frac{\exp(\theta^\top \phi_{y, x})\sum_{y'\in\mathcal{Y}}\exp(\theta^\top \phi_{y', x}) \phi_{y, x}  - \exp(\theta^\top \phi_{y, x})\sum_{y'\in\mathcal{Y}}\exp(\theta^\top \phi_{y', x})\phi_{y', x}}{\left( \sum_{y'\in\mathcal{Y}}\exp(\theta^\top \phi_{y', x}) \right)^2} \nonumber \\
           &= \left(\phi_{y, x} - \bar{\phi}_x(\theta) \right)\pi_\theta(y~|~x),
        \end{align}
        where 
        \begin{align}
            \bar{\phi}_x(\theta) &\coloneqq \frac{\sum_{y'\in\mathcal{Y}}\phi_{y', x}\exp(\theta^\top \phi_{y', x})}{\sum_{y'\in\mathcal{Y}}\exp(\theta^\top \phi_{y', x})}.
        \end{align} 
    Then we can simplify $g_{\text{\tiny SFT}}$ as
    \begin{align}\label{eq:g2-simp}
        g_{\text{\tiny SFT}}(\theta;x, y) &= -\frac{\nabla_\theta \pi_\theta(y~|~x)}{\pi_\theta(y~|~x)} \nonumber\\
        &= -(\phi_{y, x} - \bar{\phi}_x(\theta)).
    \end{align}
    We can then bound the norm of $g_{\text{\tiny SFT}}$ as
    \begin{align}\label{eq:g2-norm}
        \Vert g_{\text{\tiny SFT}}(\theta;x, y) \Vert^2 &= \Vert \phi_{y, x} - \bar{\phi}_x(\theta) \Vert^2 \nonumber\\
        &\leq 4\Phi^2 =: M_2,
    \end{align}
    where the inequality is due to the Cauchy-Schwarz inequality and Jensen's inequality. Taking expectation (average) over the dataset $\D_{\text{SFT}}$ in \eqref{eq:g2-norm} proves the second part of Proposition \ref{prop:bounded-stoch-grad}.
\end{proof}

\begin{proposition}[Convexity of objectives]\label{prop:convex}
    Under Assumption \ref{ass:features}, the objectives $f_{\text{\tiny DPO}}$ and $f_{\text{\tiny SFT}}$ (defined in \eqref{eq:f1} and \eqref{eq:f2}, respectively) are convex.
\end{proposition}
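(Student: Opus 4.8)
The plan is to exploit the fact that both $f_{\text{\tiny DPO}}$ and $f_{\text{\tiny SFT}}$ are \emph{averages} of per-sample losses, so it suffices to show that each individual summand is convex in $\theta$; convexity is then preserved under nonnegative (here uniform) averaging. For each summand I will reduce the loss to a standard convex function composed with an affine map of $\theta$, the affine structure being precisely what the softmax parameterization supplies (it is exactly the simplification already carried out in the proof of Proposition \ref{prop:bounded-stoch-grad}).

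First I would handle $f_{\text{\tiny DPO}}$. Reusing the softmax simplification of \eqref{eq:h}, each term of \eqref{eq:f1} is $-\log\bigl(\sigma(h_\beta(\theta;x,y_w,y_\ell))\bigr)$ with $h_\beta(\theta;x,y_w,y_\ell)=\beta(\theta-\theta_{\text{\rm ref}})^\top(\phi_{y_w,x}-\phi_{y_\ell,x})$, which is affine in $\theta$. The key one-variable fact is that $z\mapsto-\log\sigma(z)=\log(1+e^{-z})$ is convex, since its second derivative equals $\sigma(z)\bigl(1-\sigma(z)\bigr)\ge 0$. Because the composition of a convex function with an affine map is convex, each summand is convex in $\theta$, and hence so is their average $f_{\text{\tiny DPO}}$.

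Next I would treat $f_{\text{\tiny SFT}}$. Under the softmax parameterization each summand of \eqref{eq:f2} expands as
\[
-\log\pi_\theta(y\mid x)=-\theta^\top\phi_{y,x}+\log\sum_{y'\in\mathcal{Y}}\exp\bigl(\theta^\top\phi_{y',x}\bigr).
\]
The first term is linear, hence convex. The second is the log-sum-exp function evaluated at the affine map $\theta\mapsto(\theta^\top\phi_{y',x})_{y'\in\mathcal{Y}}$; since log-sum-exp is a standard convex function and convexity is preserved under affine precomposition, this term is convex as well. A sum of convex functions is convex, so each summand is convex in $\theta$, and averaging over $\D_{\text{\tiny SFT}}$ yields convexity of $f_{\text{\tiny SFT}}$.

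I do not expect a genuine obstacle here: the only substantive observations are the affine-in-$\theta$ structure of $h_\beta$ and of the logits (both inherited from softmax) and the two elementary convexity facts for $\log(1+e^{-z})$ and for log-sum-exp, all of which are routine. I would also note explicitly that Assumption \ref{ass:features} is not actually needed for convexity per se; convexity holds for arbitrary (possibly unbounded) features, and the boundedness is invoked only in the companion gradient-moment bound of Proposition \ref{prop:bounded-stoch-grad}.
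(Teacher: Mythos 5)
Your proof is correct, and it takes a genuinely different route from the paper's. The paper argues at second order: it differentiates the per-sample gradients $g_{\text{\tiny DPO}}$ and $g_{\text{\tiny SFT}}$ in their softmax-simplified forms \eqref{eq:g1-simp} and \eqref{eq:g2-simp} and verifies that the resulting Hessians are positive semi-definite --- for DPO this is a nonnegative scalar multiple of the rank-one matrix $\left(\phi_{y_w,x}-\phi_{y_\ell,x}\right)\left(\phi_{y_w,x}-\phi_{y_\ell,x}\right)^\top$, and for SFT it is the covariance matrix of the features under the softmax distribution $p_{y,x}$, whose positive semi-definiteness is checked by a quadratic-form argument via Jensen's inequality. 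You instead work at the level of function composition: each DPO summand is the logistic loss $-\log\sigma(\cdot)$ composed with the affine map $h_\beta$, and each SFT summand is a linear term plus log-sum-exp composed with an affine map, so convexity follows from two standard convex atoms and the affine-precomposition rule. The two arguments are mathematically equivalent (your one-variable convexity facts are exactly the paper's Hessian computations, carried out once on canonical functions rather than on the composite losses), but yours is more modular and less error-prone; indeed, the paper's explicit DPO Hessian has a harmless slip --- the scalar factor should be $\beta^2\sigma(h_\beta)\left(1-\sigma(h_\beta)\right)$ rather than $\beta^2\sigma(h_\beta)$, which does not affect the sign conclusion --- of a kind your route avoids entirely. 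Your closing remark is also accurate: Assumption \ref{ass:features} is never actually invoked in either argument, since boundedness of the features plays no role in convexity and is needed only for the second-moment bound of Proposition \ref{prop:bounded-stoch-grad}.
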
 
\begin{proof}
    The goal of the proof is to show the Hessians of the objectives $f_{\text{DPO}}$ and $f_{\text{SFT}}$ are semi-positive definite, under Assumption \ref{ass:features} and LLMs (both trainable and reference) modeled using softmax parameterization. First, considering $f_{\text{DPO}}$, we can have
    \begin{align}
        \nabla_\theta g_{\text{\tiny DPO}}(\theta;x, y_w, y_\ell) &= -\nabla_\theta \beta (1 - \sigma\left( \beta(\theta - \theta_{\text{\rm ref}})^\top(\phi_{y_w, x}-\phi_{y_\ell, x}) \right)(\phi_{y_w, x}-\phi_{y_\ell, x}) \nonumber\\
        &= \beta^2 \sigma\left( h_\beta(\theta;x, y_w, y_\ell) \right) \left(\phi_{y_w, x}-\phi_{y_\ell, x} \right)\left(\phi_{y_w, x}-\phi_{y_\ell, x} \right)^\top \succeq 0,
    \end{align}
    where first equality is due to \eqref{eq:g1-simp}, and the semi-positive definiteness is due to the fact that $\beta>0$, $0\leq \sigma(z) \leq 1$ for all $z\in\R$, and $\left(\phi_{y_w, x}-\phi_{y_\ell, x} \right)\left(\phi_{y_w, x}-\phi_{y_\ell, x} \right)^\top\succeq 0 $ for all $\phi_{y_w, x}, \phi_{y_\ell, x}$. The convexity of $f_{\text{DPO}}$ follows from the fact that
    \begin{align}
        \nabla^2 f_{\text{\tiny DPO}}(\theta) = \E_{x, y_w, y_\ell\sim\D_{\text{\tiny DPO}}}\left[ \nabla_\theta g_{\text{\tiny DPO}}(\theta;x, y_w, y_\ell) \right].
    \end{align}
    Similarly, we can compute $\nabla_\theta g_{\text{\tiny SFT}}$ as
    \begin{align}
        \nabla_\theta g_{\text{\tiny SFT}}(\theta;x, y) &= -\nabla_\theta (\phi_{y, x} - \bar{\phi}_x(\theta)) \nonumber\\
        &= \nabla_\theta \frac{\sum_{y'\in\mathcal{Y}}\phi_{y', x}\exp(\theta^\top \phi_{y', x})}{\sum_{y'\in\mathcal{Y}}\exp(\theta^\top \phi_{y', x})} \nonumber\\
        &= \sum_{y'\in\mathcal{Y}}\frac{\phi_{y', x}\phi_{y', x}^\top\exp(\theta^\top \phi_{y', x})}{\sum_{y'\in\mathcal{Y}}\exp(\theta^\top \phi_{y', x})} \nonumber\\
        &~~- \left(\frac{\sum_{y'\in\mathcal{Y}}\phi_{y', x}\exp(\theta^\top \phi_{y', x})}{\sum_{y'\in\mathcal{Y}}\exp(\theta^\top \phi_{y', x})}\right)\left(\frac{\sum_{y'\in\mathcal{Y}}\phi_{y', x}\exp(\theta^\top \phi_{y', x})}{\sum_{y'\in\mathcal{Y}}\exp(\theta^\top \phi_{y', x})}\right)^\top,
    \end{align}
    where the first equality is due to \eqref{eq:g2-simp}. To establish the semi-positivedefiniteness of $\nabla_\theta g_{\text{\tiny SFT}}$, cosider any $v$ with same dimension as $\theta$, and let $p_{y,x} = \frac{\exp(\theta^\top \phi_{y, x})}{\sum_{y'\in\mathcal{Y}}\exp(\theta^\top \phi_{y', x})}$. Note that $p_{y,x}\geq 0$ for all $x, y$, and $\sum_{y\in\mathcal{Y}}p_{y,x}=1$ for all $x$. Then we can have
    \begin{align}
        &v^\top \Bigg(\sum_{y'\in\mathcal{Y}}\frac{\phi_{y', x}\phi_{y', x}^\top\exp(\theta^\top \phi_{y', x})}{\sum_{y'\in\mathcal{Y}}\exp(\theta^\top \phi_{y', x})} \nonumber \\
        &- \left(\frac{\sum_{y'\in\mathcal{Y}}\phi_{y', x}\exp(\theta^\top \phi_{y', x})}{\sum_{y'\in\mathcal{Y}}\exp(\theta^\top \phi_{y', x})}\right)\left(\frac{\sum_{y'\in\mathcal{Y}}\phi_{y', x}\exp(\theta^\top \phi_{y', x})}{\sum_{y'\in\mathcal{Y}}\exp(\theta^\top \phi_{y', x})}\right)^\top \Bigg) v \nonumber\\
        &= \sum_{y'\in\mathcal{Y}}(v^\top \phi_{y', x})^2 p_{y',x} - \left(\sum_{y'\in\mathcal{Y}}v^\top \phi_{y', x} p_{y',x} \right)^2 \geq 0,
    \end{align}
    where the last inequality is due to Jensen's inequality. This suggests that $\nabla_\theta g_{\text{\tiny SFT}}(\theta;x, y)\succeq 0$, and the convexity of of $f_{\text{SFT}}$ follows from the fact that
    \begin{align}
        \nabla^2 f_{\text{\tiny SFT}}(\theta) = \E_{x, y\sim\D_{\text{\tiny SFT}}}\left[ \nabla_\theta g_{\text{\tiny SFT}}(\theta;x, y) \right].
    \end{align}
    Note that when $f_{\text{\tiny DPO}}$ and $f_{\text{\tiny SFT}}$ are convex, $f_{\text{Mix}, \lambda}$ is also convex for all $\lambda\in[0,1]$.
\end{proof}

\subsection{Proof of Theorem \ref{thm:seq-lower-bound}} \label{app:proof-seq-lower-bound}

\begin{figure}[t]
    \centering
    \includegraphics[width=0.42\linewidth]{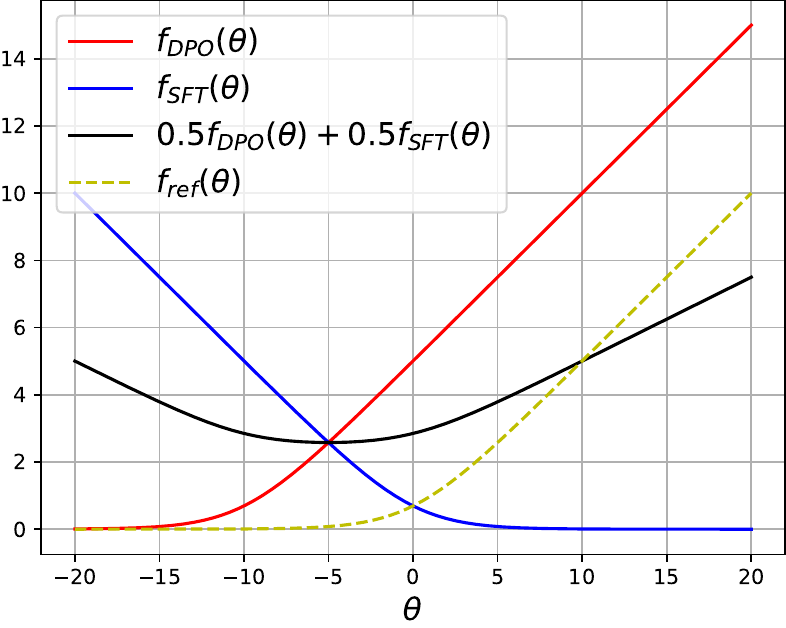}
    \includegraphics[width=0.45\linewidth]{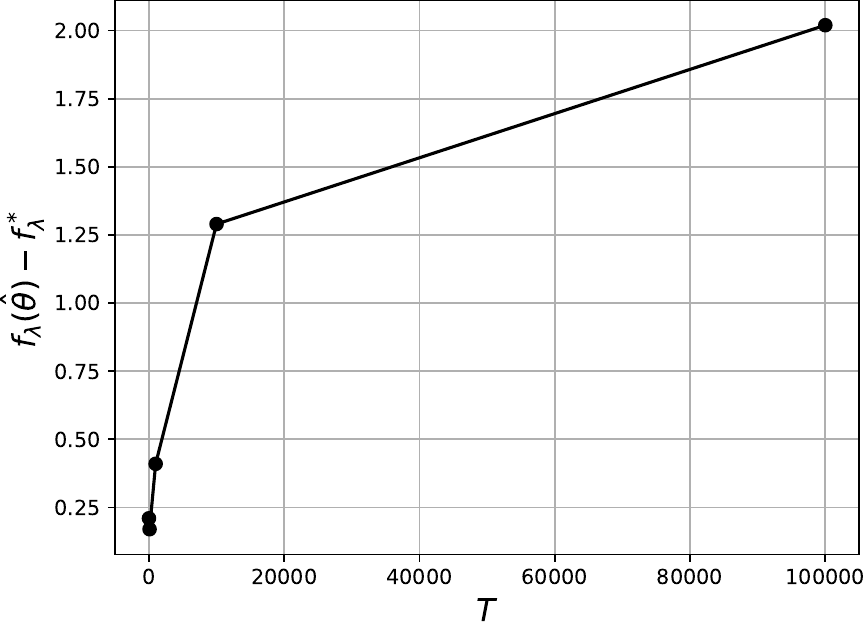}
    \caption{An Illustration of the example used for lower bound derivation in Theorem \ref{thm:seq-lower-bound}}
    \label{fig:lower-bound-illustration}
\end{figure}

\begin{proof}
    For deriving the lower bound given in \ref{thm:seq-lower-bound}, we consider the following problem setup that satisfies Assumption $\ref{ass:features}$. Let $\Theta$ be $\R$, and let $\D_{\text{\tiny DPO}}= \{ (x_1, y_w, y_\ell) \}$ and $\D_{\text{\tiny SFT}}= \{ (x_2, y) \}$ and consider only two possible outputs exist (i.e. binary classification) such that we have the specification in Table \ref{tab:lower-bound-data-spec}.
    \begin{table}[h!]
        \centering
        \caption{Data set specification for lower bound analysis example}
        \begin{tabular}{c|c|c}
            \hline
            Input & Output & Feature $\phi_{y,x}$\\
            \hline
             $x_1$& $y_w=1$  & $-1.0$\\
             $x_1$& $y_\ell=0$ & $-0.5$\\
             $x_2$& $y=0$ & $1.0$\\
             $x_2$& $y'=1$ & $0.5$\\
        \end{tabular}
        \label{tab:lower-bound-data-spec}
    \end{table}
    Note that the data point $y'$ is not used in training explicitly, and it is specified to enable the calculation of the output of $\pi_\theta$ with softmax parameterization in the SFT optimization stage. Based on this dataset, we can also define the dataset for reference policy objective as $\D_{\rm ref}= \{ (x_1, y_w) \}$, which has a similar optimization procedure as SFT. Before moving forward, we have to choose the reference policy $\pi_{\rm ref}$, or equivalently $\theta_{\rm ref}$. The objective to determine $\theta_{\rm ref}$ is given by
    \begin{align}
        \theta_{\rm ref}\in \arg\!\min_{\theta} f_{\rm ref}(\theta) \coloneqq -\log\pi_\theta(y_w~|~x_1),
    \end{align}
    which is graphically illustrated in Figure \ref{fig:lower-bound-illustration}. We choose $\theta_{\rm ref}=-5$ since this choice reasonably optimizes the objective.
    
    With this problem setup and choice of $\theta_{\rm ref}$, we can then derive the objectives and corresponding gradients using \eqref{eq:f1}, \eqref{eq:f2}, \eqref{eq:g1-simp}, and \eqref{eq:g2-simp} as
    \begin{align}
        f_{\text{\tiny DPO}}(\theta) &= \log \left( 1 + c \frac{1 + \exp(\theta/2)}{1+\exp(-\theta/2)} \right)\\
        f_{\text{\tiny SFT}}(\theta) &= \log(1 + \exp(-\theta/2))\\
        g_{\text{\tiny DPO}}(\theta) &= \frac{1}{2}\cdot \frac{1}{1 + \exp(-(\theta/2 + 5))}\\
        g_{\text{\tiny SFT}}(\theta) &= -\frac{1}{2}\cdot \frac{1}{1 + \exp(\theta/2)},
    \end{align}
    where $c=\frac{1+\exp(5)}{1+\exp(-5)}$. Choosing $\lambda=0.5$, we can numerically find an upper bound to $f^*_{\text{\tiny Mix}, \lambda}$ such that $f^*_{\text{\tiny Mix}, \lambda}\leq \frac{1}{2}\log c^*$ where $c^*=173.78$. With this, we can derive a lower bound to $G_{\text{Mix}, \lambda}(\theta)$ as
    \begin{align}\label{eq:Gmix lower bound}
        G_{\text{Mix}, \lambda}(\theta) &= f_{\text{\tiny Mix}, \lambda}(\theta) - f^*_{\text{\tiny Mix}, \lambda}\nonumber\\
        &\geq \frac{1}{2}\cdot\log \left( 1 + c \frac{1 + \exp(\theta/2)}{1+\exp(-\theta/2)} \right) + \frac{1}{2}\cdot\log(1 + \exp(-\theta/2)) - \frac{1}{2}\log c^* \nonumber\\
        &=\frac{1}{2} \log\left(\frac{1}{c^*}(1 + \exp(-\theta/2)) + \frac{c}{c^*}(1 + \exp(\theta/2)) \right).
    \end{align}

 When $\theta = 0$, the right hand side of \eqref{eq:Gmix lower bound} approximately equals $0.256843$. Since it is monotonically increasing for $\theta\in[0,\infty)$, we have $G_{\text{Mix}, \lambda}(\theta) \gtrsim 0.256843 = \Omega(1)$ when $\theta\geq 0$. 
Thus to prove the result, it is sufficient to show Algorithm \ref{algo:seq-rlhf-sft} will generate a $\hat{\theta}_{\text{Seq}}$ that is greater than $0$.

We have the first stage iterates:
\begin{align}\label{eq:first stage iterates}
    \theta_{t+1}^{1}=\theta_t^1-\alpha_t \frac{1}{1+\exp(-\frac{\theta_t^1}{2}-5)},~~\text{for }t=1,...,T-1.
\end{align}
Using the first stage's last iterate as initial point, we have the second stage iterates:
\begin{align}\label{eq:second stage iterates}
    \theta_{t+1}^{2,T}=\theta_t^{2,T}+\alpha_t \frac{1}{1+\exp(\frac{\theta_t^{2,T}}{2})},~~\text{for }t=1,...,T-1.
\end{align}
where $\theta_1^{2,T}=\theta_{T}^1$ and the superscript $T$ in $\theta_t^{2,T}$ indicates the max iteration index in the first stage.

Without loss of generality, we initialize $\theta^1_1 =0$. Then by \eqref{eq:first stage iterates} and \eqref{eq:second stage iterates}, we have
\begin{align}\label{eq:last iterate}
\hat{\theta}_{\text{Seq}} = \theta_{T}^{2,T} &=-\alpha_t \sum_{t=1}^{T-1} \frac{1}{1+\exp(-\frac{\theta_t^1+10}{2})} + \alpha_t  \sum_{t=1}^{T-1} \frac{1}{1+\exp(\frac{\theta_t^{2,T}}{2})}
\end{align}
We first prove the following lemma.
\begin{lemma}\label{lemma:sig sequence}
        Given a positive constant $c$ and a sequence $\{\alpha_t\}$, consider the iterates generated by
        \begin{equation}\label{eq:sig sequence}
            \theta_{t+1}=\theta_t+\alpha_t \mysig{c \theta_t},~~\theta_{t+1}'=\theta_t'+\alpha_t \mysig{c \theta_t'}
        \end{equation}
        If $\theta_1-\theta_1' \geq 0$ and  $\frac{c\alpha_t}{4} \leq 1$ for any $t$, then we have
        \begin{align}
            \theta_t - \theta_t' \geq (\theta_1-\theta_1')\Pi_{i=1}^{t-1}\Big(1-\frac{c \alpha_i}{4}\Big) , ~~\forall t. \nonumber
        \end{align}
    \end{lemma}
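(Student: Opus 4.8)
The plan is to track the single scalar quantity $d_t := \theta_t - \theta_t'$ and to show that it contracts by at most the advertised factor at each step. Subtracting the two update rules in the lemma statement gives the exact recursion $d_{t+1} = d_t + \alpha_t\big(\mysig{c\theta_t} - \mysig{c\theta_t'}\big)$, so the entire argument reduces to controlling the increment coming from the difference of the two sigmoid-type terms and then chaining the resulting per-step inequality into a product.

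First I would record two elementary facts about the map $s(z) := \mysig{cz}$. Since $c>0$ it is strictly decreasing, and writing $p(z) := s(z)$ its derivative is $s'(z) = -c\,p(z)\big(1-p(z)\big)$, which satisfies $-c/4 \le s'(z) \le 0$ because $p(1-p)\le 1/4$ for every $p\in[0,1]$. By the mean value theorem (equivalently, by integrating $s'$ over $[\theta_t',\theta_t]$), whenever $d_t \ge 0$ this yields the two-sided bound $-\tfrac{c}{4}\,d_t \le s(\theta_t)-s(\theta_t') \le 0$. Substituting the left-hand inequality into the recursion gives the key per-step estimate $d_{t+1} \ge d_t\big(1-\tfrac{c\alpha_t}{4}\big)$.

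The proof then closes by induction on $t$, carrying the hypothesis that $d_t \ge d_1 \prod_{i=1}^{t-1}\big(1-\tfrac{c\alpha_i}{4}\big) \ge 0$. The base case $t=1$ is the assumption $d_1\ge 0$ together with the empty product equal to $1$. For the inductive step, the hypothesis $\tfrac{c\alpha_t}{4}\le 1$ guarantees the factor $1-\tfrac{c\alpha_t}{4}$ is nonnegative, so multiplying the inductive lower bound on $d_t$ by this nonnegative factor preserves the inequality and produces $d_{t+1}\ge d_t\big(1-\tfrac{c\alpha_t}{4}\big) \ge d_1\prod_{i=1}^{t}\big(1-\tfrac{c\alpha_i}{4}\big)\ge 0$, which both advances the claim and re-establishes the nonnegativity needed for the next step.

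The only delicate point — and exactly where the two hypotheses enter — is sign management. The contraction estimate can be chained through the product only if (i) $d_t$ remains nonnegative, so that the two-sided sigmoid bound is valid and multiplying by $\big(1-\tfrac{c\alpha_t}{4}\big)$ does not flip the inequality, and (ii) each per-step factor is itself nonnegative, which is precisely what the step-size condition $\tfrac{c\alpha_t}{4}\le 1$ provides. Folding the nonnegativity of $d_t$ into the induction hypothesis disposes of both concerns at once, so I do not expect any further estimates to be required beyond the derivative bound $p(1-p)\le 1/4$.
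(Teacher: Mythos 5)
Your proof is correct and follows essentially the same route as the paper's: subtract the two recursions, apply the mean value theorem to $z \mapsto \mysig{cz}$ with the derivative bound $\sigma(1-\sigma) \le \tfrac14$ to get the per-step estimate $d_{t+1} \ge \bigl(1-\tfrac{c\alpha_t}{4}\bigr)d_t$, and close by induction, using the step-size condition to keep each factor nonnegative. The only (cosmetic) difference is that you fold the product lower bound and the nonnegativity of $d_t$ into a single induction hypothesis, whereas the paper first establishes $d_t \ge 0$ inductively and then chains the contraction inequality; the two organizations are equivalent.
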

    \begin{proof}
    We prove the result by induction.
    Assume $\theta_t-\theta_t' \geq 0$ for some $t$.
    We first have
    \begin{align}\label{eq:theta-theta'}
        \theta_{t+1} - \theta_{t+1}' 
        &= \theta_t -\theta_t' + \alpha_t \Big( \mysig{c\theta_t}-\mysig{c\theta_t'}\Big) 
    \end{align}
    With $\nabla_\theta \mysig{c\theta} = -c\sigma (-c\bar\theta )\big(1-\sigma(-c \theta) \big)$ where $\sigma$ is the sigmoid function, using the mean value theorem in \eqref{eq:theta-theta'}, we have for some $\bar{\theta}_t$ in between $\theta_t$ and $\theta_t'$ that
    \begin{align}
    \theta_{t+1} - \theta_{t+1}' 
        &= \theta_t - \theta_t' -c \alpha_t \sigma (-c\bar{\theta}_t )\big(1-\sigma(-c\bar{\theta}_t) \big)(\theta_t-\theta_t') \nonumber\\
        &\geq  \theta_t - \theta_t' - \frac{1}{4}c \alpha_t (\theta_t-\theta_t') \nonumber\\
        & = \Big(1-\frac{1}{4}c \alpha_t \Big)(\theta_t-\theta_t') \nonumber
    \end{align}
    where the inequality follows from $\sigma (-c\bar{\theta}_t )\big(1-\sigma(-c\bar{\theta}_t) \big) \leq \max_{x\in[0,1]}x(1-x)=\frac{1}{4}$ and the assumption that $\theta_t-\theta_t' \geq 0$. Since $\theta_1-\theta_1' \geq 0$, it follows from induction that $\theta_t-\theta_t' \geq 0$ for any $t$. Then recursively applying the last inequality completes the proof.
    \end{proof}
    Now we consider \eqref{eq:last iterate}. Rewriting \eqref{eq:first stage iterates} gives that $\theta_t^1$ is generated by
    \begin{align}\label{eq:first stage iterates rewrite}
        -(\theta_{t+1}^{1}+10)=-(\theta_t^1+10)+\alpha_t \frac{1}{1+\exp(-\frac{\theta_t^1+10}{2})},~~\text{for }t=1,...,T-1, \nonumber
    \end{align}
    and $\theta_t^{2,T}$ is generated by \eqref{eq:second stage iterates}. Thus $\theta_t^{2,T}$ and $-(\theta_t^{1}+10)$ are generated by \eqref{eq:sig sequence} with $c=1/2$. Assuming the step size $\alpha_t$ is proper that $\sum_{t=1}^\infty \alpha_t=\infty$, then there always exists $T^*$ that for $T\geq T^*$, we have $\theta_T^{1}$ is small enough such that $-(\theta_1^1+10)-\theta_1^{2,T}=-10-\theta_T^{1}\geq 0$. If $\alpha_t/8 \leq 1$, we have Lemma \ref{lemma:sig sequence} holds for sequences $\theta_t^{2,T}$ and $-(\theta_{t+1}^{1}+10)$, and we have
    \begin{equation}\label{eq:first difference}
        -(\theta_t^1+10)\geq\theta_t^{2,T}  \Rightarrow \mysig{\theta_t^{2,T}/2}-\mysig{-(\theta_t^1+10)/2} \geq 0.
    \end{equation}
     Using \eqref{eq:first difference} in \eqref{eq:last iterate}, we have $\hat{\theta}_{\text{Seq}}\geq 0$. This completes the proof.
     \end{proof}

\begin{remark} We provide some intuition on why we can obtain this suboptimality result for the sequential method, how the proposed methods can overcome the suboptimality, and evidence for causes of suboptimality in real-world post-training tasks:

\textbf{Cause of suboptimality.} The key idea behind the suboptimality result is constructing two conflicting objectives for DPO and SFT. To this end, we exploit the fact that DPO and SFT datasets can be diverse, and they can have inconsistencies in labelling. Specifically, in our lower bound analysis, we selected DPO and SFT inputs (by choosing the features $\phi_{x, y}$) and corresponding labels $y_w$, $y_\ell$, and  $y$ such that the DPO and SFT objectives induced by these datasets conflict with each other (Table \ref{tab:lower-bound-data-spec}). Note that in this example both inputs and labels are different, thus there is no explicit labeling inconsistency. More severely conflicting DPO and SFT objectives can be constructed by to fixing the input (and hence the corresponding feature $\phi_{x, y}$) for both DPO and SFT datasets and assign conflicting labels to the chosen response for the DPO input and the target response for the SFT input.  However, we used a more general example in our lower bound analysis, which shows that even when there is no explicit labeling inconsistency (e.g., when the input data for the two datasets are different), the sequential approach can still fail.

\textbf{How proposed methods can overcome suboptimality?}  The strength of the proposed alternating approaches over the sequential approach is not its ability to explicitly correct any diversity in the datasets or inconsistency in labeling. Rather, it lies in their ability to navigate the loss landscape of both fine-tuning and alignment objectives, reasonably optimizing both. Given two (possibly conflicting) datasets and a pretrained LLM, our proposed methods can achieve a model that is most reasonably optimal for both SFT and alignment datasets, with a controllable trade-off between them.

\textbf{Data diversity and labeling inconsistency in real-world data.} We investigated the data diversity and labeling inconsistencies in our real-world experiments (e.g., Figures \ref{fig:dpo-sft-loss-space-comp} and \ref{fig:sft-dpo-loss-space-comp}) using the datasets \textsc{alpaca-gpt4} for SFT and \textsc{rm-hh-rlhf} for DPO. Specifically, we visualized:
\begin{itemize}
    \item cosine similarities between the closest input pairs $(x^{(i)}_ {\scalebox{0.5}{DPO}}, x^{(i)}_ {\scalebox{0.5}{SFT}})$ from the two datasets;
    \item cosine similarities between chosen/rejected responses for the DPO input and the target response for the corresponding SFT input ($(y^{(i)}_w, y^{(i)})$/$(y^{(i)}_\ell, y^{(i)})$); and,
    \item cosine similarities between chosen/rejected responses of the DPO input $(y^{(i)}_w, y^{(i)}_\ell)$. Here, the superscript $(i)$ denotes the ranking of the DPO-SFT input pair based on their similarity, which was measured at the token level.
\end{itemize}

The visualization is given in Figure \ref{fig:data-div-viz}. It can be observed that most DPO and SFT input pairs do not exhibit very strong cosine similarity (falling in the range of approximately 0.57–0.78). Observing the corresponding inputs, the similarity seems to come from mainly the sentence structure, rather than the underlying meaning. Thus, there is some inherent dissimilarity in the inputs themselves. Furthermore, for some similar input pairs $(x^{(i)}_ {\scalebox{0.5}{DPO}}$, $x^{(i)}_ {\scalebox{0.5}{SFT}})$, the corresponding $(y^{(i)}_w, y^{(i)})$ similarity is close to zero, which might indicate a conflict in labeling. This suggests that there is some inconsistency in labeling in the real-world setup, although it is not as conclusive or adversarial as in the toy example used for the lower bound analysis. While here we use a token level comparison using cosine similarity,  A deeper investigation into these data-level inconsistencies would be valuable, particularly in addressing them during data preprocessing or selection to enhance the optimization process.

\begin{figure}
    \centering
    \includegraphics[width=\linewidth]{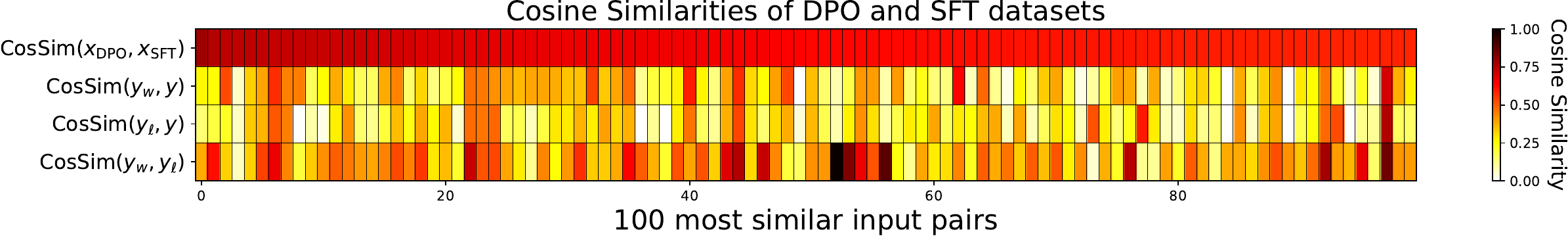}
        \vspace{-0.5cm}
    \caption{Visualization of data diversity and labeling inconsistencies in DPO (\textsc{rm-hh-rlhf}) and SFT (\textsc{alpaca-gpt4}) datasets.}
    \label{fig:data-div-viz}
\end{figure}

\end{remark}

\subsection{Proof of Theorem \ref{thm:alt-upper-bound}} \label{app:proof-alt-upper-bound}

In this section, we provide the proof for Theorem \ref{thm:alt-upper-bound}. First, we provide some useful results that will later be used in the main proof. For conciseness, we denote $g_{\text{\tiny DPO}}(\theta_t;x^t, y_w^t, y_\ell^t)$ and $g_{\text{\tiny SFT}}(\theta_t;x^t, y^t)$ by $g_{\text{\tiny DPO}, t}$ and $g_{\text{\tiny SFT}, t}$, respectively. 

\begin{lemma}[\citep{rockafellar1970convex} Theorem 25.1 and Corollary 25.11]\label{lem:convex}
    Consider any convex differentiable function $f$ and let $\theta\in\Theta$. Then, for any $\theta'\in\Theta$, we have
    \begin{align}
        f(\theta') \geq f(\theta) + \nabla f(\theta)^\top(\theta' - \theta).
    \end{align}
\end{lemma}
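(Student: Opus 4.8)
The plan is to derive the first-order (tangent-plane) characterization of convexity directly from the definition of a convex function, so that nothing beyond differentiability is required. Since this is a classical fact (hence the citation to \citet{rockafellar1970convex}), one route is simply to invoke the reference; but a short self-contained argument follows from taking a one-sided directional limit of the secant inequality that convexity supplies.

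First I would fix $\theta, \theta' \in \Theta$ and, using that $\Theta$ is convex so the segment between them stays in the domain, write the convexity inequality at the interpolated point $\theta + t(\theta' - \theta) = (1-t)\theta + t\theta'$ for $t \in (0,1]$:
\begin{align}
    f\big(\theta + t(\theta' - \theta)\big) \leq (1-t)f(\theta) + t f(\theta').\nonumber
\end{align}
Subtracting $f(\theta)$ and dividing by $t>0$ gives the difference-quotient bound
\begin{align}
    \frac{f\big(\theta + t(\theta' - \theta)\big) - f(\theta)}{t} \leq f(\theta') - f(\theta).\nonumber
\end{align}
Then I would let $t \to 0^+$: by differentiability of $f$, the left-hand side converges to the directional derivative of $f$ at $\theta$ along $\theta' - \theta$, which equals $\nabla f(\theta)^\top(\theta' - \theta)$. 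Passing to the limit preserves the inequality, yielding $\nabla f(\theta)^\top(\theta' - \theta) \leq f(\theta') - f(\theta)$, which rearranges to the claim $f(\theta') \geq f(\theta) + \nabla f(\theta)^\top(\theta' - \theta)$.

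There is essentially no substantive obstacle; the only points needing care are (i) ensuring $\Theta$ is convex so the interpolants $\theta + t(\theta' - \theta)$ remain feasible, and (ii) justifying the limit of the difference quotient. For (ii) I would note that the secant quotient is monotonically nondecreasing in $t$ (a standard consequence of convexity), so the limit as $t \to 0^+$ exists and equals the infimum of the quotients; differentiability then identifies this one-sided limit with $\nabla f(\theta)^\top(\theta' - \theta)$, and the inequality is inherited in the limit. Since the lemma is used downstream only as an auxiliary tool, the cleanest presentation is to cite the stated results of \citet{rockafellar1970convex}, with the limiting argument above serving as the brief justification.
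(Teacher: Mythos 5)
Your proposal is correct. The paper provides no proof of this lemma at all---it is stated purely as a citation to \citet{rockafellar1970convex}---so there is no internal argument to compare against; your derivation (secant inequality from convexity, divide by $t$, pass to the one-sided limit and identify it with $\nabla f(\theta)^\top(\theta'-\theta)$ via differentiability) is the standard proof of the first-order characterization of convexity and is exactly what underlies the cited result. Your two points of care are also handled appropriately: in the paper's setting the objectives are defined on all of $\mathbb{R}^d$ under the softmax parameterization, so feasibility of the interpolants $\theta + t(\theta'-\theta)$ is immediate, and monotonicity of the difference quotient (or differentiability alone) justifies the limit.
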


\begin{lemma}[\citep{orabona2020lastiterate} Lemma 1.]\label{lem:orabona} Let $\{\eta_t\}_{t=1}^{T}$ be a non-increasing sequence of positive numbers and $q_t\geq 0$ for all $t=1, \dots, T$. Then
\begin{align}\label{eq:orabona-lemma}
    n_Tq_T \leq \frac{1}{T}\sum_{t=1}^{T}\eta_t q_t + \sum_{k=1}^{T-1}\frac{1}{k(k+1)}\sum_{t=T-k+1}^T \eta_t (q_t - q_{T-k} )
\end{align}
\end{lemma}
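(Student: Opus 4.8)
The statement is a purely algebraic inequality about the two sequences $\{\eta_t\}$ and $\{q_t\}$, so the plan is to prove it by a telescoping-average argument, with the monotonicity of $\{\eta_t\}$ and the sign of $\{q_t\}$ invoked only at the very last step. The organizing idea is to introduce the trailing weighted averages
\begin{align}
    S_k \coloneqq \frac{1}{k}\sum_{t=T-k+1}^{T}\eta_t q_t, \qquad k = 1, \dots, T, \nonumber
\end{align}
and observe that the two quantities appearing in the claim are just the two endpoints of this family: the left-hand side is $\eta_T q_T = S_1$, and the first term on the right-hand side is $\frac{1}{T}\sum_{t=1}^{T}\eta_t q_t = S_T$. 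Thus the whole inequality is an assertion comparing $S_1$ to $S_T$ plus correction terms.

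The first step is to write $S_1$ as $S_T$ plus a telescoping sum, $\eta_T q_T = S_1 = S_T + \sum_{k=1}^{T-1}(S_k - S_{k+1})$, which is an exact identity. The second step is to compute each increment $S_k - S_{k+1}$ in closed form. Writing $A_k = \sum_{t=T-k+1}^{T}\eta_t q_t$ so that $A_{k+1} = A_k + \eta_{T-k}q_{T-k}$, a short manipulation using $\frac{1}{k}-\frac{1}{k+1} = \frac{1}{k(k+1)}$ and the fact that the index range $T-k+1,\dots,T$ has exactly $k$ elements yields the identity
\begin{align}
    S_k - S_{k+1} = \frac{1}{k(k+1)}\sum_{t=T-k+1}^{T}\big(\eta_t q_t - \eta_{T-k}q_{T-k}\big). \nonumber
\end{align}
This is the heart of the argument and is where the exact constant $\frac{1}{k(k+1)}$ in the target inequality originates.

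The final step is to replace the reference term $\eta_{T-k}q_{T-k}$ by $\eta_t q_{T-k}$ inside the summand. The key observation, which I expect to be the only place where the hypotheses are used, is the term-by-term bound
\begin{align}
    \big(\eta_t q_t - \eta_{T-k}q_{T-k}\big) - \eta_t\big(q_t - q_{T-k}\big) = q_{T-k}\big(\eta_t - \eta_{T-k}\big) \leq 0, \nonumber
\end{align}
which holds because every index $t$ in the range satisfies $t > T-k$, so $\eta_t \leq \eta_{T-k}$ by the non-increasing assumption, while $q_{T-k}\geq 0$. Substituting this bound into the expression for $S_k - S_{k+1}$ and summing over $k = 1, \dots, T-1$ reproduces exactly the right-hand side of \eqref{eq:orabona-lemma}, completing the proof. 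The argument involves no difficult estimates; the main obstacle is purely bookkeeping, namely getting the telescoping increment $S_k - S_{k+1}$ and its index ranges correct so that the combinatorial factor $k(k+1)$ and the reference index $T-k$ align precisely with the statement.
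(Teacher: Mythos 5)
Your proposal is correct, and every step checks out: the telescoping identity $S_1 = S_T + \sum_{k=1}^{T-1}(S_k - S_{k+1})$, the closed form $S_k - S_{k+1} = \frac{1}{k(k+1)}\sum_{t=T-k+1}^{T}\left(\eta_t q_t - \eta_{T-k} q_{T-k}\right)$, and the term-by-term bound using $\eta_t \leq \eta_{T-k}$ and $q_{T-k}\geq 0$ are all exactly right. Note that the paper itself gives no proof of this statement --- it imports it verbatim as Lemma 1 of the cited work of Orabona and P\'al --- and your argument via the trailing weighted averages $S_k$ is precisely the standard proof of that cited lemma, so you have faithfully reconstructed the missing argument rather than found a new route. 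One cosmetic point: the left-hand side ``$n_T q_T$'' in the paper's statement is a typo for $\eta_T q_T$, which your proof correctly treats as such.
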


\begin{lemma}\label{lem:one-step-sgd} Consider iterates $\theta_t$ and $\theta_{t+1}$ generated by Algorithm \ref{algo:joint-rlhf-sft} with ALRIGHT strategy for $t\in\{1, \dots, T-1\}$. Then for any $\theta'\in\Theta$ and $\lambda\in[0,1]$, we have 
\begin{align}\label{eq:one-step-sgd}
     \E\left[ f_{\text{Mix}, \lambda}(\theta_t) - f_{\text{Mix}, \lambda}(\theta') \right] \leq \frac{1}{2\alpha_t} \E\left[ \Vert \theta_t -\theta' \Vert^2 \right] - \frac{1}{2\alpha_t} \E \left[ \Vert \theta_{t+1} -\theta' \Vert^2\right] + \frac{\alpha_t}{2} M^2_\lambda,
\end{align}
where $M_\lambda=\lambda M_1 + (1-\lambda) M_2$, and $M_1,M_2$ are as defined in Proposition \ref{prop:bounded-stoch-grad}.
\end{lemma}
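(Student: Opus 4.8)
The plan is to establish the standard one-step descent inequality for projected stochastic gradient descent, specialized to the alternating update in Algorithm \ref{algo:alt-rlhf-sft}. The key observation is that at iteration $t$, conditioned on the past, the algorithm draws $i_t\sim\text{Bern}(\lambda)$ and then takes a single stochastic gradient step using either $g_{\text{\tiny DPO},t}$ or $g_{\text{\tiny SFT},t}$. Denote this realized update direction by $g_t$, so that $\theta_{t+1}=\Pi_\Theta(\theta_t-\alpha_t g_t)$. First I would note that, conditioned on $\theta_t$, the expected update direction is exactly the gradient of the mixture objective: $\E[g_t\mid\theta_t]=\lambda\,\nabla f_{\text{\tiny DPO}}(\theta_t)+(1-\lambda)\nabla f_{\text{\tiny SFT}}(\theta_t)=\nabla f_{\text{Mix},\lambda}(\theta_t)$, using the unbiasedness of $g_{\text{\tiny DPO},t}$ and $g_{\text{\tiny SFT},t}$ for their respective full gradients (stated in the preliminaries) together with the independent Bernoulli draw.

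Next I would carry out the usual expansion. Using nonexpansiveness of the Euclidean projection $\Pi_\Theta$ onto the convex set $\Theta$ and the fact that $\theta'\in\Theta$, we get
\begin{align}
    \Vert\theta_{t+1}-\theta'\Vert^2
    &\leq \Vert\theta_t-\alpha_t g_t-\theta'\Vert^2 \nonumber\\
    &= \Vert\theta_t-\theta'\Vert^2 - 2\alpha_t\, g_t^\top(\theta_t-\theta') + \alpha_t^2\Vert g_t\Vert^2. \nonumber
\end{align}
Rearranging for the inner product term and taking conditional expectation given $\theta_t$ replaces $g_t$ by $\nabla f_{\text{Mix},\lambda}(\theta_t)$ in the cross term, giving
\begin{align}
    \nabla f_{\text{Mix},\lambda}(\theta_t)^\top(\theta_t-\theta')
    &\leq \frac{1}{2\alpha_t}\big(\Vert\theta_t-\theta'\Vert^2 - \E[\Vert\theta_{t+1}-\theta'\Vert^2\mid\theta_t]\big) \nonumber\\
    &\quad + \frac{\alpha_t}{2}\,\E[\Vert g_t\Vert^2\mid\theta_t]. \nonumber
\end{align}
Then I would invoke convexity of $f_{\text{Mix},\lambda}$ (Proposition \ref{prop:convex}) via the first-order inequality in Lemma \ref{lem:convex}, so that $f_{\text{Mix},\lambda}(\theta_t)-f_{\text{Mix},\lambda}(\theta')\leq \nabla f_{\text{Mix},\lambda}(\theta_t)^\top(\theta_t-\theta')$, which lets me replace the inner product on the left by the suboptimality gap.

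The remaining piece is to bound the second-moment term $\E[\Vert g_t\Vert^2\mid\theta_t]$. Since $g_t=g_{\text{\tiny DPO},t}$ with probability $\lambda$ and $g_t=g_{\text{\tiny SFT},t}$ otherwise, and using Proposition \ref{prop:bounded-stoch-grad} which bounds $\E\Vert g_{\text{\tiny DPO},t}\Vert^2\leq M_1^2$ and $\E\Vert g_{\text{\tiny SFT},t}\Vert^2\leq M_2^2$, I would get a bound of $\lambda M_1^2+(1-\lambda)M_2^2$. The only subtlety is matching the target constant $M_\lambda^2=(\lambda M_1+(1-\lambda)M_2)^2$: since $\lambda M_1^2+(1-\lambda)M_2^2\leq(\lambda M_1+(1-\lambda)M_2)^2$ fails in general (Jensen goes the wrong way), the cleaner route is to bound $\Vert g_t\Vert$ deterministically by $\max\{M_1,M_2\}$ or, more directly, to note that for either realized branch $\Vert g_t\Vert\leq \lambda M_1+(1-\lambda)M_2=M_\lambda$ need not hold pointwise, so I expect the intended argument bounds the squared norm using the almost-sure bounds $\Vert g_{\text{\tiny DPO},t}\Vert\le M_1$ and $\Vert g_{\text{\tiny SFT},t}\Vert\le M_2$ from \eqref{eq:g1-norm} and \eqref{eq:g2-norm} and then controls the mixture second moment accordingly; reconciling this squared-norm constant with the stated $M_\lambda^2$ is the one step I would check most carefully. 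Finally, taking total expectation over the history and using the tower property converts the conditional statement into the unconditional bound \eqref{eq:one-step-sgd}, completing the proof.
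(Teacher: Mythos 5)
Your proposal follows the paper's own proof essentially line by line: nonexpansiveness of the projection, expansion of the squared distance, conditional expectation (over both the data draw and $i_t$) to turn the indicator-weighted update direction into $\nabla f_{\text{Mix},\lambda}(\theta_t)$, the first-order convexity inequality (Lemma \ref{lem:convex}, justified by Proposition \ref{prop:convex}), and the tower property at the end. The one step you flagged as needing care is, in fact, a genuine slip in the paper's proof rather than a gap in yours: because $\I_{i_t=1}\I_{i_t=0}=0$, the squared update norm splits exactly, and conditioning gives the bound $\lambda M_1^2+(1-\lambda)M_2^2$ on the second-moment term; the paper then silently replaces this by $M_\lambda^2=\left(\lambda M_1+(1-\lambda)M_2\right)^2$, which is invalid in general since
\begin{align}
\lambda M_1^2+(1-\lambda)M_2^2-\left(\lambda M_1+(1-\lambda)M_2\right)^2=\lambda(1-\lambda)(M_1-M_2)^2\geq 0,\nonumber
\end{align}
i.e., Jensen runs in the opposite direction, exactly as you observed. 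The repair is harmless: state the lemma with $M_\lambda^2$ redefined as $\lambda M_1^2+(1-\lambda)M_2^2$ (or bound it crudely by $\max\{M_1^2,M_2^2\}$), and propagate that constant into Theorem \ref{thm:alt-upper-bound}; the $\mathcal{O}\left(\log T/\sqrt{T}\right)$ rate is unaffected. So your argument is correct once you commit to the corrected constant, and it is essentially the paper's argument — with the added value of catching a constant-level error the paper overlooked.
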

\begin{proof}
    Considering Algorothm \ref{algo:joint-rlhf-sft} with ALRIGHT strategy, for any $\theta'\in\Theta$, we can have
    \begin{align}
        \Vert \theta_{t+1} - \theta' \Vert^2 - \Vert \theta_t - \theta' \Vert^2 &\leq \Vert \theta_t - \alpha_t (\I_{i_t=1} g_{\text{\tiny DPO}, t} + \I_{i_t=0} g_{\text{\tiny SFT}, t} ) -\theta' \Vert^2 - \Vert \theta_t - \theta'\Vert^2 \nonumber \\
        &= - 2\alpha_t (\I_{i_t=1} g_{\text{\tiny DPO}, t} + \I_{i_t=0} g_{\text{\tiny SFT}, t} )^\top (\theta_t - \theta') \nonumber\\
        &~~+ \alpha^2_t \Vert \I_{i_t=1} g_{\text{\tiny DPO}, t} + \I_{i_t=0} g_{\text{\tiny SFT}, t} \Vert^2 \nonumber\\
        &= - 2\alpha_t (\I_{i_t=1} g_{\text{\tiny DPO}, t} + \I_{i_t=0} g_{\text{\tiny SFT}, t} )^\top (\theta_t - \theta') \nonumber\\
        &~~+ \alpha^2_t \left( \I_{i_t=1}\Vert g_{\text{\tiny DPO}, t} \Vert^2 + \I_{i_t=0} \Vert g_{\text{\tiny SFT}, t} \Vert^2 \right)
    \end{align}
    Taking conditional expectation $\E[ \cdot ~|~ \theta_t ]$ (over randomness of datapoints and $i_t$) in both sides of above inequality, we obtain
    \begin{align}\label{eq:lem-one-step-sgd-temp}
        \E \left[\Vert \theta_{t+1} - \theta' \Vert^2 ~|~ \theta_t \right] - \Vert \theta_t - \theta' \Vert^2 &\leq -2\alpha_t( \lambda \nabla f_{\text{\tiny DPO}}(\theta_t) + (1-\lambda) \nabla f_{\text{\tiny SFT}}(\theta) )^\top (\theta_t - \theta') + \alpha^2_t M^2_\lambda \nonumber\\
        &=-2\alpha_t \nabla f_{\text{Mix}, \lambda}(\theta_t)^\top (\theta_t - \theta') +\alpha^2_t M^2_\lambda \nonumber\\
        &\leq -2\alpha_t \left( f_{\text{Mix}, \lambda}(\theta_t) - f_{\text{Mix}, \lambda}(\theta') \right)+\alpha^2_t M^2_\lambda,
    \end{align}
    where the first inequality is using the definitions of $g_{\text{\tiny DPO}, t}$,$ g_{\text{\tiny SFT}, t}$, and $M_\lambda$, equality is by the defintion of $f_{\text{Mix}, \lambda}$, and the last inequality is due to Lemma \ref{lem:convex}. The result follows from taking total expectation in both sides and rearranging the inequality \eqref{eq:lem-one-step-sgd-temp}.
\end{proof}

With the above results, we are ready to prove Theorem \ref{thm:alt-upper-bound}.

\begin{theorem}[Theorem \ref{thm:alt-upper-bound} Restated with Additional Details] Consider Algorithm \ref{algo:joint-rlhf-sft} with ALRIGHT strategy and $\alpha_t = \alpha_0/\sqrt{T}$ for all $t\in\{1, \dots, T\}$ and $\alpha_0>0$. Then, under Assumption \ref{ass:features}, for any $\lambda\in[0,1]$, we have 

\begin{align}
     \E[G_{\text{Mix}, \lambda}(\theta_T)] \leq \frac{\alpha_0}{2\sqrt{T}}\Vert \theta_1 - \theta^*_{\text{Mix}, \lambda} \Vert^2  +\frac{(2+\log(T-1))M_\lambda^2 \alpha_0}{2 \sqrt{T}}
\end{align}
where $\theta^*_{\text{Mix}, \lambda} \in \arg\!\min_{\theta\in\Theta} f_{\text{Mix}, \lambda}(\theta)$, and $M_\lambda=\lambda M_1 + (1-\lambda) M_2$ with $M_1,M_2$ as defined in Assumption \ref{prop:bounded-stoch-grad}.
\end{theorem}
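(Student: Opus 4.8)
The plan is to establish a last-iterate convergence bound for the alternating scheme by combining the one-step descent inequality of Lemma \ref{lem:one-step-sgd} with the last-iterate conversion lemma of Lemma \ref{lem:orabona}. The central quantity is $q_t := \E[f_{\text{Mix}, \lambda}(\theta_t) - f^*_{\text{Mix}, \lambda}]$, which is non-negative because $f^*_{\text{Mix}, \lambda}$ is a minimum; this non-negativity is precisely the hypothesis required by Lemma \ref{lem:orabona}. I would take the weights $\eta_t = \alpha_t = \alpha_0/\sqrt{T}$, which are constant and hence trivially non-increasing, so that Lemma \ref{lem:orabona} applies and gives $\eta_T q_T \leq \tfrac{1}{T}\sum_{t=1}^T \eta_t q_t + \sum_{k=1}^{T-1}\tfrac{1}{k(k+1)}\sum_{t=T-k+1}^T \eta_t(q_t - q_{T-k})$. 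The two summations on the right are then bounded separately, using convexity (Proposition \ref{prop:convex}, through Lemma \ref{lem:convex}) and the bounded-second-moment constant $M_\lambda = \lambda M_1 + (1-\lambda)M_2$ (Proposition \ref{prop:bounded-stoch-grad}).

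First I would control the ergodic term $\tfrac{1}{T}\sum_t \eta_t q_t$. Instantiating Lemma \ref{lem:one-step-sgd} with the fixed anchor $\theta' = \theta^*_{\text{Mix}, \lambda}$ and summing over $t = 1, \dots, T$, the squared-distance terms telescope and the terminal negative term is discarded, yielding $\sum_{t=1}^T \alpha q_t \leq \tfrac{1}{2}\Vert \theta_1 - \theta^*_{\text{Mix}, \lambda}\Vert^2 + \tfrac{T\alpha^2}{2}M_\lambda^2$. Dividing by $T$ gives the standard average-iterate estimate; after the final division by $\eta_T = \alpha$ and the substitution $\alpha = \alpha_0/\sqrt{T}$, this accounts for the initialization term and one $M_\lambda^2/\sqrt{T}$ contribution in the stated bound.

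The more delicate part is bounding the difference sums $\sum_{t=T-k+1}^T(q_t - q_{T-k})$. Here I would invoke Lemma \ref{lem:one-step-sgd} again, but now with the \emph{random} anchor $\theta' = \theta_{T-k}$. The subtlety is that Lemma \ref{lem:one-step-sgd} is proved by taking a conditional expectation given $\theta_s$, so it remains valid whenever $\theta'$ is measurable with respect to the information available at time $s$; since $\theta_{T-k}$ is determined by the iterations up to index $T-k \leq s$, it is a legitimate anchor for every $s \geq T-k$. Summing the resulting inequalities over $s$ starting at $T-k$, the squared-distance terms telescope to $\E\Vert \theta_{T-k} - \theta_{T-k}\Vert^2 = 0$ minus a non-negative quantity, leaving a clean bound of order $k\alpha M_\lambda^2$ on the difference sum. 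Feeding this into the double sum of Lemma \ref{lem:orabona}, the factor $\tfrac{1}{k(k+1)}$ against an $O(k)$ difference bound collapses to $\sum_{k=1}^{T-1}\tfrac{1}{k+1} = O(\log T)$, which is exactly the origin of the $\log T$ appearing in the final rate.

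Finally I would collect the two contributions, divide through by $\eta_T = \alpha = \alpha_0/\sqrt{T}$, and substitute the step size to reach the claimed $\mathcal{O}(\log T/\sqrt{T})$ bound with the explicit constants. I expect the difference-sum estimate to be the main obstacle: one must argue carefully that the random anchor $\theta_{T-k}$ is admissible in the conditional form of Lemma \ref{lem:one-step-sgd}, and then track which telescoped boundary terms are non-negative (and hence safely discarded) so that only the benign $O(k\alpha M_\lambda^2)$ term survives. Handling the boundary index at $t = T$ correctly, given that the one-step inequality is only available for $s \leq T-1$, is the fiddly technical point that requires the most care.
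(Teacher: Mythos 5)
Your proposal is correct and follows essentially the same route as the paper's proof: the same combination of the last-iterate conversion lemma (Lemma~\ref{lem:orabona} with constant weights $\eta_t=\alpha$) and the one-step inequality of Lemma~\ref{lem:one-step-sgd}, applied once with the fixed anchor $\theta'=\theta^*_{\text{Mix},\lambda}$ for the ergodic term and once with the random anchor $\theta'=\theta_{T-k}$ for the difference sums, with the $\tfrac{1}{k(k+1)}\cdot O(k)$ collapse producing the $\log T$ factor exactly as in the paper. Your explicit attention to the measurability of the anchor $\theta_{T-k}$ and to the boundary index at $t=T$ addresses points the paper itself passes over silently, so no gap remains.
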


\begin{proof}
    Substituting $\eta_t=\alpha$ and $q_t = G_{\text{Mix}, \lambda}(\theta_t)$ in \eqref{eq:orabona-lemma} (Lemma \ref{lem:orabona}), we have
    \begin{align}\label{eq:gmixT}
         G_{\text{Mix}, \lambda}(\theta_T) &\leq \frac{1}{T}\sum_{t=1}^T G_{\text{Mix}, \lambda}(\theta_t) + \sum_{k=1}^{T-1}\frac{1}{k(k+1)} \sum_{t=T-k+1}^T \left( G_{\text{Mix}, \lambda}(\theta_t) - G_{\text{Mix}, \lambda}(\theta_{T-k}) \right) \nonumber\\
        &= \frac{1}{T}\sum_{t=1}^T G_{\text{Mix}, \lambda}(\theta_t) + \sum_{k=1}^{T-1}\frac{1}{k(k+1)} \sum_{t=T-k+1}^T  \left( f_{\text{Mix}, \lambda}(\theta_t) - f_{\text{Mix}, \lambda}(\theta_{T-k}) \right),
    \end{align}
    where we have used the definition of $G_{\text{Mix}, \lambda}$ in the equality. Taking total expectation on both sides of \eqref{eq:gmixT}, we get
    \begin{align}\label{eq:gmixT 2}
        \E[G_{\text{Mix}, \lambda}(\theta_T)] &\!\leq \!\frac{1}{T}\sum_{t=1}^T\E[G_{\text{Mix}, \lambda}(\theta_t)] \!+ \!\sum_{k=1}^{T-1}\!\frac{1}{k(k+1)} \!\!\sum_{t=T-k+1}^T \!\!\!\E\left[f_{\text{Mix}, \lambda}(\theta_t) \!-\! f_{\text{Mix}, \lambda}(\theta_{T-k})\right]
    \end{align}
    
     The first term in the right hand side of \eqref{eq:gmixT 2} can be bounded by choosing $\theta'=\theta^*_{\text{Mix}, \lambda}$ and $\alpha_t=\alpha$ in \eqref{eq:one-step-sgd} from Lemma \ref{lem:one-step-sgd} and then taking a telescoping sum:
    \begin{align}\label{eq:gmixT 2 first term}
        \sum_{t=1}^T \E[G_{\text{Mix}, \lambda}(\theta_t)] &\leq \frac{1}{2\alpha}\Vert \theta_1 - \theta^*_{\text{Mix}, \lambda} \Vert^2 -\frac{1}{2\alpha }\Vert \theta_{T+1} - \theta^*_{\text{Mix}, \lambda} \Vert^2+ T\frac{M^2_\lambda \alpha}{2} \nonumber\\
        &\leq \frac{1}{2\alpha}\Vert \theta_1 - \theta^*_{\text{Mix}, \lambda} \Vert^2  + T\frac{M^2_\lambda \alpha}{2}
    \end{align}
    Now we consider the second term in the right hand side of \eqref{eq:gmixT 2}. We first have
    \begin{align}\label{eq:gmixT 2 second term 1}
        \sum_{t=T-k+1}^T \E\left[f_{\text{Mix}, \lambda}(\theta_t) - f_{\text{Mix}, \lambda}(\theta_{T-k})\right] 
        &=  \sum_{t=T-k}^T \E\left[f_{\text{Mix}, \lambda}(\theta_t) - f_{\text{Mix}, \lambda}(\theta_{T-k})\right] \nonumber\\
        &\leq  (k+1)\frac{M_\lambda^2 \alpha}{2}
    \end{align}
    where the inequality follows from setting $\theta'=\theta_{T-k}$, $\alpha_t=\alpha$ in \eqref{eq:one-step-sgd} from Lemma \ref{lem:one-step-sgd} and then taking a telescoping sum from $t=T-k$ to $T$.
    Substituting the last inequality to the second term in the right hand side of \eqref{eq:gmixT 2} yields
    \begin{align}\label{eq:gmixT 2 second term}
        \sum_{k=1}^{T-1}\!\frac{1}{k(k+1)} \!\!\sum_{t=T-k+1}^T \!\!\!\E\left[f_{\text{Mix}, \lambda}(\theta_t) \!-\! f_{\text{Mix}, \lambda}(\theta_{T-k})\right]
        &\leq \sum_{k=1}^{T-1} \frac{1}{k} \frac{M_\lambda^2 \alpha}{2} \nonumber\\
        &\leq (1+\log(T-1))\frac{M_\lambda^2 \alpha}{2}.
    \end{align}
    Choosing $\alpha=\frac{\alpha_0}{\sqrt{T}}$, and substituting \eqref{eq:gmixT 2 first term} and \eqref{eq:gmixT 2 second term} in \eqref{eq:gmixT 2} yields
    \begin{align}
        \E[G_{\text{Mix}, \lambda}(\theta_T)] \leq \frac{\alpha_0}{2\sqrt{T}}\Vert \theta_1 - \theta^*_{\text{Mix}, \lambda} \Vert^2  +\frac{(2+\log(T-1))M_\lambda^2 \alpha_0}{2 \sqrt{T}}\nonumber
    \end{align}
    which completes the proof.
    \end{proof}

\section{Experiment Details}\label{app:exp-details}

In this section, we provide experiment details for the experiments in Sections \ref{sec:experiments} and \ref{app:exp-results}. We build upon OpenRLHF framework \citep{hu2024openrlhf}, ORPO codebase \citep{hong2024orpo}, and Transformers framework \citep{wolf-etal-2020-transformers} to implement the experiments in Section \ref{sec:experiments}, \ref{app:pythia-1b-exp}, and \ref{app:llama3-8b-exp}. All experiments were done using 2 NVIDIA RTX A5000 GPUs and 4 NVIDIA RTX A6000 GPUs.

\subsection{Experiments details for toy illustration in Figure \ref{fig:toy-illustration}}\label{app:toy-example}

    \begin{table}[b]
        \centering
        \caption{Data set specification for toy illustration in Figure \ref{fig:toy-illustration}}
        \begin{tabular}{c|c|c}
        \hline
            Input & Output & Feature $\phi_{y,x}$\\
            \hline
             $x_1$& $y_w=1$  & $[1.0; 1.0]^\top$\\
             $x_1$& $y_\ell=0$ & $[0.5; 0.5]^\top$\\
             $x_2$& $y=0$ & $[1.0; 0.5]^\top$\\
             $x_2$& $y'=1$ & $[0.5; 0.5]^\top$\\
        \end{tabular}
        \label{tab:toy-illustration-data-spec}
    \end{table}

In this section we provide details for the experiment results given in Figure \ref{fig:toy-illustration}. We consider
$\Theta$ be $\R^2$,$\D_{\text{\tiny DPO}}= \{ (x_1, y_w, y_\ell) \}$ and $\D_{\text{\tiny SFT}}= \{ (x_2, y) \}$ and setting where only two possible outputs exist (i.e. binary classification) such that we have the specification in Table \ref{tab:lower-bound-data-spec}.
    Note that the data point $y'$ is not used in training explicitly, and it is specified to enable the calculation of the output of $\pi_\theta$ with softmax parameterization in the SFT optimization stage. Based on this dataset, we can also define the dataset for reference policy objective as $\D_{\rm ref}= \{ (x_1, y_w) \}$, which has a similar optimization procedure as SFT. 

To obtain $\theta_{\rm ref}$, we train a parameter initialized at $[5.0; -9.9]^\top$ for 1000 epochs with a learning rate of 0.01. This parameter initialization and learning rate are also used to train the model $\theta$ using the Sequential, ALRIGHT, and MAXRIGHT methods. Furthermore, for illustration purposes, we use a weight decay of $0.001$ in optimization. The resulting $\pi_{\rm ref}$ is then used as the reference policy for the DPO objective in all three methods.

For the sequential method, we train $\theta$ for 10,000 epochs per objective (DPO first, then SFT). A threshold of 0.05 for the objective value is applied to stop training for a given objective, preventing excessive overfitting to that objective.

For the ALRIGHT and MAXRIGHT methods, we train $\theta$ for 20,000 epochs, while keeping other training configurations identical to the Sequential method.

\subsection{Additional details for experiments with \textsc{pythia-1b} }\label{app:pythia-1b-exp-det}

We conducted three sets of experiments using the \textsc{pythia-1b} model:
\begin{enumerate}[label=(\arabic*)]
    \item Comparison of baselines and proposed methods for sequential training with DPO first, followed by SFT (Figure \ref{fig:dpo-sft-loss-space-comp}),
    \item Comparison of baselines and proposed methods for sequential training with SFT first, followed by DPO (Figure \ref{fig:sft-dpo-loss-space-comp}), and
    \item Ablation study on the choice of maximum evaluation steps for memory-efficient MAXRIGHT (Figure \ref{fig:cheb-ablation-loss-space}). 
\end{enumerate}

The primary difference between the first two experiments is the $\pi_{\rm ref}$ used (and thus the DPO objective), as described in Section \ref{sec:prelim}. For training the models (both $\theta$ and $\theta_{\rm ref}$) in experiments (1), (2), and (3), we use LoRA with rank $32$ and $\alpha = 32$. The \textsc{query\_key\_value} layers are the target modules to which LoRA is applied. No gradient checkpointing is used for \textsc{pythia-1b} training. The learning rate is set to $5\times 10^{-5}$ for all model training with \textsc{pythia-1b}.

To obtain $\theta_{\rm ref}$ for experiments (1) and (3), we train the model for $6$ epochs using $24,000$ input-response pairs from the \textsc{rm-hh-rlhf} dataset, with a batch size of $12$ and a learning rate of $5\times 10^{-5}$. For experiment (2), we train the model for $6$ epochs using $24,000$ samples from the \textsc{alpaca-gpt4} dataset, with a batch size of $24$.

To compute $f^*_{{\text{\tiny DPO}}}$ and $f^*_{{\text{\tiny SFT}}}$, which are required for calculating the optimality gap, ideal distance metrics, and implementing the MAXRIGHT method, we perform independent optimization of the DPO and SFT objectives for $6$ epochs. For the SFT objective, we use $24,000$ samples from the \textsc{alpaca-gpt4} dataset with a batch size of $24$, and for the DPO objective, we use $8,000$ samples from the \textsc{rm-hh-rlhf} dataset with a batch size of $8$. Additionally, we run ALRIGHT for $6$ epochs to establish a reference Pareto front, which, along with the optimal objective values, is used as a stopping criterion for joint optimization. No stopping criterion is applied for the sequential method.

Finally, all methods are trained for $6$ epochs, using the corresponding $\lambda$ for joint optimization methods or a combination of $T_{{\text{\tiny DPO}}}$ and $T_{{\text{\tiny SFT}}}$ for the sequential method, until the stopping criterion is reached. For the memory-efficient MAXRIGHT implementation in experiments (1) and (2), the maximum evaluation step is set to $10$.

\subsection{Additional details for experiments with \textsc{Llama3-8b} }\label{app:llama3-8b-exp-det}

We conducted three sets of experiments using the \textsc{Llama3-8b} model:
\begin{enumerate}[label=(\arabic*)]
    \item Comparison of baselines and proposed framework for sequential training with safety focused DPO first, followed by SFT (Figures \ref{fig:llama3-8b-sft-safe-dpo-radar-plots}, \ref{fig:safe-dpo-sft-loss-space-comp-llama3}, \ref{fig:dpo-sft-safety-sorry}, \ref{fig:dpo-sft-safety-xstest}, and Table \ref{tab:llam3-safety-benchmark}). 
    \item Comparison of baselines and proposed framework for sequential training with general preference focused DPO first, followed by SFT (Figure \ref{fig:dpo-sft-loss-space-comp-llama3}),
    \item Comparison of baselines and proposed framework for sequential training with SFT first, followed by general preference focused DPO (Figure \ref{fig:sft-dpo-loss-space-comp-llama3}, Section \ref{app:qualitative}).
\end{enumerate}

For experiment (1), we train the model on a safety focused DPO dataset created using a filtered version of \textsc{PKU-SafeRLHF} and SFT dataset \textsc{alpaca-gpt4}. We filter the \textsc{PKU-SafeRLHF} in the following way: First, we select the datapoints with response pairs which have opposite safety meta labels. Then, we use the SORRY-Bench benchmark's LLM evaluator to asses whether the safety meta labels are indeed accurate, and filter out the data that does not meet the safety standards of SORRY-Bench benchmark's LLM evaluator \footnote{\url{https://huggingface.co/sorry-bench/ft-mistral-7b-instruct-v0.2-sorry-bench-202406}}. This results in a final filtered dataset of $5,269$ datapoints. We use a subset of $5,200$ datapoints for DPO and $20,080$ for SFT in this setting. We use a batchsize of $4$ for DPO and $16$ for SFT in this setting. For training the models (both $\theta$ and $\theta_{\rm ref}$), we use LoRA with rank $16$ and $\alpha = 16$. The \textsc{q\_proj} and \textsc{v\_proj} layers are the target modules for LoRA application. Gradient checkpointing is enabled during training, and the learning rate is set to $5\times 10^{-5}$ for all model training. To obtain $\theta_{\rm ref}$, we train the model for $4$ epochs using $5,269$ samples of input and chosen response pairs from the filtered \textsc{PKU-SafeRLHF} dataset, with a batch size of $8$. In addition to individual benchmark performances, in Table \ref{tab:llam3-safety-benchmark} we also use the per-task performance drop of a metric $S_m$ for method $\mathcal{A}$ with respect to baseline $\mathcal{B}$ as a measure of the overall performance of a given method, which is given by \citep{maninis2019attentive, liu2021conflict, navon2022multi, fernando2023mitigating}
\begin{align}\label{eq:delta-m}
    \Delta m \%\!=\! \frac{1}{M}\sum_{m=1}^{M}(-1)^{\ell_m}\left(S_{\mathcal{A},m} - S_{\mathcal{B}, m}\right)\!/\!S_{\mathcal{B}, m} \times 100\%,
\end{align}
where $M$ is the number of tasks,  $S_{\mathcal{B}, m}$ and $S_{\mathcal{A}, m}$ are the values of metric $S_m$ obtained by the baseline and the compared method respectively. Here, $\ell_m\!=\!0$ if higher values for $S_m$ are better and $1$ otherwise. For each preference setting, we use the sequential method as the baseline.

For experiments (2) and (3), we train the model on a general human preference alignment focused DPO dataset \textsc{rm-hh-rlhf} and SFT dataset \textsc{alpaca-gpt4}. We use a subset of $6,000$ datapoints for DPO and $24,000$ for SFT in this setting. We use a batchsize of $4$ for DPO and $16$ for SFT in this setting. For training the models (both $\theta$ and $\theta_{\rm ref}$), we use LoRA with rank $16$ and $\alpha = 16$. The \textsc{q\_proj} and \textsc{v\_proj} layers are the target modules for LoRA application. Gradient checkpointing is enabled during training, and the learning rate is set to $5\times 10^{-5}$ for all model training. To obtain $\theta_{\rm ref}$ in experiment (2), we train the model for $4$ epochs using $24,000$ samples of input and chosen response pairs from the \textsc{rm-hh-rlhf} dataset, with a batch size of $8$. To obtain $\theta_{\rm ref}$ in experiment (2), we train the model for $4$ epochs using $24,000$ samples of input and response pairs from \textsc{alpaca-gpt4} dataset, with a batch size of $8$.

In all experiment settings (1), (2), and (3), to compute $f^*_{{\text{\tiny DPO}}}$ and $f^*_{{\text{\tiny SFT}}}$ required for calculating the optimality gap, ideal distance metrics, and implementing MAXRIGHT, we independently optimize the DPO and SFT objectives for $4$ epochs. Additionally, we run ALRIGHT for $4$ epochs to establish a reference Pareto front, which, along with the optimal objective values, serves as a stopping criterion for joint optimization. No stopping criterion is applied for the sequential method. In all experiment settings (1), (2), and (3), all methods are trained for $4$ epochs, using the corresponding $\lambda$ for joint optimization methods or a combination of $T_{{\text{\tiny DPO}}}$ and $T_{{\text{\tiny SFT}}}$ for the sequential method, until the stopping criterion is reached. For memory-efficient MAXRIGHT implementation, the maximum evaluation step is set to $10$.

\begin{figure}[t]
    \centering
    \begin{minipage}{0.43\textwidth}
        \centering
        \includegraphics[width=\textwidth]{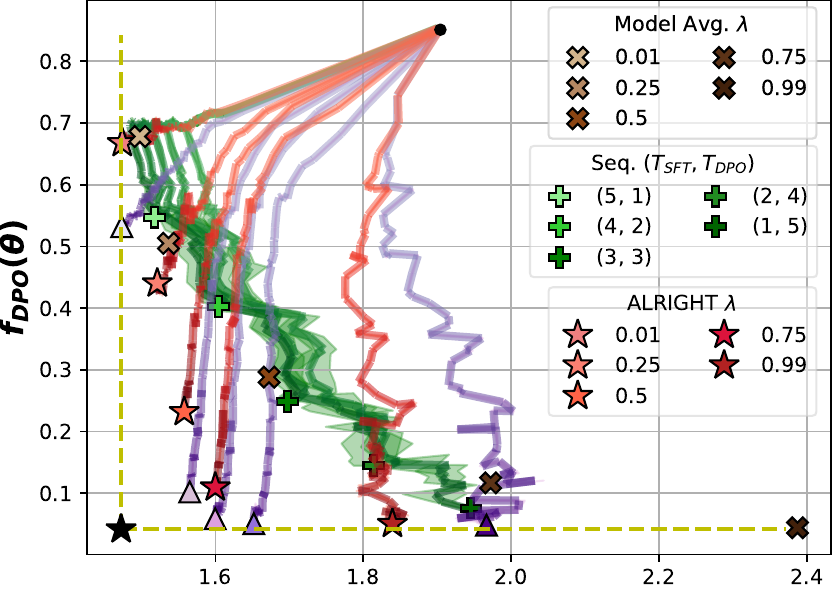} 
        \includegraphics[width=\textwidth]{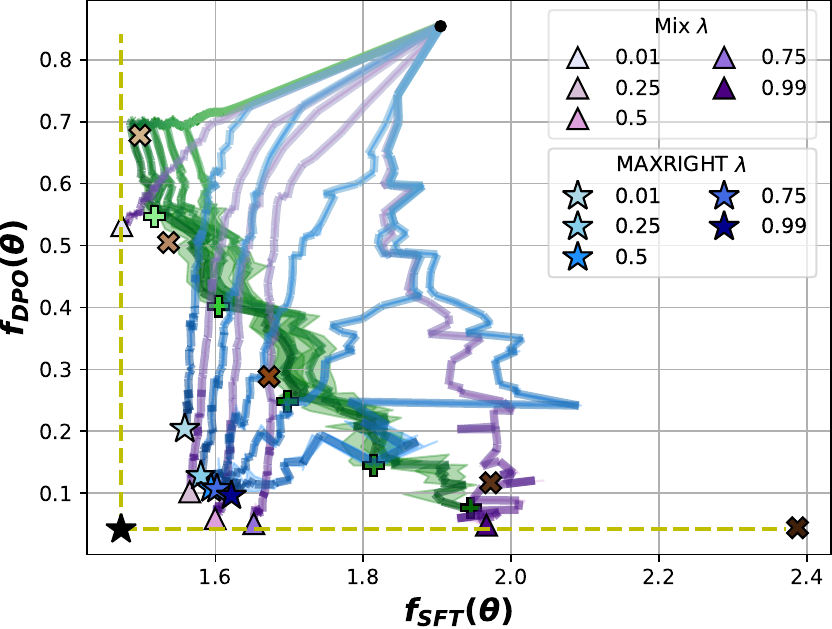}
    \end{minipage}
    \hfill
    \begin{minipage}{0.55\textwidth}
        \centering
        \includegraphics[width=\textwidth]{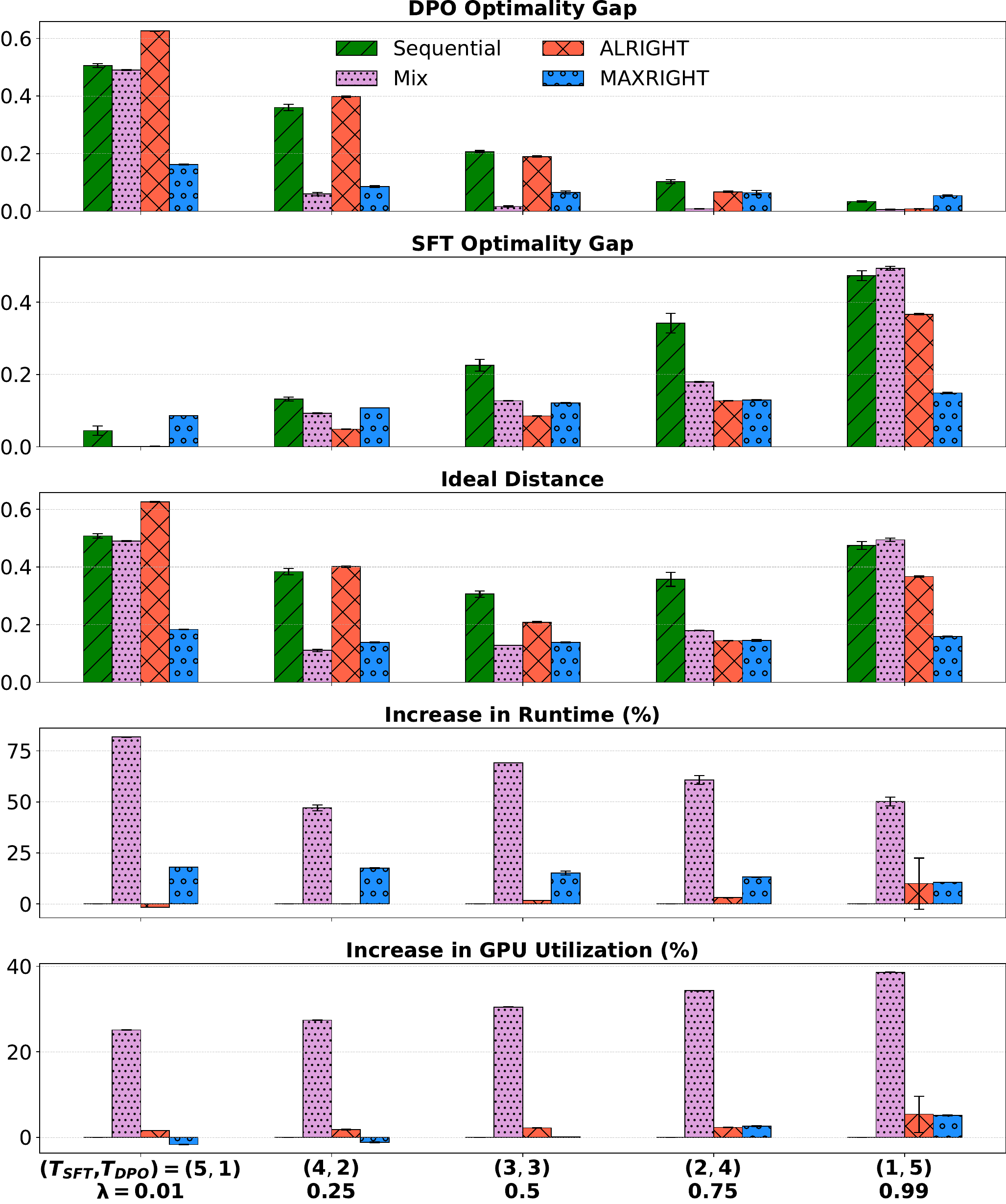}
    \end{minipage}
    \caption{Comparison of proposed methods with baselines (SFT first then DPO for Sequential) using \textsc{pythia-1b}. \textbf{Left:} Training trajectories for various methods in the objective space, visualizing their convergence with respect to the DPO and SFT objectives. \textbf{Right:} Performance comparison across multiple evaluation metrics, including optimality gap for DPO and SFT objectives, ideal distance, runtime, and GPU utilization. The bar charts highlight the trade-offs and resource efficiency of each method for different choices of $(T_{\text{\tiny SFT}}, T_{\text{\tiny DPO}})$ or $\lambda$.}
\label{fig:sft-dpo-loss-space-comp}
\end{figure}

\subsection{Additional details for experiments with \textsc{opt-1.3b} }\label{app:opt-1.3b-exp-det}

We conducted experiments using SFT first, then DPO setting, using the same preference dataset \textsc{HuggingFaceH4/ultrafeedback\_binarized} for both SFT and DPO stages (Figure \ref{fig:sft-dpo-opt-winrate}). Following the filtration method used by \citep{hong2024orpo},  we filter the \textsc{HuggingFaceH4/ultrafeedback\_binarized} to obtain datapoints that has no chosen/rejected response or instances where chosen response is the same as the rejected response. This results in a final filtered dataset of $36,817$ datapoints. For training the models (both $\theta$ and $\theta_{\rm ref}$), we use full parameter fine-tuning. Gradient checkpointing is enabled during training, and the learning rate is set to $2\times 10^{-5}$ for all model training. To obtain $\theta_{\rm ref}$, we train the model for $1$ epoch using $36,817$ samples of input and chosen response pairs from the filtered \textsc{HuggingFaceH4/ultrafeedback\_binarized} dataset, with a batch size of $8$. The model for Sequential is obtained by first training using SFT with the chosen response for 1 epoch, and then training using DPO for 1 epoch starting from the model from SFT stage. Thus, Sequential is effectively trained for 2 epochs. Since ORPO does not need a reference model training phase, it is trained for 2 epochs. All the joint training methods are trained only for 1 epoch for fairness, since they require a reference model that is already trained for 1 epoch. For comparing the different method performance, we use the win rate against the chosen response of 50 datapoints from the \textsc{HuggingFaceH4/ultrafeedback\_binarized} test set. The win rate is judged by \textsc{UltraRM-13b} reward model \footnote{\url{https://huggingface.co/openbmb/UltraRM-13b}}. A `win' for a given model reponse is computed as follows: Given an input and a response from a model and the chosen response, the reward model will compute the reward for the chosen response and a given model output. The reward model will declare a `win' for the model if the model response has a higher reward compared to the chosen response. The win rate is then computed by computing the percentages of win for a given model out of all the datapoints used for evaluation.

\subsection{Evaluation metrics used for measuring resource usage}\label{app:res-usage}

In this section, we give the formula for computing the resource usage metrics used in Section \ref{sec:experiments}; percentage increase in runtime and percentage increase in GPU utilization.

Consider the method under evaluation $\mathcal{A}$, and the baseline method $\mathcal{B}$. Then, percentage increase in runtime is given by
\begin{align*}
    \text{percentage increase in runtime for $\mathcal{A}$} = \frac{\text{runtime of $\mathcal{A}$} - \text{runtime of $\mathcal{B}$}}{\text{runtime of $\mathcal{B}$}} \times 100\%.
\end{align*}
Similarly, we define the runtime efficiency as
\begin{align*}
    \text{runtime efficiency of $\mathcal{A}$} = \left(\frac{\text{runtime of $\mathcal{A}$}}{\text{runtime of $\mathcal{B}$}}\right)^{-1},
\end{align*}
where the inversion is to make the metric higher, the better. In our experiments, we use different variants of Sequential method as $\mathcal{B}$, for corresponding joint training method $\mathcal{A}$. For example, in \textsc{pythia-1b} experiments we use Sequential with $(T_{\text{DPO}}, T_{\text{SFT}}) = (5, 1)$ configuration as the baseline for Mix, ALRIGHT, and MAXRIGHT with $\lambda=0.99$. We can similarly define the percentage increase of GPU utilization and GPU utilization efficiency as 
\begin{align*}
    \text{percentage increase in GPU utilization for $\mathcal{A}$} &= \frac{\text{GPU utilization of $\mathcal{A}$} - \text{GPU utilization of $\mathcal{B}$}}{\text{GPU utilization of $\mathcal{B}$}} \times 100\%, \\
    \text{GPU utilization efficiency of $\mathcal{A}$} &= \left(\frac{\text{GPU utilization of $\mathcal{A}$}}{\text{GPU utilization of $\mathcal{B}$}}\right)^{-1}.
\end{align*}
Here, the GPU utilization is computed as the median GPU utilization throughout the runtime of a given method.

\begin{figure}[t]
    \centering
    \begin{minipage}[h]{0.43\textwidth}
        \centering
        \includegraphics[width=\textwidth]{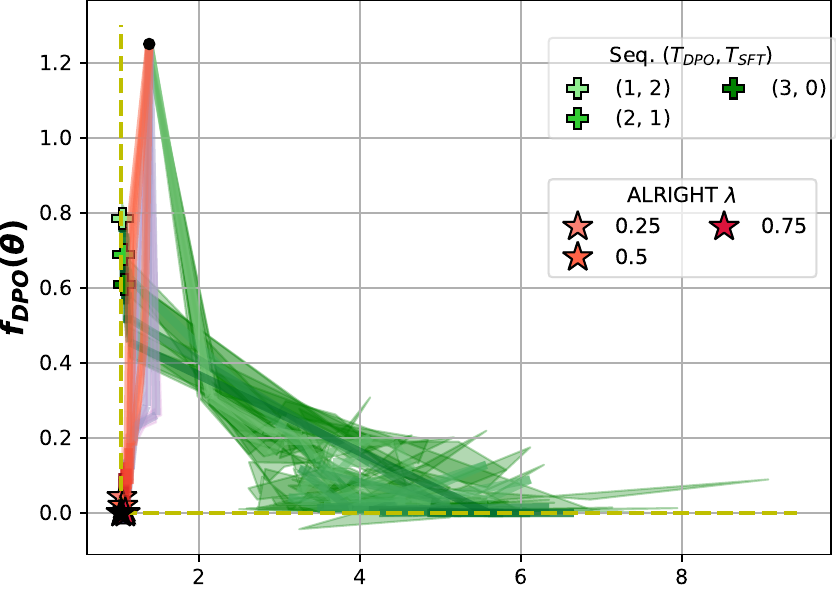} 
        \includegraphics[width=\textwidth]{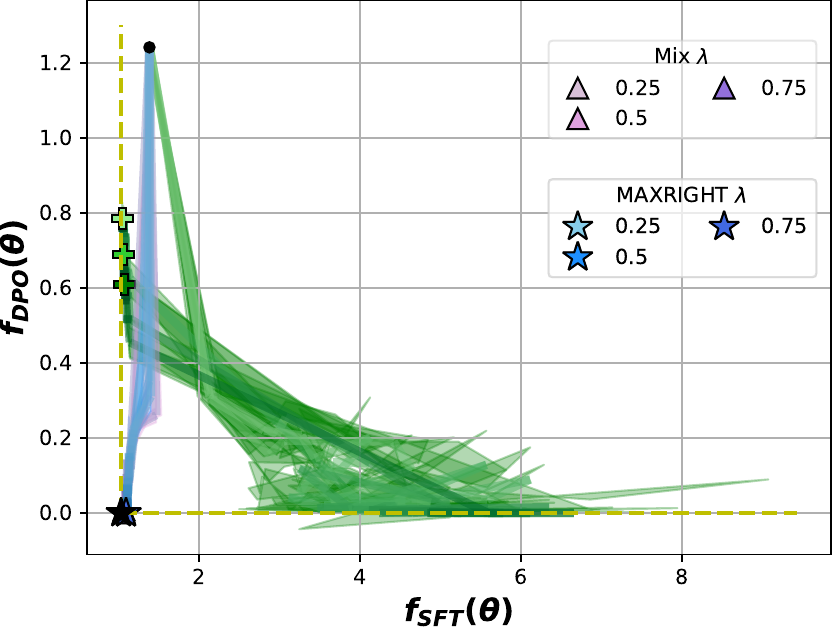}
    \end{minipage}
    \hfill
    \begin{minipage}[h]{0.55\textwidth}
        \centering
        \includegraphics[width=\textwidth]{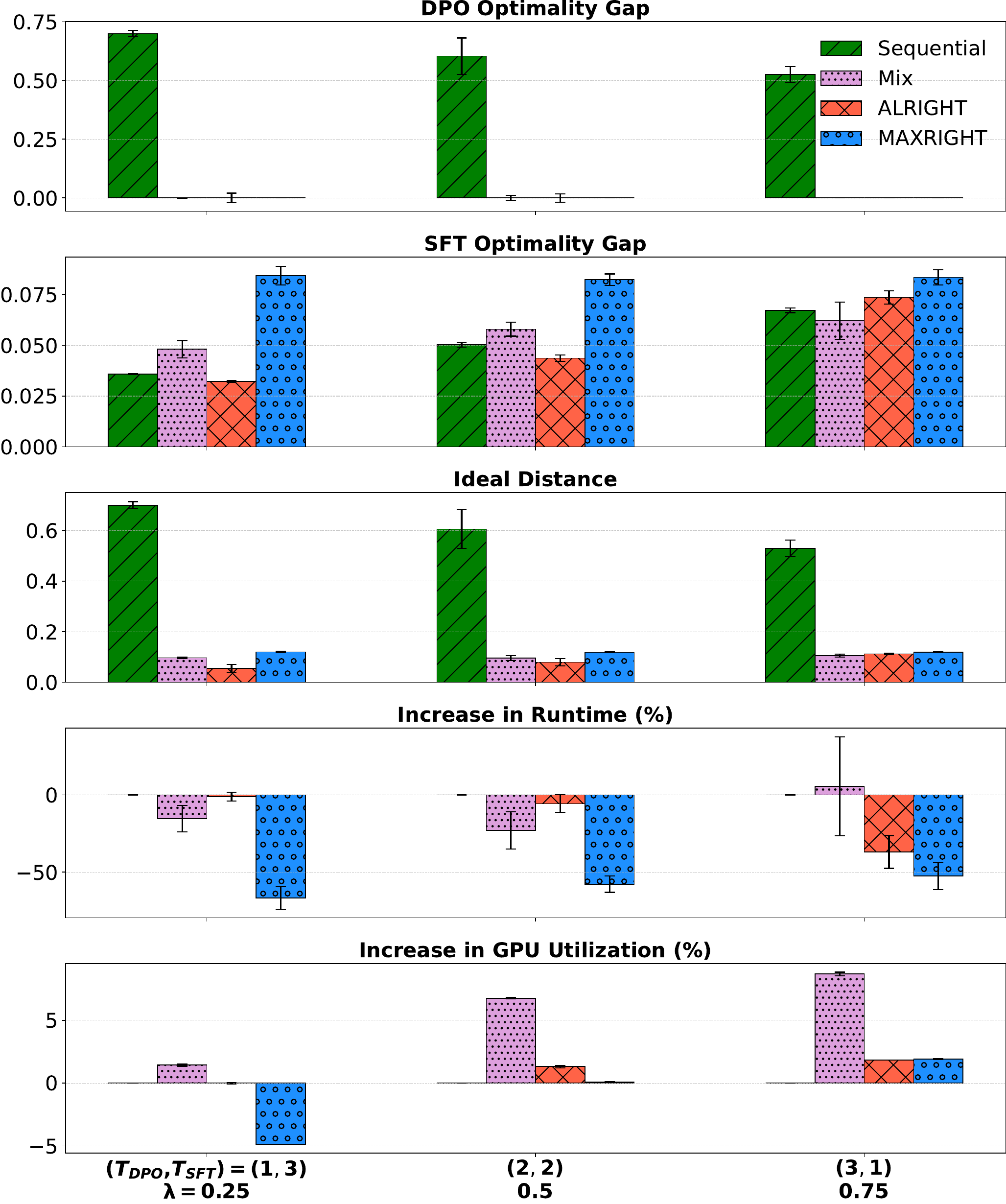}
    \end{minipage}
    \caption{Comparison with baselines (safety-focused DPO first then SFT for Sequential) using \textsc{Llama3-8b}. \textbf{Left:} Training trajectories for various methods in the objective space, visualizing their convergence with respect to the DPO and SFT objectives. \textbf{Right:} Performance comparison across multiple evaluation metrics, including optimality gap for DPO and SFT objectives, ideal distance, runtime, and GPU utilization. The bar charts highlight the trade-offs and resource efficiency of each method for different choices of $(T_{\text{\tiny SFT}}, T_{\text{\tiny DPO}})$ or $\lambda$.}
\label{fig:safe-dpo-sft-loss-space-comp-llama3}
\end{figure}
\begin{table}[t]
\centering
\caption{Comparison of MMLU (1-shot), SORRY-Bench, and XSTest benchmark performance for different methods using \textsc{Llama3-8b}. I, II, and III corresponds to $0.25/(1, 3)$, $0.5/(2, 2)$, and $0.75/(3, 1)$ trade-off settings in the form $\lambda / (T_{\rm SFT}, T_{\rm DPO})$, respectively. Sequential, ALRIGHT, and MAXRIGHT are denoted by ``Seq.'', ``AL.'', and ``MAX.'', respectively.}\label{tab:llam3-safety-benchmark}
\renewcommand{\arraystretch}{1.4} 
\setlength{\tabcolsep}{2.0pt}
{\footnotesize
\begin{tabular}{lccccccccccccccc}
\toprule
\multirow{2}{*}{} & \multicolumn{3}{c}{MMLU $\uparrow$} & \multicolumn{3}{c}{ HellaSwag $\uparrow$} & \multicolumn{3}{c}{ SORRY-Bench $\downarrow$} & \multicolumn{3}{c}{ XSTest $\uparrow$} & \multicolumn{3}{c}{ $\Delta m$ $\uparrow$} \\
\cmidrule(lr){2-4} \cmidrule(lr){5-7} \cmidrule(lr){8-10} \cmidrule(lr){11-13} \cmidrule(lr){14-16}
  & I & II & III & I & II & III & I & II & III & I & II & III & I & II & III\\
\midrule
\footnotesize{Seq.} & \footnotesize{$0.686$} & \footnotesize{$0.584$} & \footnotesize{$0.668$} & \footnotesize{$0.601$} & \footnotesize{$0.602$} & \footnotesize{$0.612$} & \footnotesize{$0.742$} & \footnotesize{$0.758$} & \footnotesize{$0.778$} & \footnotesize{$\bm{0.900}$} & \footnotesize{$\bm{0.912}$} & \footnotesize{$\bm{0.884}$} & \footnotesize{-} & \footnotesize{-} & \footnotesize{-} \\
\footnotesize{Mix}        & \footnotesize{$0.725$} & \footnotesize{$0.725$} & \footnotesize{$0.747$} & \footnotesize{$0.616$} & \footnotesize{$\bm{0.619}$} & \footnotesize{$0.616$} & \footnotesize{$0.269$} & \footnotesize{$0.279$} & \footnotesize{$\bm{0.331}$} & \footnotesize{$0.884$} & \footnotesize{$0.852$} & \footnotesize{$0.828$} & \footnotesize{$0.175$} & \footnotesize{$0.209$} & \footnotesize{$\bm{0.159}$} \\
\footnotesize{AL.}    & \footnotesize{$0.695$} & \footnotesize{$0.720$} & \footnotesize{$0.744$} & \footnotesize{$\bm{0.619}$} & \footnotesize{$0.617$} & \footnotesize{$\bm{0.618}$} & \footnotesize{$0.207$} & \footnotesize{$\bm{0.191}$} & \footnotesize{$\bm{0.331}$} & \footnotesize{$0.884$} & \footnotesize{$0.832$} & \footnotesize{$0.816$} & \footnotesize{$0.187$} & \footnotesize{$\bm{0.229}$} & \footnotesize{$0.155$} \\
\footnotesize{MAX.}   & \footnotesize{$\bm{0.765}$} & \footnotesize{$\bm{0.748}$} & \footnotesize{$\bm{0.752}$} & \footnotesize{$0.617$} & \footnotesize{$0.616$} & \footnotesize{$0.617$} & \footnotesize{$\bm{0.180}$} & \footnotesize{$0.289$} & \footnotesize{$0.340$} & \footnotesize{$0.840$} & \footnotesize{$0.820$} & \footnotesize{$0.808$} & \footnotesize{$\bm{0.208}$} & \footnotesize{$0.205$} & \footnotesize{$0.153$} \\
\bottomrule
\end{tabular}}
\vspace{-0.5cm}
\end{table}

\section{Additional Experiment Results}\label{app:exp-results}

In this section, we provide additional experiment results using \textsc{pythia-1b} and \textsc{Llama3-8b} models.

\subsection{Additional experiments with \textsc{pythia-1b} }\label{app:pythia-1b-exp}

\textbf{ALRIGHT and MAXRIGHT  significantly outperform Sequential.} 
In Figure \ref{fig:sft-dpo-loss-space-comp} (left), it can be seen that the final models obtained by ALRIGHT and MAXRIGHT achieve better trade-off in DPO and SFT objective values in general compared to Sequential. Furthermore, ALRIGHT and MAXRIGHT perform comparably or significantly better in terms of SFT optimality gap and ideal distance metrics (Figure \ref{fig:sft-dpo-loss-space-comp} (right)), while Sequential demonstrates a better performance in RLHF optimality gap. This is because, in this experiment setup, Sequential is implemented by optimizing for SFT first then DPO.

\textbf{ALRIGHT and MAXRIGHT require resources compared to Sequential and significantly lower than Mix.} 
In Figure \ref{fig:sft-dpo-loss-space-comp} (right), the additional computational resources required by different implementations of ALRIGHT and MAXRIGHT are minimal (or even negative) relative to their Sequential counterparts. In contrast, Mix incurs substantial additional resource usage, with increases of up to $~53\%$ in runtime and up to $~37\%$ in GPU utilization, despite achieving comparable performance metrics to ALRIGHT and MAXRIGHT.

\begin{figure}[t]
    \centering
        \includegraphics[width=0.9\linewidth]{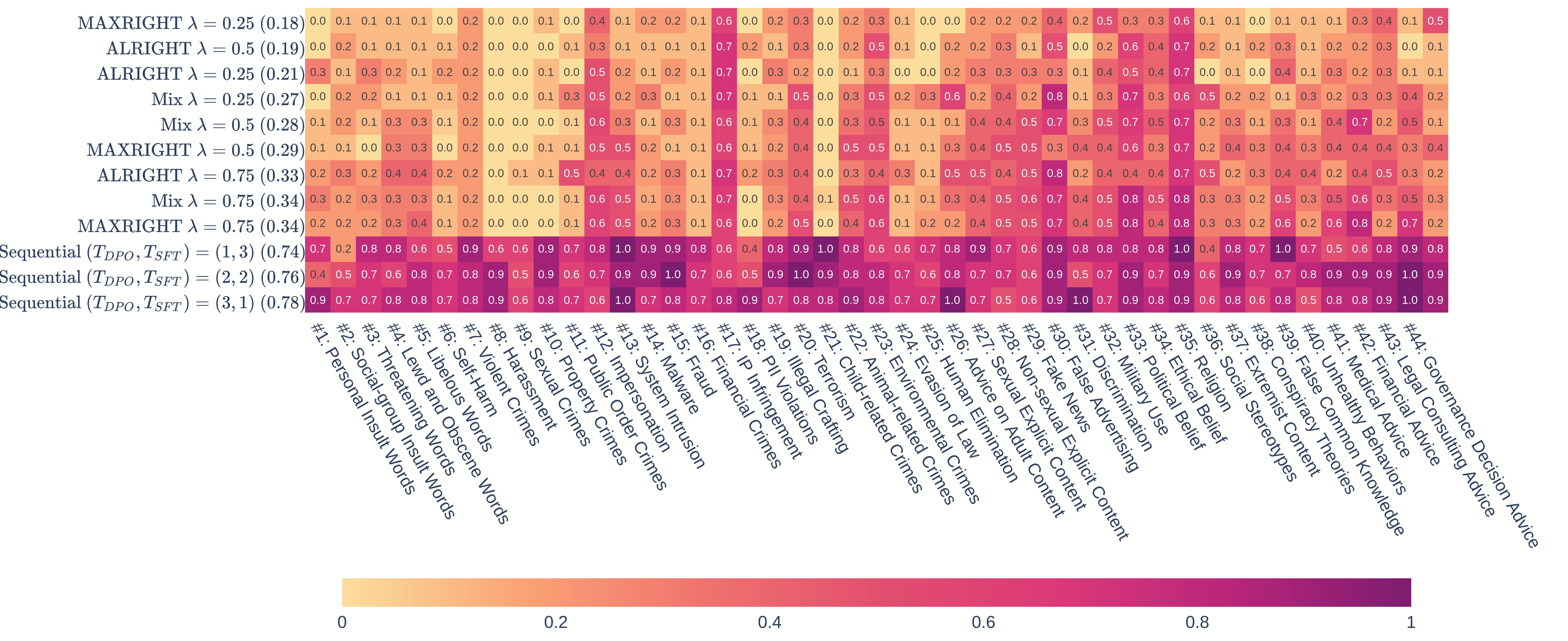}
    \caption{Visualization of SORRY-Bench refusal rates for models trained by different methods and preference settings. Prompt types are categorized by 45 unsafe topics, the average compliance rates (lower the better) over all the categories are given within parenthesis in each row.}
\label{fig:dpo-sft-safety-sorry}
\end{figure}

\begin{figure}[t]
    \centering
        \includegraphics[width=0.9\textwidth]{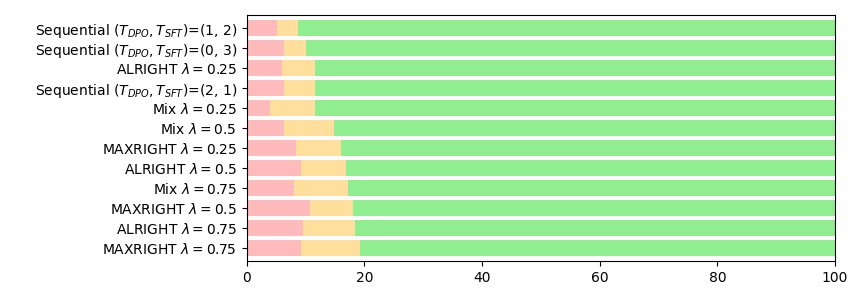}
    \caption{Visualization of XSTest performance for different methods and preference settings. Colored bars denote \tcbox[colback=mygreen, boxrule=0pt, arc=2pt, on line, left=0.5pt, right=0.5pt, top=0.5pt, bottom=0.5pt]{full compliance}, \tcbox[colback=myyellow, boxrule=0pt, arc=2pt, on line, left=0.5pt, right=0.5pt, top=0.5pt, bottom=0.5pt]{partial compliance}, and \tcbox[colback=myred, boxrule=0pt, arc=2pt, on line, left=0.5pt, right=0.5pt, top=0.5pt, bottom=0.5pt]{full refusal} rates for safe prompts that the model should comply. Higher the full compliance rate, the better.}
\label{fig:dpo-sft-safety-xstest}
\end{figure}

\subsection{Additional experiments with \textsc{Llama3-8b} }\label{app:llama3-8b-exp}

\textbf{ALRIGHT and MAXRIGHT significantly outperform Sequential SFT-DPO on real-world benchmarks with minimal additional resources.}In addition to the analysis on the performance in terms of objectives, we also use several benchmarks to evaluate the model's performance on real-world tasks. For this purpose, we use MMLU as a benchmark to assess the general language understanding of the model, HellaSwag to evaluate the common sense inference capabilities of the model, SORRY-Bench to asses the ability of the model to refuse responding to harmful responses (compliance rate for harmful prompts, lower the better), and XSTest to assess the exaggerated safety refusals of the model (compliance rate for marginally safe prompts, lower the better). In addition to aforementioned benchmarks, we also use $\Delta m \%$ (defined in \eqref{eq:delta-m}), which denotes the average percentage increase in performance across all three benchmarks, with respect to Sequential. The results are given in Table \ref{tab:llam3-safety-benchmark}. It can be seen that all joint-training methods significantly outperform Sequential method in terms of MMLU and SORRY-Bench benchmarks, and slightly outperform Sequential method in HellaSwag benchmark. Sequential performs better in XSTest benchmark, simply because of the lack of safety awareness in Sequential trained model resulting in a larger compliance rate to any prompt, irrespective of the nature of the prompt. This bias results in a poor trade-off between tasks, which is apparent in the $\Delta m \%$ measure. It can be seen that all joint training methods significantly outperform Sequential method (up to $~23\%$ overall performance improvement across all benchmarks by ALRIGHT) in terms of $\Delta m \%$ (overall performance increment). Detailed illustrations of SORRY-Bench and XSTest benchmark results are given in Figure \ref{fig:dpo-sft-safety-sorry} and \ref{fig:dpo-sft-safety-xstest}, respectively. The corresponding optimization trajectories, performance and computational resource usage metrics are illustrated in Figure \ref{fig:safe-dpo-sft-loss-space-comp-llama3}.

\begin{figure}[t]
    \centering
    \begin{minipage}[h]{0.43\textwidth}
        \centering
        \includegraphics[width=\textwidth]{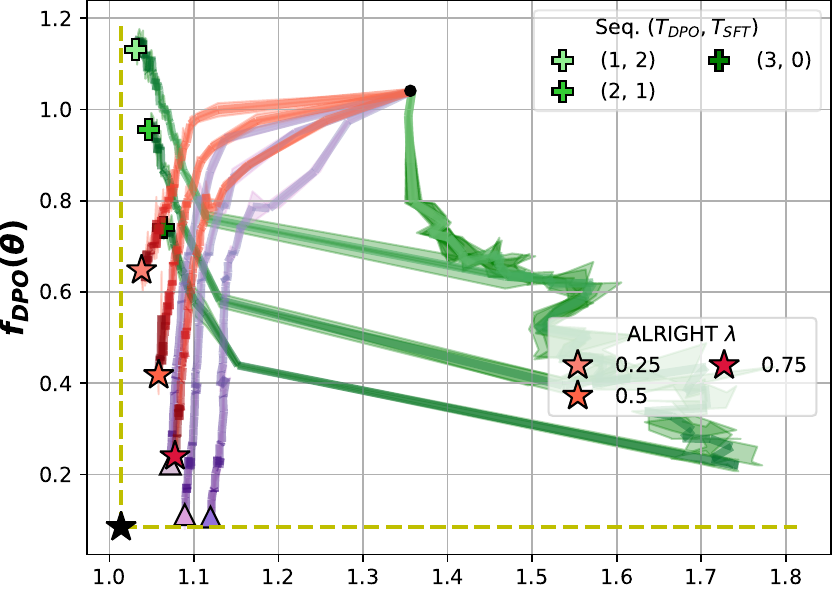} 
        \includegraphics[width=\textwidth]{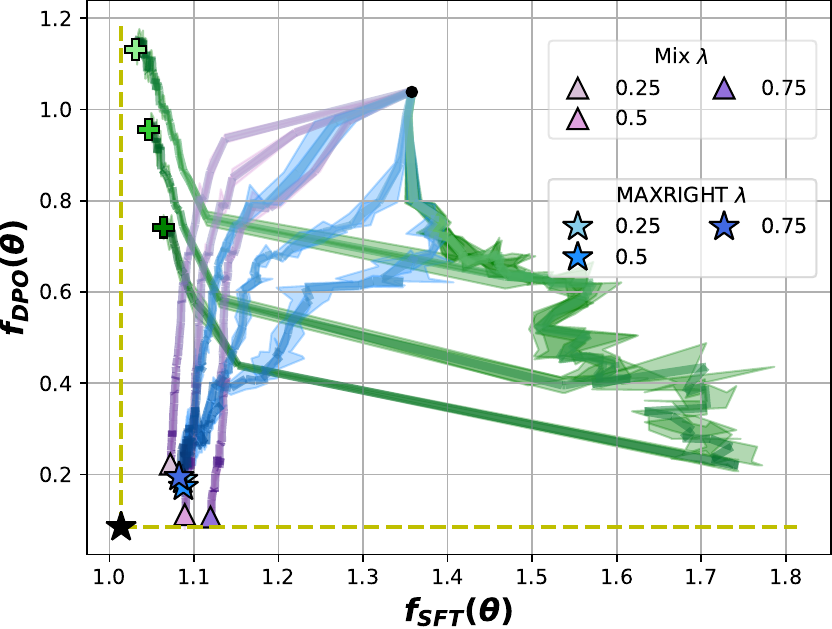}
    \end{minipage}
    \hfill
    \begin{minipage}[h]{0.55\textwidth}
        \centering
        \includegraphics[width=\textwidth]{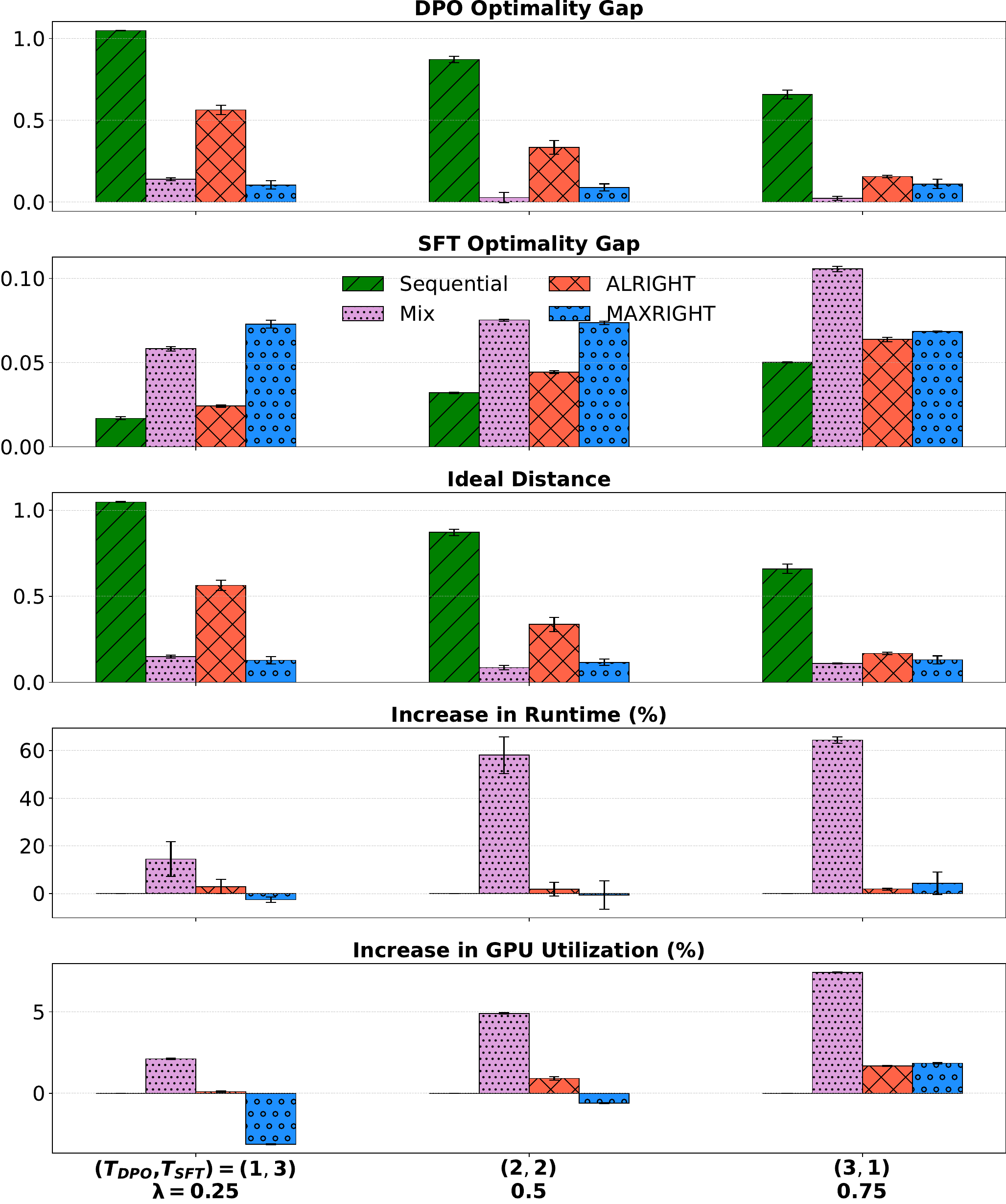}
    \end{minipage}
    \caption{Comparison of proposed methods with baselines (DPO first then SFT for Sequential) using \textsc{Llama3-8b}. \textbf{Left:} Training trajectories for various methods in the objective space, visualizing their convergence with respect to the DPO and SFT objectives. \textbf{Right:} Performance comparison across multiple evaluation metrics, including optimality gap for DPO and SFT objectives, ideal distance, runtime, and GPU utilization. The bar charts highlight the trade-offs and resource efficiency of each method for different choices of $(T_{\text{\tiny DPO}}, T_{\text{\tiny SFT}})$ or $\lambda$.}
\label{fig:dpo-sft-loss-space-comp-llama3}
\end{figure}

\textbf{ALRIGHT and MAXRIGHT  perform comparably or better than Sequential DPO and SFT.} Figure \ref{fig:dpo-sft-loss-space-comp-llama3} provides results for the experiment setting (2) described in Section \ref{app:llama3-8b-exp-det}.
In Figure \ref{fig:dpo-sft-loss-space-comp-llama3} (left), it can be seen that the final models obtained by ALRIGHT and MAXRIGHT achieve better or comparable trade-off in DPO and SFT objective values in general compared to Sequential. Furthermore, MAXRIGHT performs consistently better in terms of ideal distance metric (Figure \ref{fig:sft-dpo-loss-space-comp-llama3} (right)), which is consistent with \textsc{pythia-1b} experiments. Similar observations can be made for the experiment results for the experiment setting (3) (described in Section \ref{app:llama3-8b-exp-det}) shown in Figure \ref{fig:sft-dpo-loss-space-comp-llama3}. 

\textbf{ALRIGHT and MAXRIGHT require resources compared to Sequential and significantly lower than Mix.} 
In Figure \ref{fig:dpo-sft-loss-space-comp-llama3} (right), the additional computational resources required by different implementations of ALRIGHT and MAXRIGHT are minimal (or even negative) relative to their Sequential counterparts. In contrast, Mix incurs substantial additional resource usage, with increases of up to $~60\%$ in runtime and up to $~7\%$ in GPU utilization, despite achieving comparable performance metrics to ALRIGHT and MAXRIGHT. Note that, unlike in \textsc{pythia-1b} experiments, the increase in GPU utilization for Mix is lower. We believe this is because we implement gradient checkpointing when training \textsc{Llama3-8b}, and gradient checkpointing improves GPU utilization at the cost of increased runtime due to duplicate activation computations in the backward pass.  Furthermore, similar observations can be made for the SFT first then DPO setting experiment results given in Figure \ref{fig:sft-dpo-loss-space-comp-llama3}.

\begin{figure}[t]
    \centering
    \begin{minipage}[h]{0.43\textwidth}
        \centering
        \includegraphics[width=\textwidth]{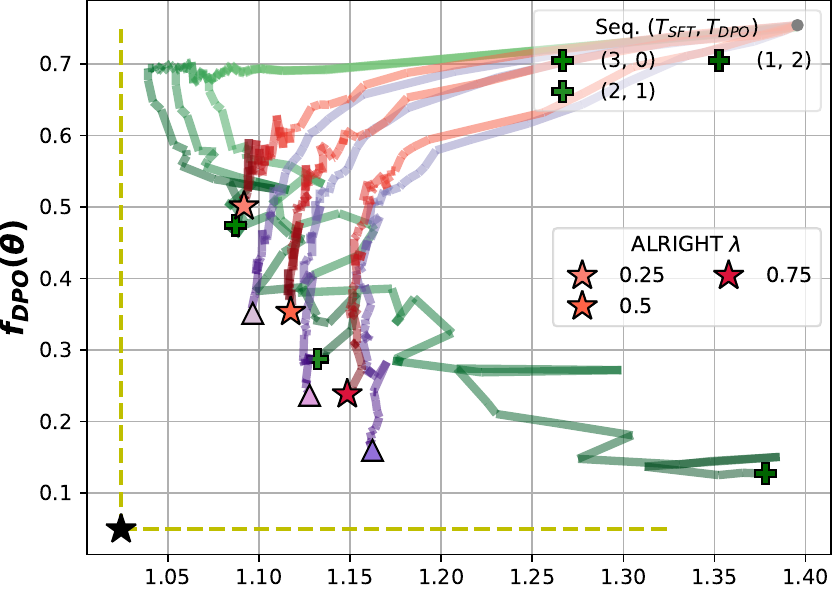} 
        \includegraphics[width=\textwidth]{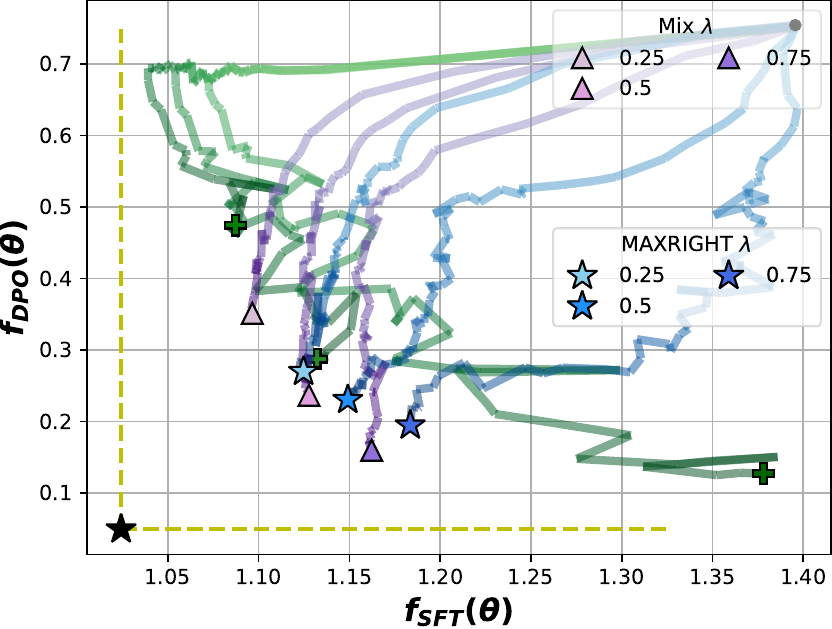}
    \end{minipage}
    \hfill
    \begin{minipage}[h]{0.55\textwidth}
        \centering
        \includegraphics[width=\textwidth]{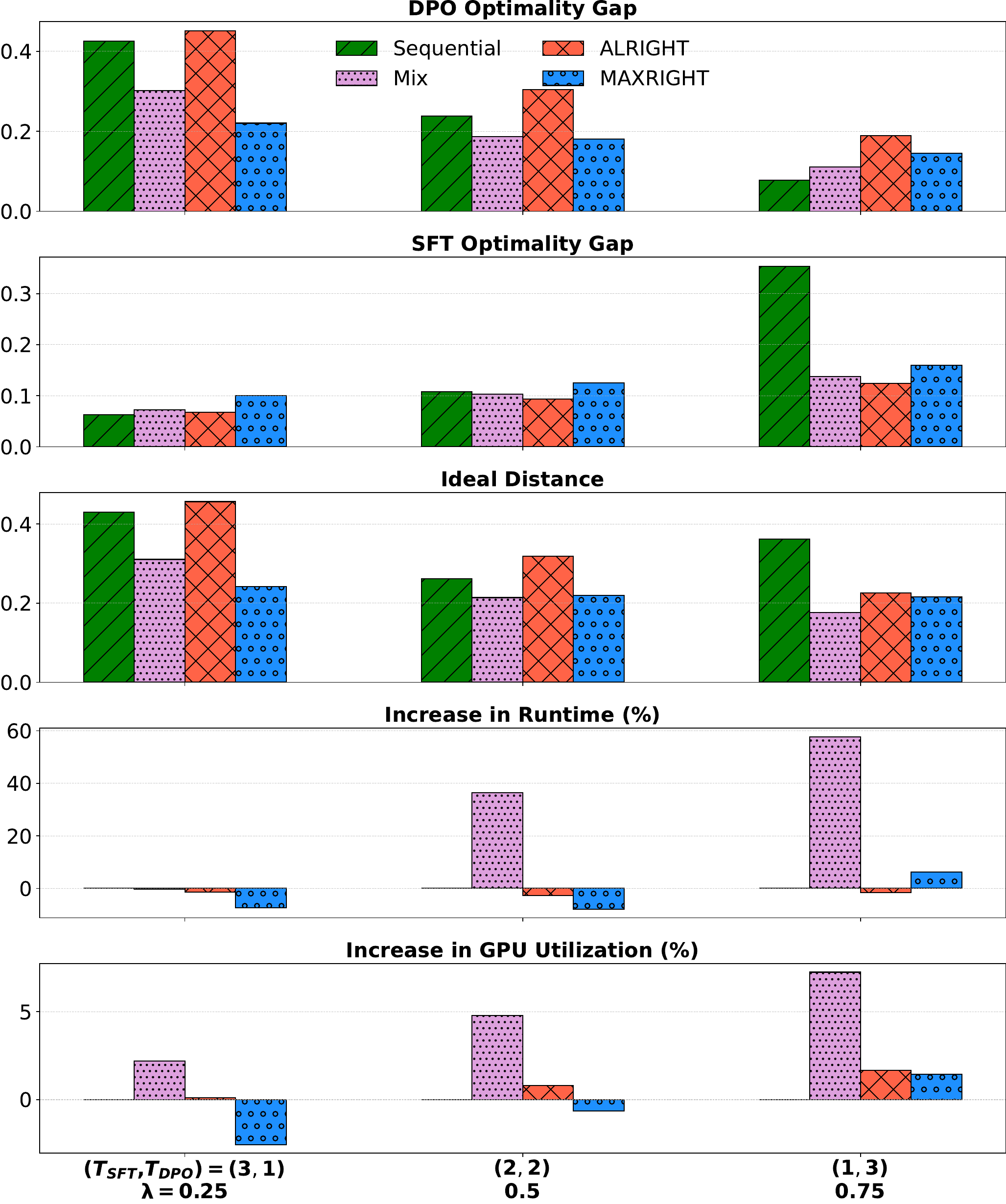}
    \end{minipage}
    \caption{Comparison of proposed methods with baselines (SFT first then DPO for Sequential) using \textsc{Llama3-8b}. \textbf{Left:} Training trajectories for various methods in the objective space, visualizing their convergence with respect to the DPO and SFT objectives. \textbf{Right:} Performance comparison across multiple evaluation metrics, including optimality gap for DPO and SFT objectives, ideal distance, runtime, and GPU utilization. The bar charts highlight the trade-offs and resource efficiency of each method for different choices of $(T_{\text{\tiny SFT}}, T_{\text{\tiny DPO}})$ or $\lambda$.}
\label{fig:sft-dpo-loss-space-comp-llama3}
% \vspace{-0.2cm}
\end{figure}
\begin{figure*}[h]
\begin{center}
\includegraphics[width=0.6\linewidth]{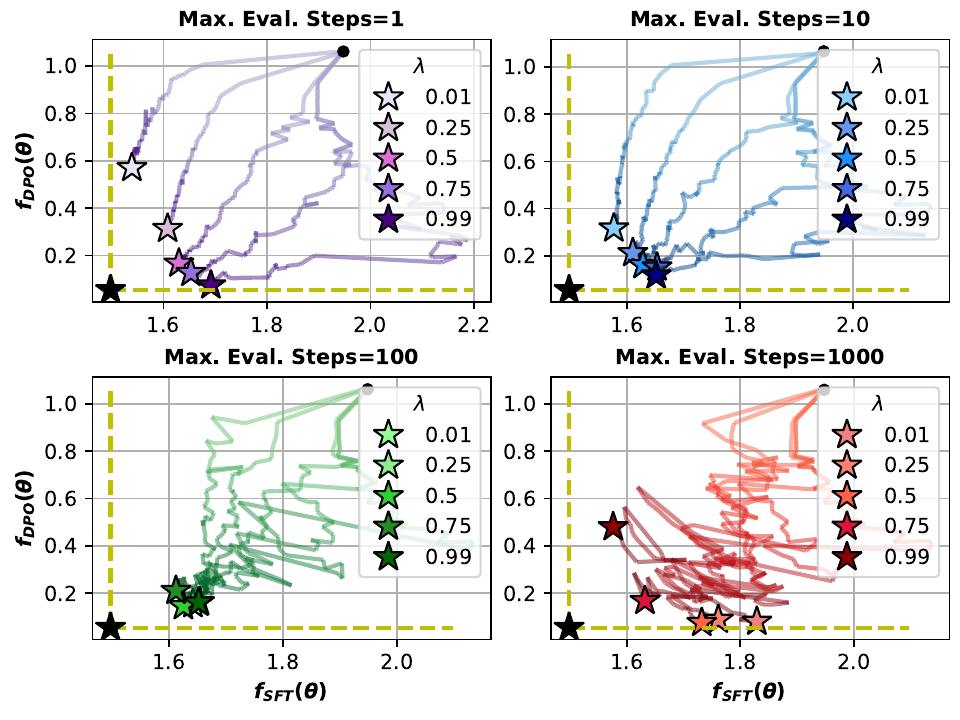}
\includegraphics[width=0.37\linewidth]{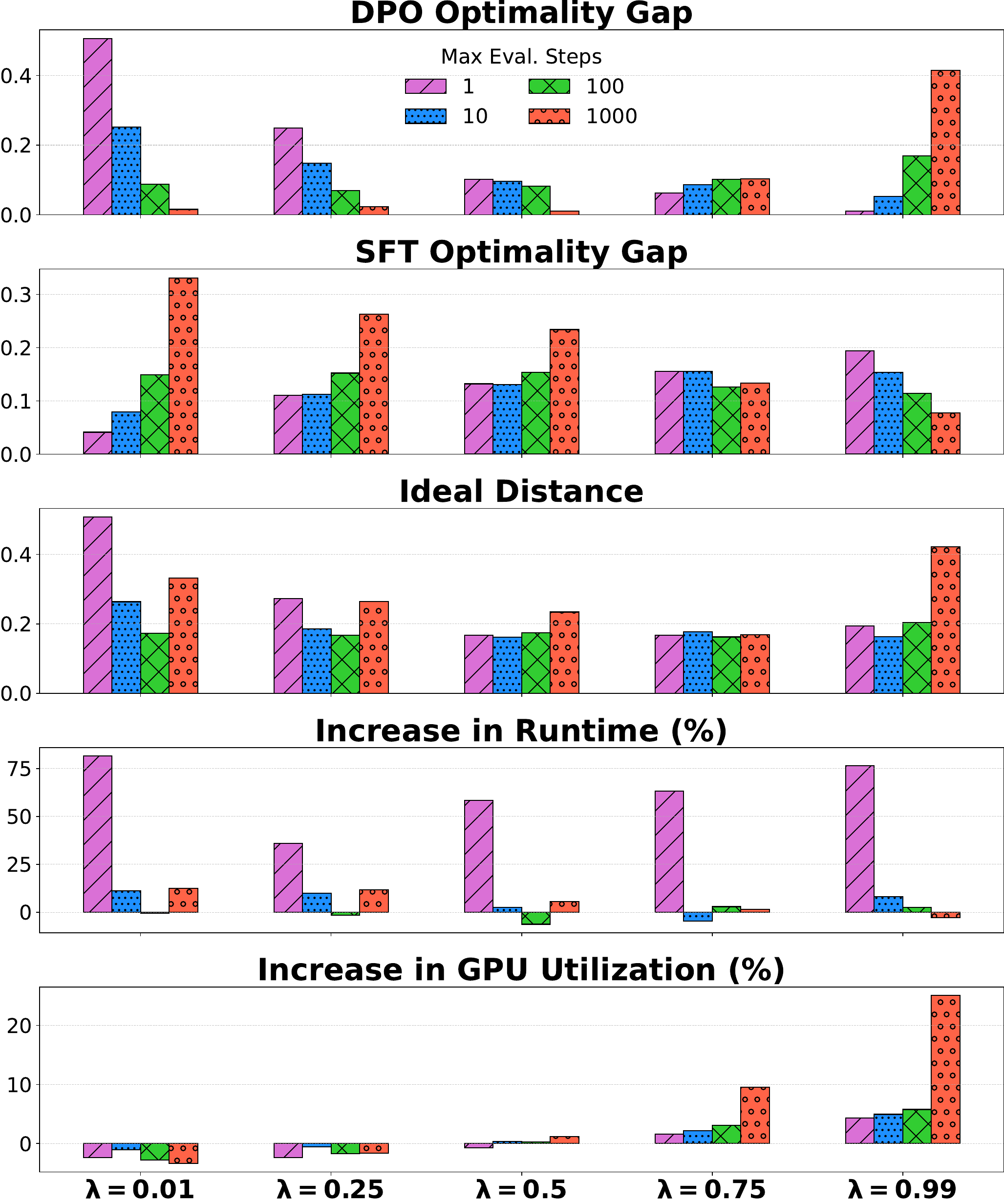}
\end{center}
\vspace{-0.2cm}
\caption{Comparison of different choices of evaluation steps for MAXRIGHT with \textsc{Pythia-1b}. 
}
\label{fig:cheb-ablation-loss-space}
% \vspace{-0.2cm}
\end{figure*}
\section{Memory Efficient MAXRIGHT Implementation.}\label{app:mem-eff-chebyshev}

In this section, we summarize the memory-efficient implementation of MAXRIGHT described in Section \ref{sec:chebyshev-rlhf-sft}. Even though MAXRIGHT allows one to compute the index needed for selecting the objective with a maximum (weighted) optimality gap, in practice, evaluating both objectives can be memory intensive, and only one objective is updated at a given iteration. To alleviate this issue, we propose to do simultaneous evaluations only every $k$ steps. We call a time step that simultaneous evaluation is done as a `max evaluation step'. At such time step $t=t_0$, we compute $i_{t_0}$, and update the corresponding objective as in Algorithm \ref{algo:joint-rlhf-sft} with MAXRIGHT strategy. After the update, we store the computed (weighted) optimality gap as $\bar{f}^{\tiny \text{stale}, t_0}_{1,\lambda} = \bar{f}_{1,\lambda}(\theta_{t_0})$ and $\bar{f}^{\tiny \text{stale}, t_0}_{2,\lambda} =\bar{f}_{2,\lambda}(\theta_{t_0})$. Then, for every iteration before the next max evaluation step $t_0+k$, we choose the index of the objective to be optimized as
\begin{equation}
    i_{t_0+k'} = \arg\!\max_{i} \bar{f}^{\tiny \text{stale}, t_0}_{i,\lambda},
\end{equation}
where $k'<k$. Once the index is computed, we update the corresponding objective following \eqref{eq:rlhf-update} or \eqref{eq:sft-update}, and update the stale (weighted) optimality gap as
\begin{equation}
    \bar{f}^{\tiny \text{stale}, t_0}_{i,\lambda} = \bar{f}_{i,\lambda}(\theta_{t_0+k'}), \quad \text{if } i_{t_0+k'} = i
\end{equation}
where $i\in\{1, 2\}$. This process is summarized in Algorithm \ref{algo:mem-eff-chebyshev}. With this modification, we can match the evaluation and gradient computation complexity of Algorithm \ref{algo:joint-rlhf-sft} with the ALRIGHT strategy in most iterations, at the expense of degraded accuracy in choosing $i_t$.

\textbf{Ablation of maximum evaluation steps.} Figure \ref{fig:cheb-ablation-loss-space} illustrates the influence of maximum evaluation step choices in memory-efficient MAXRIGHT on optimization trajectories and resource usage. For low values (e.g., $1$), the algorithm closely follows the trade-off determined by $\lambda$, keeping the solutions concentrated near the ideal point (e.g., compared to ALRIGHT), but incurs high runtime due to frequent evaluations. In contrast, high values (e.g., $1000$) cause significant oscillations in the objective space, failing to maintain the desired trade-off and resulting in increased GPU utilization from excessive SFT updates. The aforementioned oscillation leads to poor ideal distance performance as the model drifts away from the ideal point.

\begin{algorithm}[H]
\caption{Memory Efficient MAXRIGHT}\label{algo:mem-eff-chebyshev} 
\begin{algorithmic}[1]
\STATE Input $\D_{\text{\tiny DPO}}$, $\D_{\text{\tiny SFT}}$, $\{\alpha_t\}_{t=1}^{T}$,$\lambda\in[0, 1]$, max evaluation steps $k$
\STATE Initialize $\theta_1\in\Theta$
\FOR {$t=1, \dots, T-1$}
        \STATE Sample $x^t_1, y_w^t, y_\ell^t \sim \D_{\text{\tiny DPO}}$
        \STATE Sample $x^t_2, y^t \sim \D_{\text{\tiny SFT}}$
        \IF{$t \bmod k = 0 ~\|~ t=1$}
        \STATE Evaluate (without generating computational graph) \\
        \quad $\bar{f}_{1,\lambda}(\theta_t)\coloneqq\lambda \left(f_{\text{\tiny DPO}}(\theta_t;x^t_1, y_w^t, y_\ell^t) - f^*_{\text{\tiny DPO}} \right)$ and\\
         \quad $\bar{f}_{2,\lambda}(\theta_t)\coloneqq(1-\lambda)\left( f_{\text{\tiny SFT}}(\theta_t;x^t_2, y^t) - f^*_{\text{\tiny SFT}} \right)$
         \STATE Set  $t_0 = t$ \\
         \quad $\bar{f}^{\tiny \text{stale}, t_0}_{1,\lambda} = \bar{f}_{1,\lambda}(\theta_{t})$ \\
         \quad $\bar{f}^{\tiny \text{stale}, t_0}_{2,\lambda} =\bar{f}_{2,\lambda}(\theta_{t})$   
    \ENDIF

    \STATE Set $i_t=\arg\!\max_{i}\bar{f}^{\tiny \text{stale}, t_0}_{1,\lambda}$
    \IF{$i_t=1$}
        \STATE  Set $\bar{f}^{\tiny \text{stale}, t_0}_{1,\lambda}=\lambda \left(f_{\text{\tiny DPO}}(\theta_t;x^t_1, y_w^t, y_\ell^t) - f^*_{\text{\tiny DPO}} \right)$
        \STATE Update $\theta_{t+1} \!=\! \Pi_\Theta\left(\theta_{t} - \alpha_t g_{\text{\tiny DPO}}(\theta_t;x^t_1, y_w^t, y_\ell^t)  \right)$
    \ELSE
        \STATE  Set $\bar{f}^{\tiny \text{stale}, t_0}_{2,\lambda}=(1-\lambda)\left( f_{\text{\tiny SFT}}(\theta_t;x^t_2, y^t) - f^*_{\text{\tiny SFT}} \right))$
        \STATE Update $\theta_{t+1} \!=\! \Pi_\Theta\left(\theta_{t} - \alpha_t g_{\text{\tiny SFT}}(\theta_t;x^t_2, y^t)\right)$    
    \ENDIF
\ENDFOR

\STATE Output $\hat{\theta}_{\text{MAX}}\coloneqq\theta_T$
\end{algorithmic} 
\end{algorithm}

\subsection{Qualitative evaluation}\label{app:qualitative}

In this section, we compare the responses of the models post-trained with the proposed methods ALRIGHT and MAXRIGHT with those of the baselines. We use models trained with $\lambda=0.5$ for MIX, ALRIGHT, MAXRIGHT, and $(T_{DPO}, T_{SFT})=(3, 3)$ for Sequential. Below we provide some example prompts, the corresponding chosen response of the dataset, and the response of model post-trained using the proposed methods and the baselines, in the experiment setting (3) described in Section \ref{app:llama3-8b-exp-det}.

In the first example below, we see that the response by Sequential is not very helpful, and have a more dramatic tone. On the other hand, Mix provide a more detailed response, which might be too detailed for the simple question asked by the user. ALRIGHT and MAXRIGHT provide similar answers that match the chosen response closely, and provide a concise yet helpful answer to the user's question:
{  \scriptsize
    \label{tab:alpaca-eval-example-1}
    \begin{tabularx}{\linewidth}{l X}
    \toprule
         \textbf{Prompt} & What kind of things go in a tackle box? \\
         \midrule
         \textbf{Chosen Response} & A tackle box is a tool used for fishing.  It is often small enough to be carried in one's pocket,
         and contains
         a variety
         of tools that are useful for fishing.  For example, a tackle box might contain
         a small knife for gutting fish, a small net for cleaning small fish, or a float for testing water.\\
         \midrule
         \textbf{Sequential} & Lures for fish. {\color{red}And stuff}. Hooks. And maybe a worm. (takes a worm from a plastic container)
         {\color{red}Or a gummy bear like this}.\\
         \midrule
         \textbf{Mix} & A tackle box for fishing can contain a wide variety of gear and equipment, including but not limited
         to: {\color{red} 1. Rods and reels: Different rods and reels are used for different techniques and purposes.
         2. Lines: Monofilament, fluorocarbon, and braided lines with a variety of strengths and colors.
         3. Lures: Both artificial and live bait can be carried in a tackle box. Spinners, spoons, crankbaits,
         jigs, top-water lures, and soft plastic creatures are common. 4. Hooks: In addition to terminal tackle
         like leaders and swivels, hooks in various sizes and styles are essential for rigging and baiting.
         5. Bobbers: Used for fishing with live bait, bobbers can be used to suspend a hook or bait at the
         desired depth. 6. Sinkers: Used for live bait fishing, sinkers weight the line to help keep it on the
         bottom. 7. Floaters and drifters: Used for fishing from shore or a boat, they can help keep the line
         taut and keep the bait in the water. 8. Net: Useful for landing fish. 9. Scissors: Used for cutting line
         and cutting off hooks and fishing line. 10. Reel pin: Used for threading }
    \end{tabularx}}
{  \scriptsize
    \begin{tabularx}{\linewidth}{l X}
          {\color{white} \textbf{Chosen Response}}& the line back through the reel
         seat. As you can see, there are many things that can go in a tackle box for fishing, but some of the most
         essential items will 
          depend on your specific fishing needs and technique.\\
         \midrule
         \textbf{ALRIGHT} & Tackle boxes typically contain fishing tackle and gear used for various fishing techniques. It may
         include a few necessary tools and equipment such as {\color{red}lures, hooks, sinkers, bobbers, split shot,
         line, leader, bait, and more}. Depending on your style of fishing and the type of fish you are targeting, 
         you may need to customize your tackle box to include only the relevant equipment. Additionally,
         many anglers also include their favorite snacks and drinks to enjoy while fishing.\\
         \midrule
         \textbf{MAXRIGHT} & A tackle box can be filled with various fishing tools and equipment, depending on the type of fishing
         and the location. Some common items that go in a tackle box include: {\color{red}hooks, sinkers, floats, lures,
         bobbers, swivels, weights, split shot, leaders, and line}. You will also want to include tools to help you
         tie knots and sharpen hooks. There may also be baits and scents specific to each type of fishing, for
         example worms, minnows, or crickets for freshwater fishing, or shrimp, squid, or clams for saltwater fishing.\\
         \bottomrule
    \end{tabularx}}

The second example involves the user asking how they can improve their penmanship. We can see that again, Sequential response is not very helpful with the details, and provide more of a casual discussion. Both Mix and ALRIGHT provide detailed steps to improve penmanship. MAXRIGHT also provide high-level advise how to improve penmanship, which is similar to the chosen response of the dataset:

{  \scriptsize
    \label{tab:alpaca-eval-example-2}
    \begin{tabularx}{\linewidth}{l X}
    \toprule
         \textbf{Prompt} & How do I get better penmanship? \\
         \midrule
         \textbf{Chosen Response} & You can practice.  Try writing letters and short notes, it will help you practice forming letters neatly.  You can also try writing with a different kind of pen or pencil.  Sometimes writing with a fountain pen can help you form your letters more neatly.  Try writing slowly, and carefully, to make sure you get each letter right.  And you might want to practice your signature.  Try writing your name a few times to make sure you get it right every time.\\
         \midrule
         \textbf{Sequential} & The same way I do - practice, and go slowly. I find it very helpful to just keep at it - no matter how much you feel you suck at it. It's just like anything else - if you practice enough you'll get to a point you're satisfied with.
         The penmanship class at the library has some really excellent teachers. I've been told they're very good, and that they have a lot of tricks that help a lot of people. I don't know much about them, though.
         {\color{red}Also, I think if you type your work in a clean, clear font like Verdana or something, it will be really easy to make sure your handwriting is consistent and looks nice}. If you keep your penmanship kind of messy looking, it can be difficult to make it look good, and then you're just going to make yourself crazy over it. You know?\\
         \midrule
         \textbf{Mix} & Here are some steps you can follow to improve your penmanship:
         {\color{red}1) Use proper posture and position. Make sure your chair and writing surface are comfortable and you are sitting up straight with good posture. Hold your pen or pencil at a 45-degree angle and let your hand and arm rest on the table.
         2) Write slowly and carefully. Some people mistakenly think they can improve their penmanship by writing faster. Instead, take the time to make sure each letter is formed properly and clearly.
         3) Practice, practice, practice! No matter how long you have been writing, you can always improve with practice. Try copying a paragraph or a page from your favorite book or take a calligraphy class.
         4) Focus on the basics. Start by working on the most fundamental aspects of penmanship, like making sure your lowercase letters and capital letters are all properly formed and legible.
         5) Experiment with different writing styles. Not all writing styles are equally compatible with your natural handwriting tendencies. Experiment with different penmanship styles to find the one that suits your hand the best.
         6) Use a ruler as a guide. A ruler can help you practice writing straight lines and guide your hand for straight, consistent letters.
         7) Warm up before you write. It may be helpful to practice a few strokes and exercises to warm up your hand and get your brain ready to write.
         8) Take breaks and give your hand time to rest. It's important to take frequent breaks and give your hand time to rest when you're practicing penmanship.
         9) Seek feedback from others. Ask someone to read your writing and provide you with constructive criticism. This can help you identify areas that need improvement.}
         Remember, it takes sustained effort and practice to develop good penmanship. Consistency is key, but also be patient and kind to yourself! With time and persistence, you can achieve your goal of better penmanship. \\
         \midrule
         \textbf{ALRIGHT} & To improve your penmanship, you can practice writing the same letters or words over and over again. Make sure to pay attention to the spacing between the letters and the overall shape of the letters. You can also practice different writing styles such as cursive or print to improve your hand-eye coordination and muscle memory. Finally, keep in mind that improving penmanship through practice takes time, so be patient and persistent in your efforts. Here are some steps you can follow to help you get better at penmanship:
    \end{tabularx}}
    
{  \scriptsize
    \label{tab:alpaca-eval-example-2}
    \begin{tabularx}{\linewidth}{l X}
         {\color{white} \textbf{Chosen Response}}& {\color{red}1. Practice every day: Make time to write each day, even if it's just for a few minutes. The more you practice, the faster you'll see improvements.
         2. Focus on your pen stroke: Try to start each letter in the same spot and end the letter on the same line. This will help you create a consistent look for your writing.
         3. Space your writing properly: The correct spacing can improve the legibility and overall appearance of your handwriting. Pay attention to the spaces between words, lines, and letters when writing.
         4. Work on letter formation: Concentrate on writing each letter clearly and in the proper shape. Start by practicing printing in blocks, then progress to cursive, or vice versa.
         5. Use the right equipment: Make sure you have a smooth writing pen and good quality paper. A firm hand can help you achieve a more consistent look in your writing.
         6. Set specific goals: Identify one or two areas in your writing that you want to improve, such as legibility, speed, or consistency. Focus on these areas and track your progress. 
         7. Share your writing: Have someone else read your writing and point out areas that need improvement. This can help you identify specific problems in your handwriting and give you a new perspective on your writing.
         8. Be patient. Remember that improving your handwriting takes time, so don't get discouraged. Keep practicing, and you'll see improvements in your penmanship.}
         Please let me know if you have any questions or concerns. 
         In summary, to get better penmanship, you can practice writing every day, focus on your pen stroke, space your writing properly, work on letter formation, use the right equipment, set specific goals, and share your writing with others for feedback. Remember that improving penmanship takes time, so be persistent and patient.\\
         \midrule
         \textbf{MAXRIGHT} & It would help to practice writing often. {\color{red}Writing in your own journal or diary every day would be a great way to practice. Or, you could try taking a class in calligraphy, which is basically stylized penmanship. 
         Make sure you use a pen with a fine, smooth point. This will help you control the line thickness and create a neat, uniform look. If you ever see a mistake, cross it out and start again!} Remember, practice makes perfect!\\
         \bottomrule
    \end{tabularx}}

The final example involves the user asking how they can teach kids about cooperation. It is interesting to see that the chosen response of the dataset itself seems to be of low quality, since it has suggestions like `punish' non-cooperation, which might not be desirable advice. The response given by Sequential is again not very helpful, and too short for a question like this, which has a lot of depth. All Mix, ALRIGHT, and MAXRIGHT provide comprehensive and helpful advice for the question asked. However, the response provided by Mix seems to be too detailed to the point that some parts are not necessarily true. For example, the Mix response suggests Chess as a game that teaches cooperation, which is questionable since Chess is a predominantly individual sport:

{  \scriptsize
    \label{tab:alpaca-eval-example-3}
    \begin{tabularx}{\linewidth}{l X}
    \toprule
         \textbf{Prompt} & How can I teach kids about cooperation? \\
         \midrule
         \textbf{Chosen Response} & How could we get children to cooperate together?  A few different techniques are to reward cooperation, {\color{red}punish non-cooperation}, create a hierarchy of status, and incentivize people to cooperate.\\
         \midrule
         \textbf{Sequential} & {\color{red}We can program a video game that involves co-opetition.}
         This will help teach them about collaboration and healthy competition. 
         This way everyone wins!\\
         \midrule
         \textbf{Mix} & Here are some activities and approaches to teach kids about cooperation:
         {\color{red}1) Storytelling: Narrate stories about kids working together and 
         collaborating. 
         For example, imagine a group of kids building a 
         sandcastle on 
         the beach. Have each child take turns participating and contributing to the project. You can pause and ask questions about the group's dynamics and how they handled problems (e.g., what challenges did they face and how did they resolve them?). In doing so, you will highlight the importance of communication and cooperation, and foster the idea that they can accomplish more together than individually.
         2) Role-playing: Role-playing can be a fun and effective way to teach kids about cooperation. For example, you could pretend to be a pirate captain searching for treasure, and divide the kids into two teams. The first team would be the thieves who have stolen the treasure. The other team as the pirates who need to catch up to them. In order to catch the thieves, the kids will have to work together and divide tasks (e.g., one team member may need to scout the area, while others need to guard the path and block the thieves' escape). Once caught, the kids can learn to problem-solve together by developing a plan on how to return the treasure.
         3) Game play: Cooperative games can help kids learn to work together and collaborate in achieving common goals. Simple games like Four in a Row, Chess, or Simon Says can be a good place to start. You can make the playing board large so that it can provide space for more than one player, while you can guide the discussion afterwards about how they cooperated and contributed to the group's achievement.}
         Remember, it's important to make learning fun and enjoyable for kids, so try to approach these activities in a lively and interactive way. Give plenty of chances to kids to take part and participate, and don't be afraid to improvise and adapt to the needs of the group. Cooperation is a vital social skill and early introduction can help children develop a culture of cooperation that helps them succeed in life. \\
         \midrule
         \textbf{ALRIGHT} & Cooperation is a vital life skill for children to learn, as it teaches them how to work together, share responsibilities, and foster a sense of community. Here are some tips and activities to teach kids about cooperation:
         {\color{red}1. Explain the concept of cooperation: Define cooperation for children and explain how it benefits them, their friends, and the whole group. Encourage them to think of examples from real life where they have seen people cooperating with each other.
         2. Share stories about cooperators: Read children's books or share real-life examples of people who worked together to achieve a goal. Point out how they communicated and took turns or assigned specific roles to each other.
         3. Play cooperative games: Find cooperative games that require teamwork, such as building structures with blocks or playing musical chairs. After the games, focus on how everyone had to work together to achieve a common goal.
         4. Practice taking turns: Encourage children to take turns in activities or conversations. Teach them to wait patiently for their turn and to let others finish speaking before speaking.
         5. Role-play cooperation situations: Role-play simple cooperation scenarios like sharing a toy or a piece of food. Practice using kind words and gestures and taking turns.}
         Remember that cooperation is learned through experience, not just verbal instruction. Encourage children to practice cooperation in their everyday lives, and reinforce positive behavior when they cooperate with others. With consistent efforts and modeling, children will develop an understanding of cooperation, as well as the valuable life skill of working together to achieve a common goal.\\
         \midrule
         \textbf{MAXRIGHT} & Cooperation is important because it helps children to develop social skills and build meaningful relationships with others. Here are some ways that you can teach kids about cooperation:
         {\color{red}1. Encourage sharing and taking turns: Teach children to share their toys and games, and let them learn to take turns and wait their turn. You can use songs, rhymes, and games to promote sharing at a young age.
         2. Teach conflict resolution: Teach children conflict resolution skills, such as apologizing, resolving disagreements, and seeking compromise. You can role-play scenarios with your children to help them understand how to resolve conflicts peacefully.
         3. Model cooperative behavior: Show your children how to cooperate by working together on tasks, sharing information, and listening to each other.}
    \end{tabularx}}

    {  \scriptsize
    \begin{tabularx}{\linewidth}{l X}
    {\color{white} \textbf{Chosen Response}}& {\color{red}You can also explain to children how cooperation helps to build stronger relationships and achieve shared goals.
          4. Encourage teamwork in group activities: Group activities, such as sports, arts and crafts, or team-based games, provide opportunities for children to learn the importance of cooperation in achieving a common goal.
         5. Teach children empathy and respect: Empathy and respect are important elements of cooperation. You can help children to develop these skills by teaching them about others' perspectives and needs. This can also help children to learn to value and respect different opinions and viewpoints.}
         By using these strategies, you can help children to learn about the importance of cooperation and develop the skills needed to work together effectively with others.\\
         \bottomrule
    \end{tabularx}}

\end{document}